\newcommand{\matH}{\mathbf{H}}
\newcommand{\matP}{\mathbf{P}}
\newcommand{\matQ}{\mathbf{Q}}
\newcommand{\matV}{\mathbf{V}}
\newcommand{\matW}{\mathbf{W}}
\newcommand{\matY}{\mathbf{Y}}
\newcommand{\calG}{\mathcal{G}}
\newcommand{\calI}{\mathcal{I}}
\newcommand{\calN}{\mathcal{N}}
\newcommand{\calO}{\mathcal{O}}
\newcommand{\calP}{\mathcal{P}}
\newcommand{\bbR}{\mathbb{R}}
\newcommand{\vecm}{\mathbf{m}}
\newcommand{\vecp}{\mathbf{p}}
\newcommand{\vecx}{\mathbf{x}}
\newcommand{\vecy}{\mathbf{y}}
\newcommand{\veceta}{\boldsymbol{\eta}}
\newcommand{\vecmu}{\boldsymbol{\mu}}
\newcommand{\vecTheta}{\boldsymbol{\Theta}}
\newcommand{\veczero}{\mathbf{0}}
\newcommand{\fracpartial}[2]{\frac{\partial #1}{\partial  #2}}
\newcommand{\xt}{\tilde{x}}
\newcommand{\mut}{\tilde{\mu}}
\newcommand{\Vh}{\widehat{\matV}}
\newcommand{\betap}{{\beta+1}}
\newcommand{\betam}{{\beta-1}}
\newcommand{\betapp}{({\beta+1})}
\newcommand{\betamm}{({\beta-1})}
\newtheorem{theorem}{Theorem}
\newtheorem{lemma}[theorem]{Lemma}
\title{Learning the Information Divergence}
\author{Onur~Dikmen,
        Zhirong~Yang,
        and~Erkki~Oja,% 
\thanks{The authors are with Department of Information and Computer Science,
Aalto University, 00076, Finland.
e-mail: onur.dikmen@aalto.fi; zhirong.yang@aalto.fi; erkki.oja@aalto.fi}}
\begin{document}
\maketitle
\begin{abstract}
Information divergence that measures the difference between two
nonnegative matrices or tensors has found its use in a variety of machine learning
problems. Examples are Nonnegative Matrix/Tensor Factorization, Stochastic
Neighbor Embedding, topic models, and Bayesian network optimization.  
The success of such a learning task depends heavily on a suitable
divergence. A large variety of divergences have been suggested and
analyzed, but very few results are available for an objective choice
of the optimal divergence for a given task. 
Here we present a framework that facilitates automatic
selection of the best divergence among a given family, based on
standard maximum likelihood estimation.  We first
propose an approximated Tweedie distribution for the
$\beta$-divergence family. Selecting the best $\beta$ then becomes a
machine learning problem solved by maximum likelihood.
Next, we reformulate $\alpha$-divergence in terms of
$\beta$-divergence, which enables automatic selection of $\alpha$ by
maximum likelihood with reuse of the learning principle for
$\beta$-divergence. Furthermore, we show the connections between
$\gamma$- and $\beta$-divergences as well as R\'enyi- and
$\alpha$-divergences, such that our automatic selection framework is
extended to non-separable divergences. Experiments on both synthetic and 
real-world data demonstrate that our method can quite accurately select
information divergence across different learning problems and various
divergence families.
\end{abstract}
\begin{IEEEkeywords}
information divergence, Tweedie
distribution, maximum likelihood, nonnegative matrix factorization, stochastic neighbor embedding.
\end{IEEEkeywords}

%%%%%%%%%%%%%%%%%%%%%%%%%%%%%%%%%%%%%%%%%%%%%%%%
\section{Introduction}
\label{sec:intro}
Information divergences are an essential element in modern machine
learning. They originated in estimation theory where a divergence
maps the dissimilarity between two probability distributions to
nonnegative values. Presently, information divergences have been
extended for nonnegative tensors and used in many learning problems
where the objective is to minimize the approximation error between the
observed data and the model. Typical applications include Nonnegative
Matrix Factorization (see e.g.
\cite{kompass2006divergence,dhillo2006nips,cichocki2008alphanmf,TNN2011}),
Stochastic Neighbor Embedding \cite{hinton2002sne,maaten2008tsne},
topic models \cite{blei2001lda,sato2012rethinking}, and Bayesian
network optimization \cite{minka2005divergence}.

There exist a large variety of information divergences. In Section
\ref{sec:div}, we summarize the most popularly used parametric
families including $\alpha$-, $\beta$-, $\gamma$- and
R\'enyi-divergences
\cite{alphadiv,amari1985diff,betadiv,gammadiv,cichocki2010abgdiv} and
their combinations (e.g.\@ \cite{cichocki2011abnmf}). The four
parametric families in turn belong to broader ones such as the
Csisz\'ar-Morimoto $f$-divergences \cite{csiszarfdiv,morimotofdiv} and
Bregman divergences \cite{bregmandiv}.  
Data analysis techniques based on information divergences have been
widely and successfully applied
to various data such as text \cite{ICANN2011ROZ},
electroencephalography \cite{cichocki2008alphanmf}, facial images
\cite{ICANN2009ROZ}, and audio spectrograms
\cite{fevotte09nonnegative}.

Compared to the rich set of available information divergences, there is little
research on how to select the best one for a given application. This
is an important issue because the performance of a given
divergence-based estimation or modeling method in a particular task
very much depends on the divergence used. Formulating a learning task in
a family of divergences greatly increases the flexibility to handle different
types of noise in data. 
For example, Euclidean
distance is suitable for data with Gaussian noise; Kullback-Leibler divergence
has shown success for finding topics in text documents
\cite{blei2001lda}; and Itakura-Saito divergence has proven to be suitable
for audio signal processing \cite{fevotte09nonnegative}. A
conventional workaround is to select among a finite number of
candidate divergences using a validation set. This however cannot be
applied to divergences that are non-separable over tensor entries. The
validation approach is also problematic for tasks where all
data are needed for learning, for example, cluster analysis.

In Section \ref{sec:divlearning}, we propose a new method of
statistical learning for selecting the best divergence among the four
popular parametric families in any given data modeling task.
Our starting-point is the Tweedie
distribution~\cite{jorgensen87exponential}, which is known to have 
a relationship with $\beta$-divergence \cite{cichocki09nonnegative,yilmaz12alpha_beta}. The Maximum Tweedie Likelihood
(MTL) is in principle a disciplined and straightforward method 
for choosing the optimal $\beta$ value. However, in order for this to
be feasible in practice, two shortcomings with the MTL method 
have to be overcome: 1) Tweedie distribution is
not defined for all $\beta$; 2) calculation of Tweedie likelihood is
complicated and prone to numerical problems for large $\beta$.
To overcome these drawbacks, we propose here a novel distribution using
an exponential over the $\beta$-divergence with a specific augmentation term. The
new distribution has the following nice properties:
1) it is close to the Tweedie distribution, especially at four important special
cases;
2) it exists for all $\beta\in\bbR$;
3) its likelihood can be calculated by standard statistical software.
We call the new density the Exponential Divergence with Augmentation
(EDA). EDA is a non-normalized density, i.e., its likelihood includes a normalizing constant which is not analytically available. But, since the density is univariate the normalizing constant can be efficiently and accurately estimated by numerical integration.
The method of Maximizing the Exponential Divergence with Augmentation
Likelihood (MEDAL) thus gives a more robust $\beta$ selection
in a wider range than MTL. $\beta$ estimation on EDA can also be carried out using parameter estimation methods, e.g., Score Matching (SM)~\cite{hyvaerinen05estimation}, specifically proposed for non-normalized densities. In the experiments section, we show that SM on EDA also performs as accurately as MEDAL.

Besides $\beta$-divergence, the MEDAL method is extended to select the
best divergence in other parametric families.
We reformulate $\alpha$-divergence in terms of $\beta$-divergence
after a change of parameters so that $\alpha$ can be optimized using
the MEDAL method.
Our method can also be applied to non-separable cases. We show the
equivalence between $\beta$ and $\gamma$-divergences, and between
$\alpha$ and R\'enyi divergences by a connecting scalar, which allows
us to choose the best $\gamma$- or R\'enyi-divergence by reusing the
MEDAL method.

We tested our method with extensive experiments, whose results are
presented in Section \ref{sec:exp}. We have used both synthetic data
with a known distribution and real-world data including music, stock prices, and social networks. The MEDAL method is applied to different
learning problems: Nonnegative Matrix Factorization (NMF)
\cite{lee99learning,cichocki2008alphanmf,kompass2006divergence},
Projective NMF \cite{yuan05pnmf,TNN2010} and Symmetric Stochastic
Neighbor Embedding for visualization
\cite{hinton2002sne,maaten2008tsne}. We also demonstrate that our
method outperforms Score Matching on Exponential Divergence distribution (ED), a previous approach for
$\beta$-divergence selection \cite{lu12selecting}. Conclusions and
discussions on future work are given in Section \ref{sec:conclusions}.

%%%%%%%%%%%%%%%%%%%%%%%%%%%%%%%%%%%%%%%%%%%%%%%%
\section{Information Divergences}
\label{sec:div}
Many learning objectives can be formulated as an approximation of the
form $\vecx\approx\vecmu$, where $\vecx>0$ is the observed data (input)
and $\vecmu$ is the approximation given by the model. The formulation
for $\vecmu$ totally depends on the task to be solved. Consider
Nonnegative Matrix Factorization: then $\vecx>0$ is a data matrix and
$\vecmu$ is a product of two lower-rank nonnegative matrices which typically give
a sparse representation for the columns of $\vecx$. Other concrete
examples are given in Section \ref{sec:exp}.   

The 
approximation error can be measured by various information
divergences. Suppose $\vecmu$ is parameterized by $\vecTheta$.  The
learning problem becomes an optimization procedure that minimizes the
given divergence $D(\vecx||\vecmu(\vecTheta))$ over $\vecTheta$.
Regularization may be applied for $\vecTheta$ for complexity control.
For notational brevity we focus on definitions over vectorial $\vecx$,
$\vecmu$, $\vecTheta$ in this section, while they can be extended to
matrices or higher order tensors in a straightforward manner.

In this work we consider four parametric families of divergences,
which are the widely used $\alpha$-,
$\beta$-, $\gamma$- and R\'enyi-divergences. This collection is rich
because it covers most commonly used divergences. The definition of
the four families and some of their special cases are given below.

\begin{itemize}
\item $\alpha$-divergence \cite{alphadiv,amari1985diff} is defined as
\begin{align}
D_\alpha (\vecx||\vecmu) = \frac{\sum_i x_i^\alpha\mu_i^{1-\alpha} - \alpha
x_i + (\alpha-1)\mu_i}{\alpha(\alpha-1)}.
\end{align}
The family contains the following special cases:
\begin{align*}
D_{\alpha=2} (\vecx||\vecmu) =& 
D_\text{P}(\vecx||\vecmu) = \frac{1}{2} \sum_i \frac{(x_i-\mu_i)^2}{\mu_i}\\
D_{\alpha\rightarrow 1} (\vecx||\vecmu) =& 
D_\text{I}(\vecx||\vecmu) = \sum_i\left( x_i\ln \frac{x_i}{\mu_i} - x_i + \mu_i\right)\\
D_{\alpha=1/2} (\vecx||\vecmu) =& 2D_\text{H}(\vecx||\vecmu) = 2
\sum_i (\sqrt{x_i}-\sqrt{\mu_i})^2\\
D_{\alpha\rightarrow 0} (\vecx||\vecmu) =& 
D_\text{I}(\vecmu||\vecx) = \sum_i\left(\mu_i\ln \frac{\mu_i}{x_i} - \mu_i + x_i\right)\\
D_{\alpha=-1} (\vecx||\vecmu) =& D_\text{IP}(\vecx||\vecmu) = \frac{1}{2} \sum_i \frac{(x_i-\mu_i)^2}{x_i}
\end{align*}
where $D_\text{I}$, $D_\text{P}$, $D_\text{IP}$, and
$D_\text{H}$ denote non-normalized Kullback-Leibler, Pearson
Chi-square, inverse Pearson and Hellinger distances, respectively.

\item $\beta$-divergence \cite{eguchi01robustifing, minami02robust} is
  defined as 
\begin{align}\label{eq:betadiv}
D_\beta (\vecx||\vecmu) = \frac{\sum_i x_i^{\beta+1} + \beta\mu_i^{\beta+1} -
(\beta+1) x_i\mu_i^\beta}{\beta(\beta+1)}.
\end{align}
The family contains the following special cases:
\begin{align}
\label{eq:eusq}
D_{\beta=1} (\vecx||\vecmu) =& 
D_\text{EU}(\vecx||\vecmu) = \frac{1}{2} \sum_i (x_i-\mu_i)^2\\
\label{eq:idiv}
D_{\beta\rightarrow 0} (\vecx||\vecmu) =& 
D_\text{I}(\vecx||\vecmu) = \sum_i\left( x_i\ln \frac{x_i}{\mu_i} - x_i + \mu_i\right)\\
\label{eq:isdiv}
D_{\beta\rightarrow -1} (\vecx||\vecmu) =& 
D_\text{IS}(\vecx||\vecmu) = \sum_i\left(\frac{x_i}{\mu_i} - \ln\frac{x_i}{\mu_i} - 1\right)\\
\label{eq:indiv}
D_{\beta=-2} (\vecx||\vecmu) =& 
\sum_i\left(\frac{x_i}{2\mu_i^2}-\frac{1}{\mu_i}+\frac{1}{2x_i}\right),
\end{align}
where $D_\mathrm{EU}$ and $D_\mathrm{IS}$ denote the Euclidean
distance and Itakura-Saito divergence, respectively.

\item $\gamma$-divergence \cite{gammadiv} is defined as 
\begin{align}
\label{eq:divgamma}
\nonumber
D_\gamma(\vecx||\vecmu)=&\frac{1}{\gamma(\gamma+1)}
\left[
\ln\left(\sum_ix_i^{\gamma+1}\right)
+\gamma\ln\left(\sum_i\mu_i^{\gamma+1}\right)
\right.\\
&\left.-(\gamma+1)\ln\left(\sum_ix_i\mu_i^\gamma\right)\right].
\end{align}
The normalized Kullback-Leibler (KL) divergence is a special case of $\gamma$-divergence:
\begin{align}
\label{eq:divkl}
D_{\gamma\rightarrow 0}(\vecx||\vecmu)=D_\mathrm{KL}(\tilde{\vecx}||\tilde{\vecmu})
=\sum_i\xt_i\ln\frac{\xt_i}{\mut_i},
\end{align}
where $\xt_i=x_i/\sum_jx_j$ and $\mut_i=\mu_i/\sum_j\mu_j$.

\item R\'enyi divergence \cite{renyidivergence} is defined as 
\begin{align}
\label{eq:divrenyi}
D_\rho(\vecx||\vecmu)=\frac{1}{\rho-1}
\ln\left(\xt_i^\rho\mut_i^{1-\rho}\right)
\end{align}
for $\rho>0$. The R\'enyi divergence also includes the normalized
Kullback-Leibler divergence as its special case when
$\rho\rightarrow1$.

\end{itemize}

\section{Divergence Selection by Statistical Learning}
\label{sec:divlearning}

The above rich collection of information divergences basically allows
great
flexibility to the approximation framework. However, practitioners
must face a choice problem: \emph{how to select the best divergence in
a family?}  In most existing applications the selection is done
empirically by the human. A conventional automatic selection method is
cross-validation \cite{mollah2007beta,choi2010alphaintegration}, where
the training only uses part of the entries of $\vecx$ and the
remaining ones are used for validation. This method has a number of
drawbacks: 1) it is only applicable to the divergences where the
entries are separable (e.g.\@ $\alpha$- or $\beta$-divergence).
Leaving out some entries for $\gamma$- and R\'enyi divergences is
infeasible due to the logarithm or normalization; 2) separation of
entries is not applicable in some applications where all entries are
needed in the learning, for example, cluster analysis. 

Our proposal here is to use the familiar and proven technique of 
maximum likelihood estimation for \emph{automatic
divergence selection}, using a suitably chosen and very flexible probability density
model for the data. In the
following we discuss this statistical learning approach for automatic
divergence selection in the family of $\beta$ -divergences, followed
by its extensions to the other divergence families. 

\subsection{Selecting $\beta$-divergence}
\label{sec:selectbeta}

\subsubsection{Maximum Tweedie Likelihood (MTL)}
\label{sec:MTL}
We start from the probability density function (pdf) of an exponential
dispersion model (EDM)~\cite{jorgensen87exponential}:
\begin{align}
\label{eq:edm_pdf}
p_\text{EDM}(x;\theta,\phi,p) = f(x,\phi,p) \exp\left[\frac{1}{\phi}(x\theta-\kappa(\theta))\right]
\end{align}
where $\phi>0$ is the dispersion parameter, $\theta$ is the canonical
parameter, and $\kappa(\theta)$ is the cumulant function (when $\phi=1$
its derivatives w.r.t. $\theta$ give the cumulants). Such a
distribution has mean $\mu=\kappa'(\theta)$ and variance
$V(\mu,p)=\phi\kappa''(\theta)$. This density is defined for $x \geq
0$, thus $\mu > 0$. 

A Tweedie distribution is an EDM whose variance has a special form,
$V(\mu)=\mu^p$ with $p\in \bbR\backslash (0,1)$. The canonical
parameter and the cumulant function that satisfy this property are \cite{jorgensen87exponential}
\begin{align}
\label{eq:tweedie_param}
\theta = \left\{ \begin{array}{rl}
\frac{\mu^{1-p} - 1}{1-p}, &\mbox{ if $p\neq 1$} \\
\ln\mu, &\mbox{ if $p=1$}
\end{array}\right.,
\;\;\;\;
\kappa(\theta) = \left\{ \begin{array}{rl}
 \frac{\mu^{2-p} - 1}{2-p}, &\mbox{ if $p\neq 2$} \\
 \ln\mu, &\mbox{ if $p=2$}
\end{array} \right.\;.
\end{align}
Note that $\ln\mu$ is the limit of $\frac{\mu^t-1}{t}$ as $t \to 0$. 
Finite analytical forms of $f(x,\phi,p)$ in Tweedie distribution are
generally unavailable.  The function can be expanded with infinite
series \cite{dunn05series} or approximated by saddle point
estimation~\cite{dunn01tweedie}.

It is known that the Tweedie distribution has a connection to
$\beta$-divergence (see, e.g.,\@ \cite{cichocki09nonnegative,
  yilmaz12alpha_beta}): 
maximizing the likelihood of Tweedie distribution for certain $p$
values is equivalent to minimizing the corresponding divergence with
$\beta=1-p$.
Especially, the gradients of the log-likelihood of Gamma, Poisson and
Gaussian distributions over $\mu_i$ are equal to the ones of
$\beta$-divergence with $\beta=-1,0,1$, respectively. This motivates a
$\beta$-divergence selection method by Maximum Tweedie Likelihood
(MTL).

However, MTL has the following two shortcomings.
First, Tweedie distribution is not defined for $p\in(0,1)$. That is,
if the best $\beta=1-p$ happens to be in the range $(0,1)$, it cannot
be found by MTL; in addition, there is little research on the Tweedie
distribution with $\beta>1 \; (p < 0)$.
Second, $f(x,\phi,p)$ in Tweedie distribution is not the probability
normalizing constant (note that it depends on $x$), and its evaluation
requires ad hoc techniques. The existing software using the infinite
series expansion approach \cite{dunn05series} (see Appendix
\ref{sec:tweedieexpansion}) is prone to numerical computation problems
especially for $-0.1<\beta<0$. There is no existing implementation
that can calculate Tweedie likelihood for $\beta>1$.
%%%%%%%%%%%%%%%%%%%%%%%%%%%%%%%%%%%%%%%%%%%%%%%%
\subsubsection{Maximum Exponential Divergence with Augmentation
  Likelihood (MEDAL)}
\label{sec:medal}
Our answer to the above shortcomings in MTL is to design an
alternative distribution with the following properties:
1) it is close to the Tweedie distribution, especially for the four
crucial points when $\beta\in\{-2,-1,0,1\}$;
2) it should be defined for all $\beta\in\bbR$;
3) its pdf can be evaluated more robustly by standard statistical software.

From \eqref{eq:edm_pdf} and \eqref{eq:tweedie_param} the pdf of the Tweedie distribution is written as
\begin{align}
\label{eq:tweedie_pdf}
p_\text{Tw}(x;\mu,\phi,\beta) &= f(x,\phi,\beta)
\exp\left[\frac{1}{\phi}\left(\frac{x\mu^\beta}{\beta}-\frac{\mu^\betap}{\betap}\right)\right] 
\end{align}
w.r.t. $\beta$ instead of $p$, using the relation $\beta=1-p$. This
holds when 
$\beta \neq 0$ and $\beta \neq -1$.  The extra terms $1/(1-p)$ and
$1/(2-p)$ in (\ref{eq:tweedie_param}) have been absorbed in $f(x,\phi,\beta)$. The
cases $\beta= 0$ or $\beta = -1$ have to be analyzed separately. 
 
To make an explicit connection with $\beta$-divergence defined in
(\ref{eq:betadiv}), we suggest a new distribution given in the following
form: 
\begin{align}
&p_\text{approx}(x;\mu,\phi,\beta) = g(x,\phi,\beta) \exp\left\{- \frac{1}{\phi} D_\beta(x||\mu)\right\}\nonumber\\
\label{eq:pdf_approx}
&= g(x,\phi,\beta) \exp\left[\frac{1}{\phi}\left( -
    \frac{x^\betap}{\beta\betapp} + \frac{x\mu^\beta}{\beta} -
    \frac{\mu^\betap}{\betap} \right)\right]. 
\end{align}
Now the $\beta$-divergence for scalar $x$ appears in the exponent, and
$g(x,\phi,\beta)$ will be used to approximate this with the Tweedie
distribution. Ideally, the choice
\begin{align*}
g(x,\phi,\beta) = f(x,\phi,\beta)/\exp\left[\frac{1}{\phi}\left( - \frac{x^\betap}{\beta\betapp}\right)\right]
\end{align*}
would result in full equivalence to Tweedie distribution, as seen from
(\ref{eq:tweedie_pdf}). However, because
$f(x,\phi,\beta)$ is unknown in the general case, such $g$ is also
unavailable.

We can, however, try to approximate $g$ using the fact that
$p_\text{approx}$ must be a proper density whose integral is equal to
one. From (\ref{eq:pdf_approx}) it then follows 
\begin{align}
\exp\left[\frac{1}{\phi}\frac{\mu^\betap}{\betap}\right]\nonumber\\
\label{eq:Z_approx}
& = \int dx\, g(x,\phi,\beta) \exp\left[\frac{1}{\phi}\left( - \frac{x^\betap}{\beta\betapp} + \frac{x\mu^\beta}{\beta}\right)\right]
\end{align}
This integral is, of course, impossible to evaluate because we do not
even know the function inside. However, the integral can be
approximated nicely by Laplace's method. Laplace's approximation is
\begin{align*}
\int_a^b dx\, f(x) e^{M h(x)} \approx \sqrt{\frac{2\pi}{M|h''(x_0)|}} f(x_0) e^{M h(x_0)}
\end{align*}
where $x_0 = \arg\max_x h(x)$ and $M$ is a large constant.

In order to approximate \eqref{eq:Z_approx} by Laplace's method,
$1/\phi$ takes the role of $M$ and thus the approximation is valid for
small $\phi$. We
need the maximizer of the exponentiated term $h(x) = 
- \frac{x^\betap}{\beta\betapp} +
  \frac{x\mu^\beta}{\beta}$. This term has a zero first
derivative and negative second derivative, i.e., it is maximized,  at
$x=\mu$. Thus, Laplace's method gives us 
\begin{align*}
&\exp\left[\frac{1}{\phi}\frac{\mu^\betap}{\betap}\right] \\
&\approx \sqrt{\frac{2\pi\phi}{|-\mu^\betam|}} \;g(\mu,\phi,\beta) \exp\left[\frac{1}{\phi}\left( - \frac{\mu^\betap}{\beta\betapp} + \frac{\mu^\betap}{\beta}\right)\right]\\
&= \sqrt{\frac{2\pi\phi}{\mu^\betam}} \;g(\mu,\phi,\beta) \exp\left[\frac{1}{\phi}\frac{\mu^\betap}{\betap}\right]\,.
\end{align*}
The approximation gives $g(\mu,\phi,\beta) = \frac{1}{\sqrt{2\pi\phi}}
\mu^{\betamm/2}$ which suggests the function 
\begin{align*}
g(x,\phi,\beta) = \frac{1}{\sqrt{2\pi\phi}} x^{\betamm/2}\, =
\frac{1}{\sqrt{2\pi\phi}}\exp\left[\frac{\betamm}{2}\ln x \right]. 
\end{align*}

Putting this result into \eqref{eq:pdf_approx} as such does not
guarantee a proper pdf however, because it is an approximation, only
valid
at the limit $\phi \to 0$.  To
make it proper, we have to add a normalizing constant into the density in
\eqref{eq:pdf_approx}. 

The pdf of the final distribution, for a scalar argument $x$,  thus becomes
\begin{align}
\label{eq:edascalar_pdf}
p_\text{approx}(x;\mu,\beta,\phi) =
\frac{1}{Z(\mu,\beta,\phi)}\exp\left\{R(x,\beta) - \frac{1}{\phi} D_\beta(x||\mu)\right\}
\end{align}
where $Z(\mu,\beta,\phi)$ is the normalizing constant counting for
the terms which are independent of $x$, and $R(x,\beta)$ is an
augmentation term given as 
\begin{align}
\label{eq:scalarR}
R(x,\beta)=\frac{\beta-1}{2}\ln x\,.
\end{align}

This pdf is a proper density for all $\beta\in\bbR$,
which is guaranteed by the following theorem.
\begin{theorem}
\label{theo:edaexist}
Let $f(x)=\exp\left\{\frac{\beta-1}{2}\ln x-\frac{1}{\phi}D_\beta(x||\mu)\right\}$. The improper integral 
$\int_0^\infty f(x)dx$ converges.
\end{theorem}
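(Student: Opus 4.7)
The plan is to write the integrand as $f(x)=x^{(\beta-1)/2}\exp\{-D_\beta(x\|\mu)/\phi\}$, a non-negative product of a polynomial prefactor and an exponential whose argument is non-positive because $D_\beta\ge 0$. I split the integral as $\int_0^\infty f=\int_0^1 f+\int_1^\infty f$, handle the tail uniformly in $\beta$ by showing $D_\beta(x\|\mu)$ grows at least linearly at infinity, and handle the origin by a short case split on $\beta$.

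For $\int_1^\infty f\,dx$, I would read off the dominant term of $D_\beta(x\|\mu)$ as $x\to\infty$ from \eqref{eq:betadiv}. For $\beta>0$ the super-linear term $x^{\beta+1}/[\beta(\beta+1)]$ dominates with positive sign; for $\beta=0$ the closed form \eqref{eq:idiv} gives $D_0\sim x\ln x$; for $-1<\beta<0$ and for $\beta<-1$ the dominant term is the linear $-(\beta+1)x\mu^\beta/[\beta(\beta+1)]=-x\mu^\beta/\beta$, whose coefficient $-\mu^\beta/\beta$ is positive in both subranges since $\beta<0$; and for $\beta=-1$ the closed form \eqref{eq:isdiv} gives $D_{-1}\sim x/\mu$. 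In every case $D_\beta(x\|\mu)\to\infty$ at least linearly, so $\exp\{-D_\beta/\phi\}$ decays at least exponentially and dominates the polynomial prefactor $x^{(\beta-1)/2}$, giving convergence on $[1,\infty)$.

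For $\int_0^1 f\,dx$ I would split into three cases. (i) If $\beta>-1$, then $(\beta-1)/2>-1$, so $x^{(\beta-1)/2}$ is integrable on $(0,1]$ as a standard $p$-integral, and the exponential factor is bounded by $1$. (ii) If $\beta=-1$, then the closed form \eqref{eq:isdiv} yields $f(x)=x^{-1}\exp\{-[x/\mu-\ln(x/\mu)-1]/\phi\}$, which near $0$ behaves like $C(\mu,\phi)\,x^{1/\phi-1}$ and is integrable on $(0,1]$ because $\phi>0$. (iii) If $\beta<-1$, then $\beta+1<0$ and $\beta(\beta+1)>0$, so the leading term $x^{\beta+1}/[\beta(\beta+1)]$ of $D_\beta$ blows up to $+\infty$ as $x\to 0^+$; the resulting super-polynomial decay of $\exp\{-D_\beta/\phi\}$ absorbs the polynomial blow-up of $x^{(\beta-1)/2}$, and in fact $f$ is bounded near $0$.

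The main obstacle is bookkeeping rather than analysis: the case split has to track the signs of $\beta$, $\beta+1$, and $\beta(\beta+1)$ carefully in order to confirm that in every regime the formally leading term of $D_\beta$ really drives it to $+\infty$ at the relevant endpoint and is not cancelled by a subleading contribution with an inconvenient sign. The nonnegativity of $D_\beta$ is reassuring but does not by itself pin down the rate or the identity of the dominant term. The boundary values $\beta\in\{0,-1\}$ are cleanest to handle directly from the closed forms \eqref{eq:idiv} and \eqref{eq:isdiv} rather than by taking limits inside the general definition \eqref{eq:betadiv}.
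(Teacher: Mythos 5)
Your proposal is correct, and it takes a genuinely more explicit route than the paper. The paper's own proof is a two-line comparison test: it sets $q=\left|\tfrac{\beta-1}{2}\right|+1+\epsilon$, claims $f(x)\leq x^{-q}$ for $x\geq1$ so that $\int_1^\infty f$ converges since $q>1$, and then disposes of $[0,1]$ by asserting that $f$ is continuous and bounded there. You instead extract the leading behaviour of $D_\beta(x||\mu)$ at both endpoints and split on the sign of $\beta$ and $\beta+1$. Your longer version is in fact the more reliable one. The paper's displayed chain $\left(\tfrac{\beta-1}{2}+q\right)\phi\ln x\leq0\leq D_\beta(x||\mu)$ cannot hold as written for $x>1$, because $\tfrac{\beta-1}{2}+q\geq1+\epsilon>0$ makes the left side nonnegative there; moreover the pointwise bound $f(x)\leq x^{-q}$ itself fails near $x=\mu$ whenever $\mu>1$ (there $f(\mu)=\mu^{(\beta-1)/2}>\mu^{-q}$). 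What actually rescues the tail is precisely the at-least-linear growth of $D_\beta$ that you verify case by case --- nonnegativity of $D_\beta$ alone is not enough, as you correctly observe. Similarly, the paper's claim that $f$ is bounded on $[0,1]$ is false for $-1<\beta<1$ (and for $\beta=-1$ with $\phi>1$): the prefactor $x^{(\beta-1)/2}$ blows up while the exponential factor tends to a positive constant; $f$ is merely integrable there, which is exactly what your cases (i) and (ii) establish via the $p$-integral criterion and the explicit $x^{1/\phi-1}$ computation. So your case analysis is not mere bookkeeping: it supplies the endpoint estimates that the paper's shorter argument implicitly relies on but does not actually prove.
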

\begin{proof}
Let
$q=\left|\frac{\beta-1}{2}\right|+1+\epsilon$ with
any $\epsilon\in(0,\infty)$, and $g(x)=x^{-q}$. By these definitions, we have
$q>\left|\frac{\beta-1}{2}\right|$, and then for $x\geq1$,
$\left(\frac{\beta-1}{2}+q\right)\phi\ln x\leq0\leq D_\beta(x||\mu)$,
i.e.\@ $0\leq f(x)\leq g(x)$. By Cauchy convergence test, we know that
$\int_1^\infty g(x)dx$ is convergent because $q>1$, and so is $\int_1^\infty
f(x)dx$.
Obviously $f(x)$ is continuous and bounded for $x\in[0,1]$. Therefore,
for $x\geq0$, $\int_0^\infty f(x)dx=\int_0^1 f(x)dx+\int_1^\infty
f(x)dx$ also converges.
\end{proof}

Finally, for vectorial $\vecx$, the pdf is a product of the marginal densities:

\begin{align}
\label{eq:eda_pdf}
p_\text{EDA}(\vecx;\vecmu,\beta,\phi) =
\frac{1}{Z(\vecmu,\beta,\phi)}\exp\left\{R(\vecx,\beta) - \frac{1}{\phi} D_\beta(\vecx||\vecmu)\right\}
\end{align}

where $D_\beta(\vecx||\vecmu)$ is defined in (\ref{eq:betadiv}) and 
\begin{align}
\label{eq:R}
R(\vecx,\beta)=\frac{\beta-1}{2}  \sum_i \ln x_i \,.
\end{align}

We call (\ref{eq:eda_pdf}) the Exponential Divergence with
Augmentation (EDA) distribution, because it applies an exponential over
an information divergence plus an augmentation term. 

The log-likelihood of the EDA density can be written as  
\begin{multline}
\label{eq:logL}
\ln p(\vecx;\vecmu,\beta,\phi) =  \sum_i \ln p(x_i;\mu_i,\beta,\phi)\\
= \sum_i \left[\frac{\beta-1}{2}\ln x_i-\frac{1}{\phi} D_\beta(x_i||\mu_i)-\ln Z(\mu_i,\beta,\phi)\right]
\end{multline}
due to the fact that $D_\beta(\vecx||\vecmu)$ in Eq.~(\ref{eq:betadiv}) and the augmentation term in \eqref{eq:R} are separable over
$x_i$, (i.e.  $x_i$ are independent given $\mu_i$).
The best $\beta$ is now selected by 
\begin{align}
\label{eq:MEDAL}
\beta^*=\arg\max_\beta\left[\max_\phi\ln p(\vecx;\vecmu,\beta,\phi)\right]\,,
\end{align}
where $\vecmu=\arg\min_{\veceta}~D_\beta(\vecx||\veceta)$. We call the new divergence selection method \emph{Maximum EDA Likelihood} (MEDAL). 

Let us look at the four special cases of Tweedie distribution: Gaussian ($\calN$), Poisson
($\calP\calO$), Gamma ($\calG$) and Inverse Gaussian
($\calI\calN$). They correspond to $\beta = 1, 0, -1, -2$.  
For simplicity of notation, we may drop the
subscript $i$ and write $x$ and $\mu$ for one entry in $\vecx$ and
$\vecmu$. Then, the log-likelihoods of the above four special cases are
\begin{align*}
\ln p_\calN(x;\mu,\phi) =& -\frac{1}{2}\ln(2\pi\phi)-\frac{1}{2\phi}(x-\mu)^2,\\
\ln p_{\calP\calO}(x;\mu) =& x\ln\mu -\mu - \ln\Gamma(x+1),\\
\approx & x\ln\mu -\mu -\ln(2\pi x)/2 - x \ln x + x,\\
\ln p_\calG(x;1/\phi,\phi\mu) =& (1/\phi-1)\ln x-\frac{x}{\phi\mu}\\
& - (1/\phi)\ln(\phi\mu) - \ln\Gamma(1/\phi),\\
\ln p_{\calI\calN}(x;\mu,1/\phi) =& -\frac{1}{2}\ln(2\pi\phi x^3) - \frac{1}{\phi} \left(\frac{1}{2}\frac{x}{\mu^2} - \frac{1}{\mu} + \frac{1}{2x}\right),
\end{align*}
where in the Poisson case we employ Stirling's
approximation\footnote{The case $\beta=0$ and $\phi\neq 1$ does not
correspond to Poisson distribution, but the transformation
$p_\text{EDM}(x;\mu,\phi,1)=p_{\calP\calO}(x/\phi;\mu/\phi)/\phi$ can
be used to evaluate the pdf.}.
To see the similarity of these four special cases with the general
expression for the EDA log-likelihood in Eq.~(\ref{eq:logL}), let us
look at one term in the sum there. It is a fairly straightforward
exercise to plug in the $\beta$-divergences from
Eqs.~(\ref{eq:eusq},\ref{eq:idiv},\ref{eq:isdiv},\ref{eq:indiv}) and the augmentation
term from Eq.~(\ref{eq:R}) and see that the log-likelihoods
coincide. The normalizing term $\ln Z(\mu,\beta,\phi)]$ for these
special cases can be
determined from the corresponding density.  

In general, the normalizing constant $Z(\vecmu,\beta,\phi)$ is intractable except
for a few special cases. Numerical evaluation of
$Z(\vecmu,\beta,\phi)$ can be implemented by standard statistical
software. Here we employ the approximation with Gauss-Laguerre
quadratures (details in Appendix \ref{sec:gausslaguerre}). 

Finally, let us note that in addition to the maximum likelihood estimator, Score Matching (SM)
\cite{hyvaerinen05estimation,hyvaerinen07extensions} can be applied to
estimation of $\beta$ as a density parameter (see Section
\ref{sec:expsyn}).
In a previous effort, Lu et al. \cite{lu12selecting} proposed a similar
exponential divergence (ED) distribution
\begin{align}
\label{eq:ED}
p_\text{ED}(\vecx;\vecmu,\beta)\propto
\exp\left[-D_\beta(\vecx||\vecmu)\right], 
\end{align}
but without the augmentation.
It is easy to show that ED also exists for all $\beta$ by changing
$q=1+\epsilon$ in the proof of Theorem \ref{theo:edaexist}.
We will empirically illustrate the discrepancy between ED and EDA
in Section \ref{sec:expsyn}, showing that the selection based on
ED is however inaccurate, especially for $\beta\leq0$.
%%%%%%%%%%%%%%%%%%%%%%%%%%%%%%%%%%%%%%%%%%%%%%%%
\subsection{Selecting $\alpha$-divergence}
\label{sec:selectalpha}
We extend the MEDAL method to $\alpha$-divergence selection. This is
done by relating $\alpha$-divergence to $\beta$-divergence with a
nonlinear transformation between $\alpha$ and $\beta$. Let
$y_i=x_i^\alpha/\alpha^{2\alpha}$, $m_i=\mu_i^\alpha/\alpha^{2\alpha}$
and $\beta=1/\alpha-1$ for $\alpha\neq0$. We have
\begin{align*}
D_\beta(y_i||m_i)
=& \frac{1}{\beta(\beta+1)} \left(y_i^{\beta+1} + \beta m_i^{\beta+1} - (\beta+1) y_i m_i^\beta\right)\\
=& \frac{-\alpha^2}{\alpha-1} \left(\frac{x_i}{\alpha^2} + \frac{1-\alpha}{\alpha}\frac{\mu_i}{\alpha^2} - \frac{1}{\alpha}\frac{x_i^\alpha}{\alpha^{2\alpha}}\frac{\mu_i^{1-\alpha}}{\alpha^{2(1-\alpha)}} \right)\\
=& D_\alpha(x_i||\mu_i)
\end{align*}

This relationship allows us to evaluate the likelihood of $\mu$ and
$\alpha$ using $y_i$ and $\beta$:
\begin{align*}
p(x_i;\mu_i,\alpha,\phi) 
&= p(y_i;m_i,\beta,\phi) \left| \frac{dy_i}{dx_i} \right|\\
&= p(y_i;m_i,\beta,\phi) \frac{x_i^{\alpha-1}}{\alpha^{2(\alpha-1/2)}}\\
&= p(y_i;m_i,\beta,\phi) y_i^{-\beta} |\beta+1|
\end{align*}
In vectorial form, the best $\alpha$ for $D_\alpha(\vecx||\vecmu)$ is
then given by $\alpha^*=1/(\beta^*+1)$ where
\begin{align}
\nonumber
\beta^*=&\arg\max_\beta\Big\{\max_\phi\big[\ln p(\vecy;\vecm,\beta)\\
&-\beta\ln y_i+\ln|\beta+1|\big]\Big\},
\end{align}
where $\vecm=\arg\min_{\veceta} D_\beta(\vecy||\veceta)$. This
transformation method can handle all $\alpha$ except
$\alpha\rightarrow0$ since it corresponds to $\beta\rightarrow\infty$.

%%%%%%%%%%%%%%%%%%%%%%%%%%%%%%%%%%%%%%%%%%%%%%%%
\subsection{Selecting $\gamma$- and R\'enyi divergences}
\label{sec:nonseparable}
Above we presented the selection methods for two families where the
divergence is separable over the tensor entries. Next we consider
selection among $\gamma$- and R\'enyi divergence families where their
members are not separable.
Our strategy is to reduce $\gamma$-divergence to $\beta$-divergence
with a \emph{connecting scalar}. This is formally given by the
following result.
\begin{theorem}
\label{theo:connectgamma2beta}
For $\vecx\geq\veczero$ and $\tau\in\bbR$,
\begin{align}
\arg\min_{\vecmu\geq\veczero} D_{\gamma\rightarrow \tau}(\vecx||\vecmu) =
\arg\min_{\vecmu\geq\veczero} \left[\min_{c>0}D_{\beta\rightarrow
\tau}(\vecx||c\vecmu)\right]
\end{align}
\end{theorem}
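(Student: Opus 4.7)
The plan is to eliminate the scalar $c$ on the right-hand side analytically and then show that the resulting function of $\vecmu$ is a strictly monotone transformation of $D_\gamma(\vecx||\vecmu)$ with the same sign of prefactor. Before starting, I would note that both sides are already invariant under $\vecmu\mapsto k\vecmu$ for any $k>0$: in \eqref{eq:divgamma} the two $\ln c$ contributions from the second and third summands cancel, and on the right-hand side this follows from reparameterizing $c\mapsto c/k$ in the inner minimization. Hence the comparison only needs to hold up to a positive scalar factor in $\vecmu$.

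First I would solve $\min_{c>0}D_\beta(\vecx||c\vecmu)$ for $\beta=\tau\notin\{0,-1\}$. Differentiating \eqref{eq:betadiv} with respect to $c$, the common factor $\beta(\beta+1)$ cancels and the derivative becomes $c^\beta\sum_i\mu_i^{\beta+1}-c^{\beta-1}\sum_i x_i\mu_i^\beta$. Setting this to zero gives the unique critical point $c^\star(\vecmu)=\frac{\sum_i x_i\mu_i^\beta}{\sum_i\mu_i^{\beta+1}}$; a direct second-derivative check at $c^\star$ returns the positive value $B^{\beta-1}/A^{\beta-2}$ with $A=\sum_i\mu_i^{\beta+1}$ and $B=\sum_i x_i\mu_i^\beta$, confirming a local minimum which is global since $D_\beta\geq 0$ everywhere. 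Plugging $c^\star$ back in and simplifying yields
$$\min_{c>0}D_\beta(\vecx||c\vecmu)=\frac{1}{\beta(\beta+1)}\left[\sum_i x_i^{\beta+1}-\frac{\left(\sum_i x_i\mu_i^\beta\right)^{\beta+1}}{\left(\sum_i\mu_i^{\beta+1}\right)^\beta}\right].$$

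Next I would pull the $\vecx$-only term out of the logarithm in \eqref{eq:divgamma}, rewriting $D_\gamma(\vecx||\vecmu)$ for $\gamma=\tau$ as
$$D_\gamma(\vecx||\vecmu)=\frac{1}{\gamma(\gamma+1)}\left[\ln\sum_i x_i^{\gamma+1}-\ln\frac{\left(\sum_i x_i\mu_i^\gamma\right)^{\gamma+1}}{\left(\sum_i\mu_i^{\gamma+1}\right)^\gamma}\right].$$
Setting $u(\vecmu)=\frac{(\sum_i x_i\mu_i^\tau)^{\tau+1}}{(\sum_i\mu_i^{\tau+1})^\tau}$, the reduced $\min_c D_\beta$ depends on $\vecmu$ only through $-u(\vecmu)/(\tau(\tau+1))$ while $D_\gamma$ depends on $\vecmu$ only through $-\ln u(\vecmu)/(\tau(\tau+1))$. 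Because $t\mapsto\ln t$ is strictly increasing on $(0,\infty)$ and the shared prefactor $1/(\tau(\tau+1))$ assigns the same orientation to both problems, the two objectives produce identical $\arg\min$ sets in $\vecmu$.

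The main obstacle is the degenerate cases $\tau\in\{0,-1\}$ at which $\tau(\tau+1)=0$ makes the above formulas defined only through the limits indicated by the ``$\rightarrow\tau$'' notation in the theorem. I would handle these by continuity: using $\lim_{t\to 0}(a^t-1)/t=\ln a$, both $\min_{c>0}D_\beta(\vecx||c\vecmu)$ and $D_\gamma(\vecx||\vecmu)$ extend to well-defined divergences as $\beta,\gamma\to\tau$, and since $c^\star(\vecmu)$ and $u(\vecmu)$ are continuous in the limiting parameter, the equality of $\arg\min_{\vecmu}$ passes to the limit.
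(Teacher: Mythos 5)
Your treatment of the generic case $\tau\notin\{0,-1\}$ is correct and is essentially the paper's own argument: the same optimal connecting scalar $c^\star=\frac{\sum_i x_i\mu_i^\beta}{\sum_i\mu_i^{\beta+1}}$, the same reduced objective $\frac{1}{\beta(\beta+1)}\bigl[\sum_i x_i^{\beta+1}-(\sum_i x_i\mu_i^\beta)^{\beta+1}/(\sum_i\mu_i^{\beta+1})^\beta\bigr]$, and the same observation (the paper's Lemma \ref{lem:log}, your ``shared prefactor plus monotone $\ln$'' step) that dropping the $\vecx$-only constants leaves two objectives related by a strictly monotone transformation with consistent orientation. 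Your second-derivative check and the scale-invariance remark are extra care the paper omits; conversely, your justification that the critical point is a \emph{global} minimum ``since $D_\beta\geq0$'' is a non sequitur --- the correct one-line reason is that the derivative $c^{\beta-1}(c\sum_i\mu_i^{\beta+1}-\sum_i x_i\mu_i^\beta)$ changes sign from negative to positive exactly once on $(0,\infty)$.

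The genuine gap is the last paragraph. The cases $\tau\in\{0,-1\}$ are not a removable formality --- $\tau=0$ is the normalized KL case that the paper actually uses --- and ``the equality of $\arg\min_{\vecmu}$ passes to the limit'' is not a valid inference. Pointwise convergence of two families of objectives that share argmins for every $\beta=\gamma\neq 0$ does not imply their limits share argmins: argmin does not commute with pointwise limits, and making a limiting argument rigorous here would require uniform (or epigraphical) convergence on compacta together with uniqueness/coercivity assumptions you have not established (and uniqueness of $\vecmu$ generally fails in the applications considered). The paper avoids this entirely by direct computation: for $\beta\rightarrow0$, zeroing the derivative gives $c^*=\sum_i x_i/\sum_i\mu_i$ and substitution yields the exact identity $D_{\beta\rightarrow0}(\vecx||c^*\vecmu)=\bigl(\sum_i x_i\bigr)D_{\gamma\rightarrow0}(\vecx||\vecmu)$, a fixed positive multiple of the target; similarly for $\beta\rightarrow-1$ one gets $c^*=\frac{1}{M}\sum_i\frac{x_i}{\mu_i}$ and $D_{\beta\rightarrow-1}(\vecx||c^*\vecmu)=M\,D_{\gamma\rightarrow-1}(\vecx||\vecmu)$ with $M$ the length of $\vecx$. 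You should replace the continuity appeal with these two short computations; with that change the proof is complete and coincides with the paper's.
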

The proof is done by zeroing the derivative right hand side with
respect to $c$ (details in Appendix \ref{sec:connectionproof}).

Theorem \ref{theo:connectgamma2beta} states that with a positive
scalar, the learning problem formulated by a $\gamma$-divergence is
equivalent to the one by the corresponding $\beta$-divergence. The
latter is separable and can be solved by the methods described in the
Section \ref{sec:selectbeta}. An example is between normalized
KL-divergence (in $\gamma$-divergence) and the non-normalized
KL-divergence (in $\beta$-divergence) with the optimal connecting
scalar $c=\frac{\sum_i x_i}{\sum_i\mu_i}$.  Example applications on
selecting the best $\gamma$-divergence are given in Section
\ref{sec:expgamma}.

Similarly, we can also reduce a R\'enyi divergence to its
corresponding $\alpha$-divergence with the same proof technique (see
Appendix \ref{sec:connectionproof}).
\begin{theorem}
\label{theo:connectrenyi2alpha}
For $\vecx\geq\veczero$ and $\tau>0$,
\begin{align}
\arg\min_{\vecmu\geq\veczero} D_{\rho\rightarrow \tau}(\vecx||\vecmu) =
\arg\min_{\vecmu\geq\veczero} \left[\min_{c>0}D_{\alpha\rightarrow
\tau}(\vecx||c\vecmu)\right].
\end{align}
\end{theorem}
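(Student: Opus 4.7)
The plan is to mirror the proof of Theorem~\ref{theo:connectgamma2beta}: for each fixed $\vecmu\geq\veczero$, solve the inner minimization over $c>0$ in closed form, substitute the minimizer back, and recognize the resulting function of $\vecmu$ as a strictly monotone transformation of $D_{\rho\to\tau}(\vecx||\vecmu)$ when $\vecx$ is held fixed. Two objectives related by a strict monotone transformation share the same argmin, which is precisely what the theorem asserts.

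Concretely, for $\tau\neq 1$ I would expand
\begin{align*}
D_\alpha(\vecx||c\vecmu)=\frac{c^{1-\alpha}S_\alpha-\alpha X+(\alpha-1)cM}{\alpha(\alpha-1)},
\end{align*}
with $S_\alpha=\sum_i x_i^\alpha\mu_i^{1-\alpha}$, $X=\sum_i x_i$, and $M=\sum_i\mu_i$. Zeroing the derivative in $c$ gives a unique positive stationary point $c^{*}=(S_\alpha/M)^{1/\alpha}$; convexity of $c\mapsto D_\alpha(\vecx||c\vecmu)$ on $(0,\infty)$ makes it the global minimizer. Substituting $c^{*}$ back collapses the bracket to
\begin{align*}
\min_{c>0}D_\alpha(\vecx||c\vecmu)=\frac{S_\alpha^{1/\alpha}M^{1-1/\alpha}-X}{\alpha-1}.
\end{align*}
On the R\'enyi side, substituting $\tilde x_i=x_i/X$ and $\tilde\mu_i=\mu_i/M$ in \eqref{eq:divrenyi} gives $D_\rho(\vecx||\vecmu)=\frac{1}{\rho-1}\ln\bigl(X^{-\rho}M^{\rho-1}S_\rho\bigr)$ with the same $S$.

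Setting $\rho=\alpha=\tau$ and varying only $\vecmu$ (so that $X$ is constant), both quantities become, up to additive and positive multiplicative constants, strictly monotone functions of the single scalar $S_\alpha M^{\alpha-1}$: for $\alpha>1$ both are strictly increasing in it, and for $0<\alpha<1$ both are strictly decreasing. This uses that $u\mapsto u^{1/\alpha}$ and $u\mapsto\ln u$ are strictly increasing on $(0,\infty)$, while the prefactors $1/(\alpha-1)$ and $1/(\rho-1)$ flip sign in lockstep at $\alpha=\rho=1$. Hence the two arg-minimizing sets over $\vecmu$ coincide for every $\tau>0$ with $\tau\neq 1$.

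The boundary case $\tau\to 1$ I would dispatch by continuity: both sides are continuous in the parameter on $(0,\infty)$, and both limits reduce to minimization of the normalized Kullback--Leibler divergence (with $c^{*}\to X/M$, matching \eqref{eq:divkl}). The main subtlety is precisely the sign bookkeeping across $\alpha=1$ --- verifying that the direction of monotonicity flips consistently on the two sides --- together with checking that $c^{*}=(S_\alpha/M)^{1/\alpha}$ is real and strictly positive for every admissible $\vecmu\neq\veczero$, which follows from $S_\alpha,M>0$.
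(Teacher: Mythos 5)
Your proposal is correct and follows essentially the same route as the paper's own proof in Appendix~\ref{sec:connectionproof}: compute the closed-form minimizer $c^*=\left(\sum_ix_i^\alpha\mu_i^{1-\alpha}/\sum_i\mu_i\right)^{1/\alpha}$, substitute it back, drop the additive constant, and pass to the logarithm to recover the R\'enyi objective. Your explicit convexity check in $c$ and the sign bookkeeping across $\alpha=1$ are in fact slightly more careful than the paper's appeal to Lemma~\ref{lem:log} (which literally covers $a\ln f$ versus $af$ but not the intervening power $f^{1/\alpha}$), and your continuity argument for $\tau\to1$ replaces the paper's direct computation of that special case; both are fine.
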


%%%%%%%%%%%%%%%%%%%%%%%%%%%%%%%%%%%%%%%%%%%%%%%%
\section{Experiments}
\label{sec:exp}
In this section we demonstrate the proposed method on various data
types and learning tasks. First we provide the results on synthetic
data, whose density is known, to compare the behavior of MTL, MEDAL and the score matching
method \cite{lu12selecting}. Second, we illustrate the advantage of
the EDA density over ED.  Third, we apply
our method on $\alpha$- and $\beta$-divergence selection in
Nonnegative Matrix Factorization (NMF) on real-world data including
music and stock prices. Fourth, we test MEDAL in selecting
non-separable cases (e.g.\@ $\gamma$-divergence) for Projective NMF
and s-SNE visualization learning tasks across synthetic data, images,
and a dolphin social network.

\begin{figure*}[t]
\begin{center}
\begin{tabular}{cccc}
  \epsfig{figure=./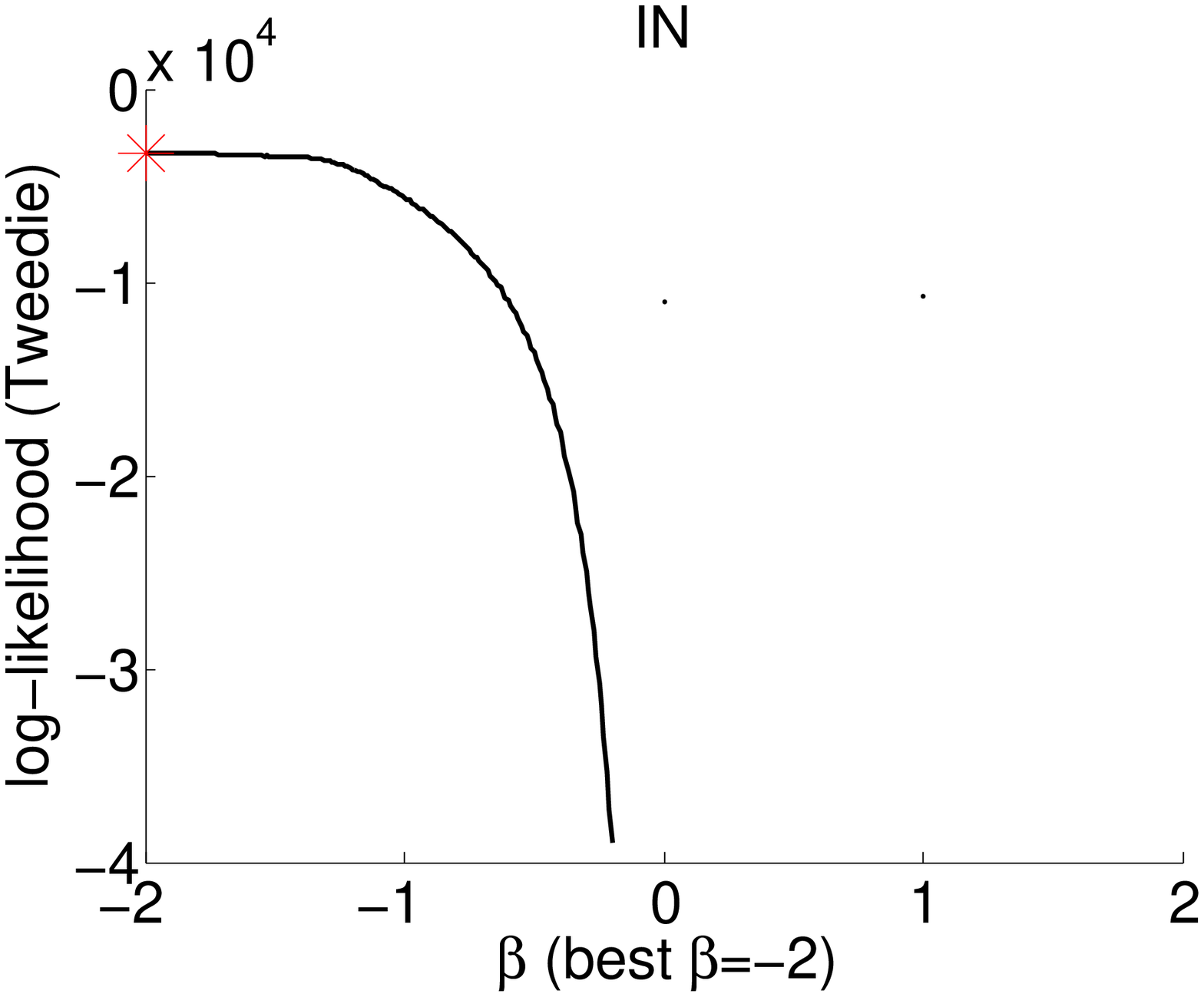,width=4cm}&
  \epsfig{figure=./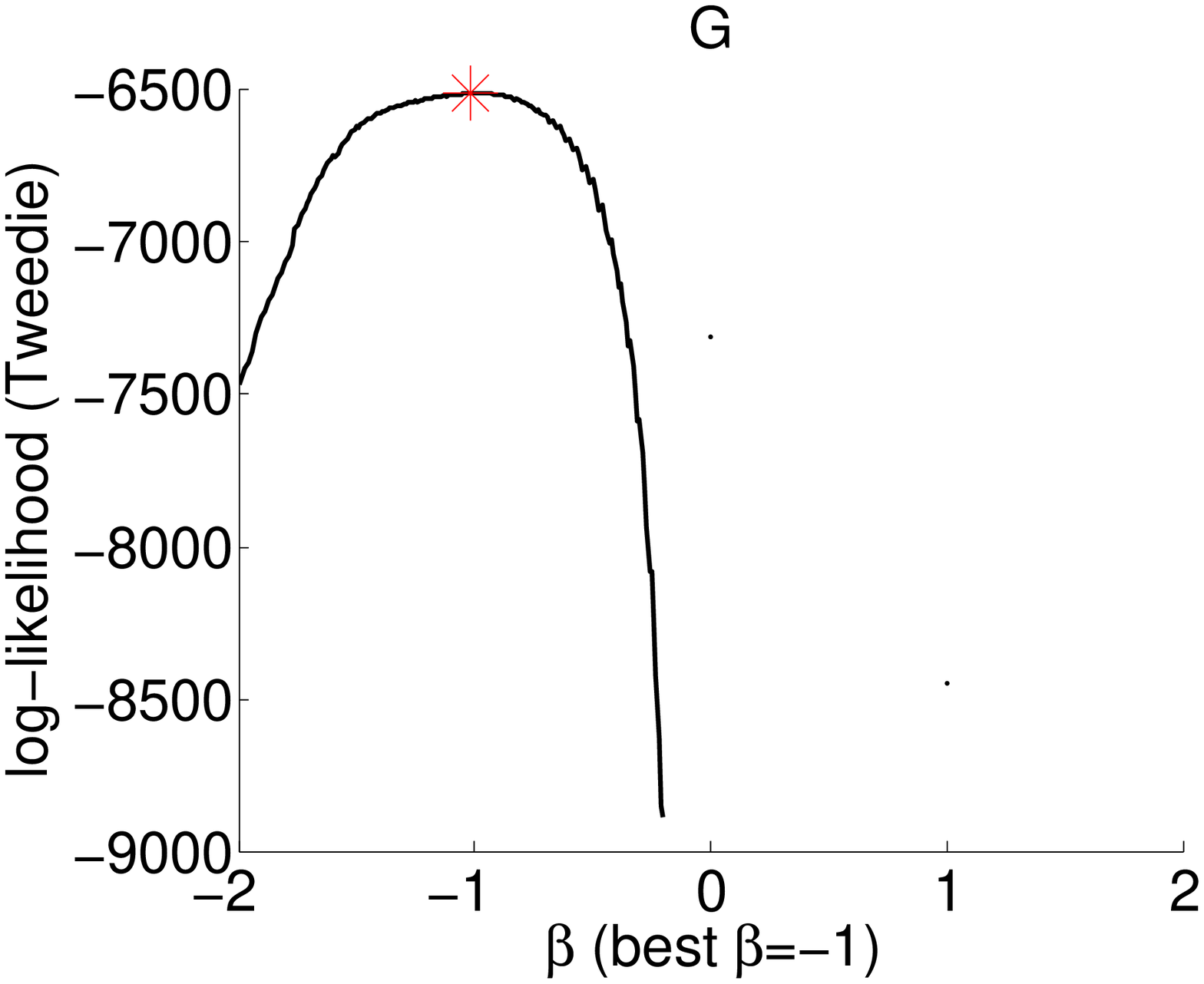,width=4cm}&
  \epsfig{figure=./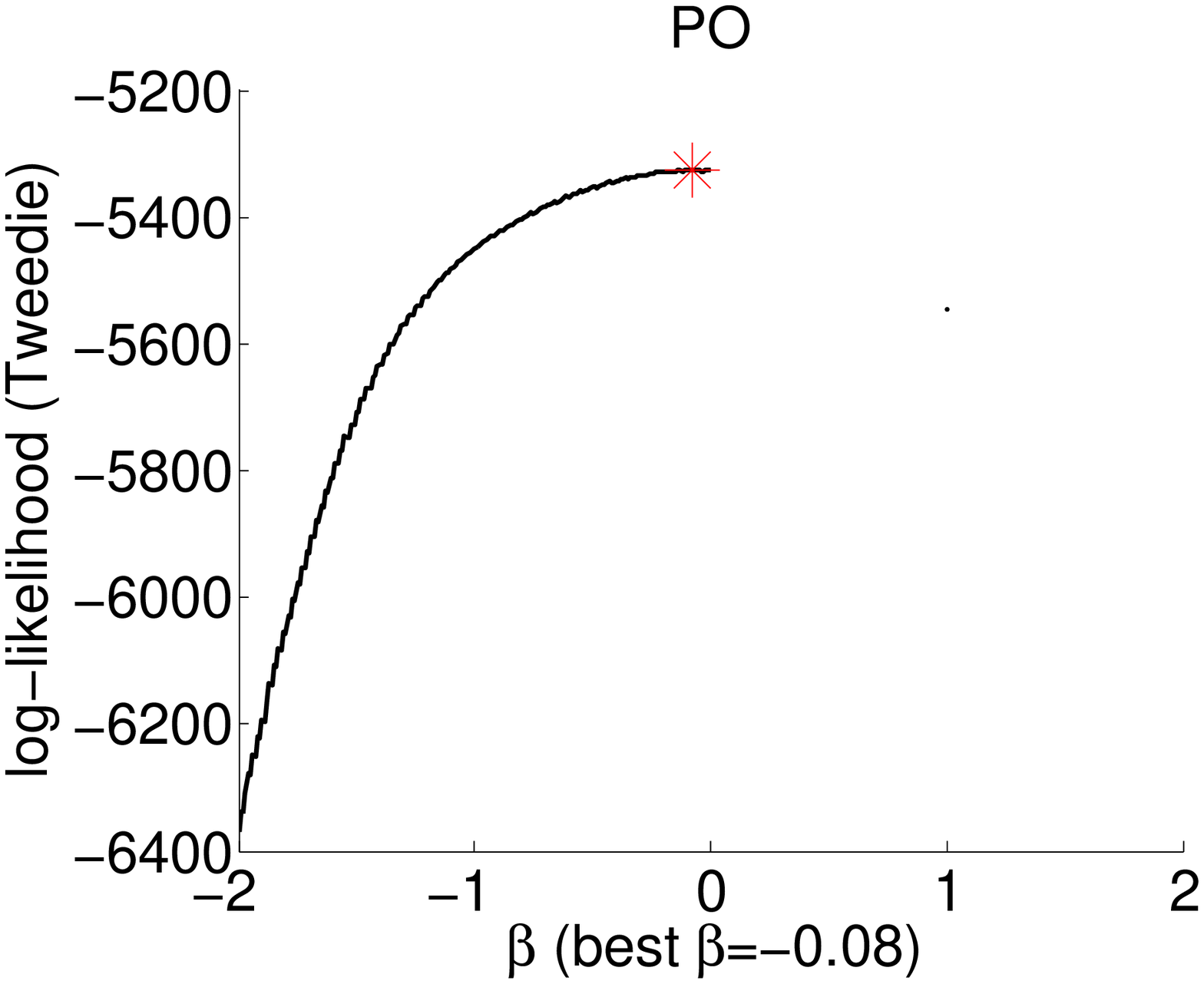,width=4cm}&
  \epsfig{figure=./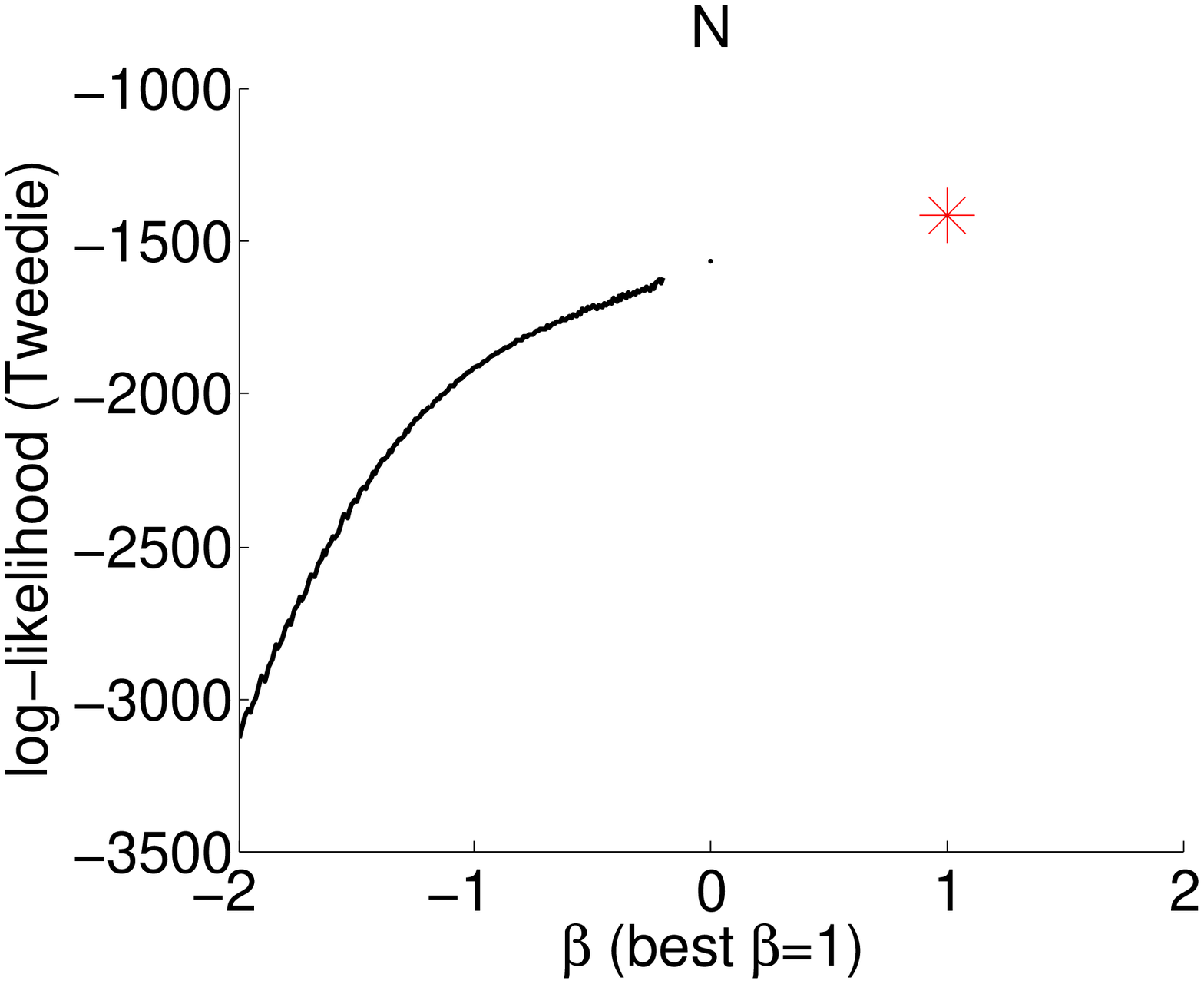,width=4cm}\\
  \epsfig{figure=./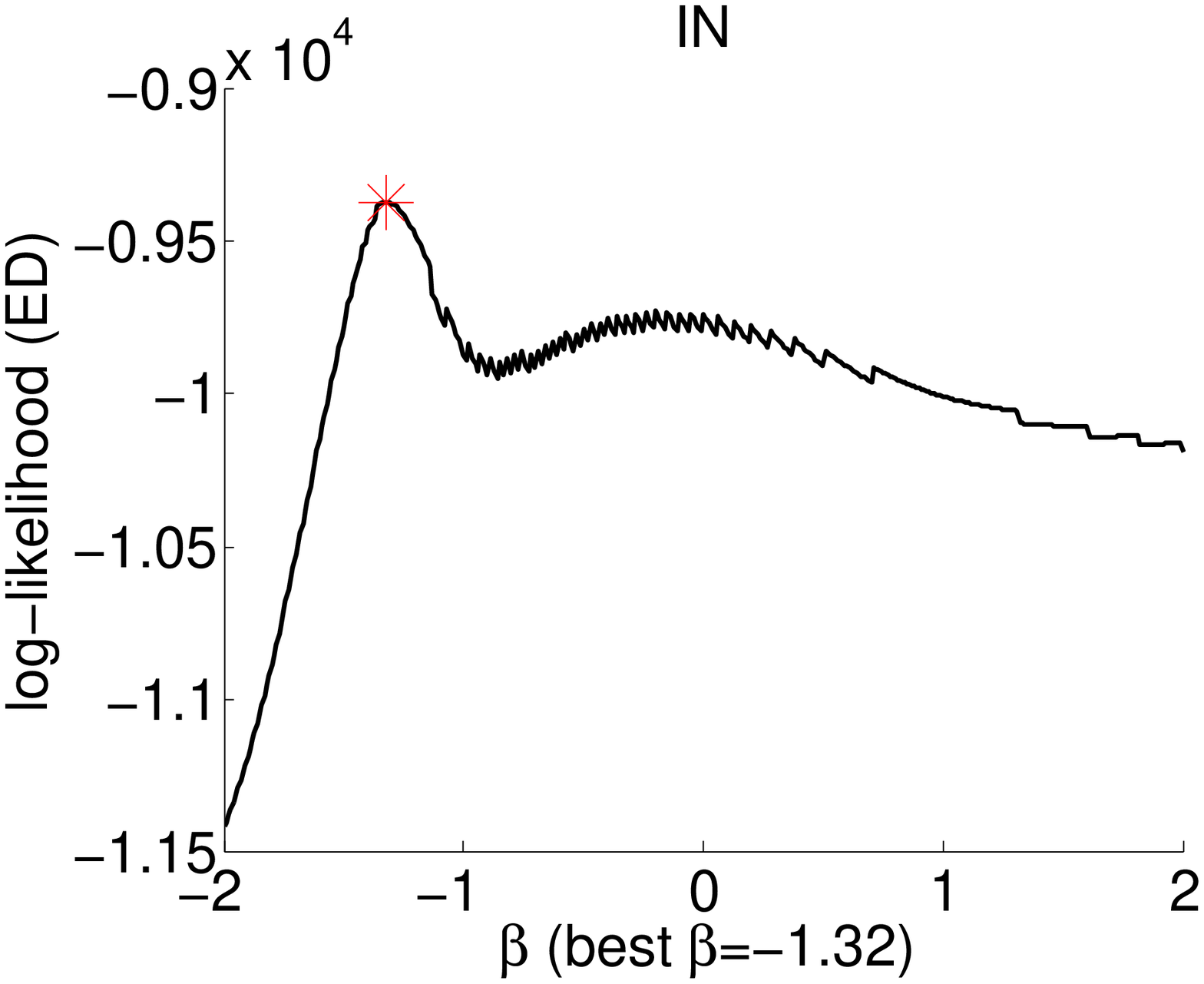,width=4cm}&
  \epsfig{figure=./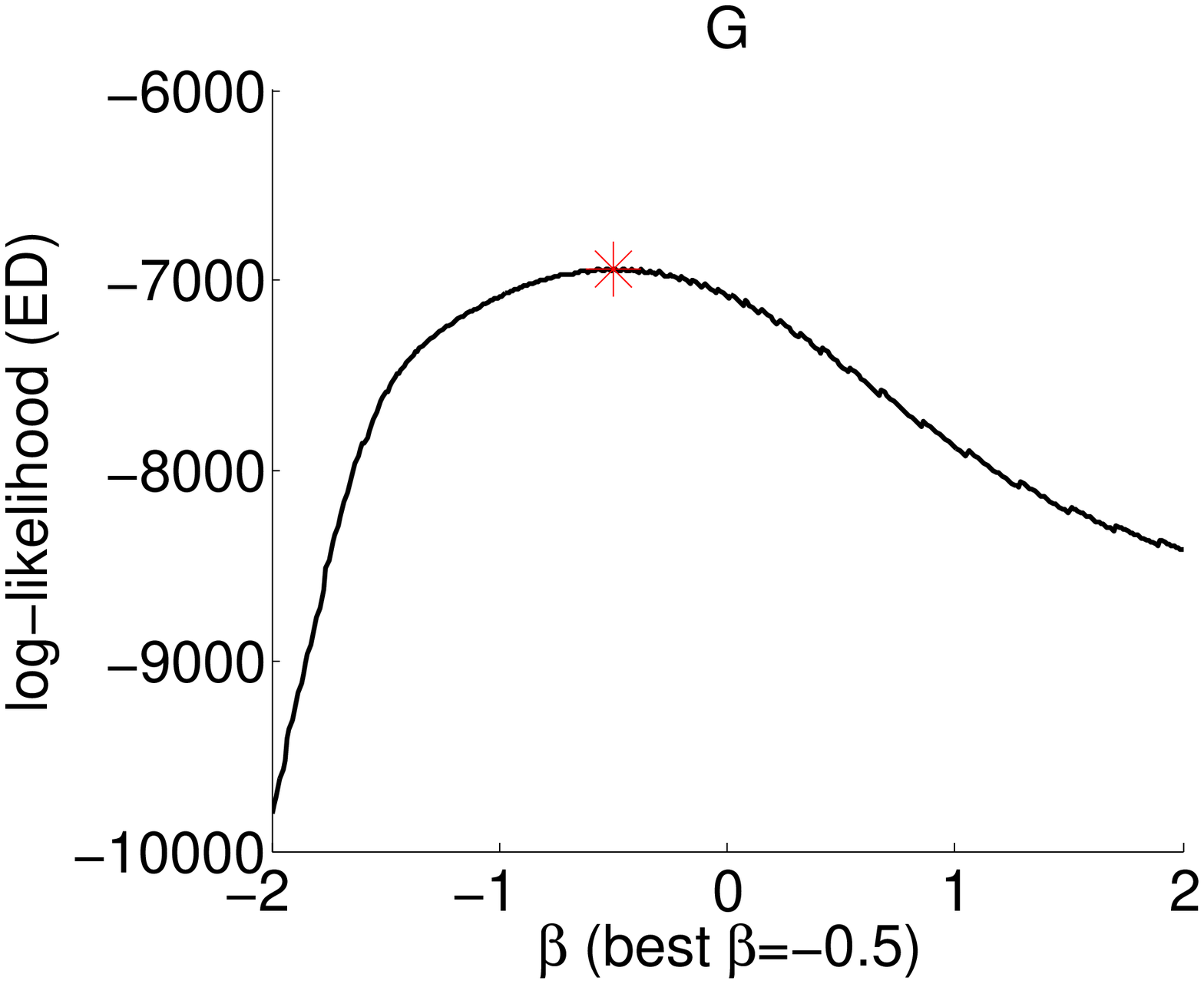,width=4cm}&
  \epsfig{figure=./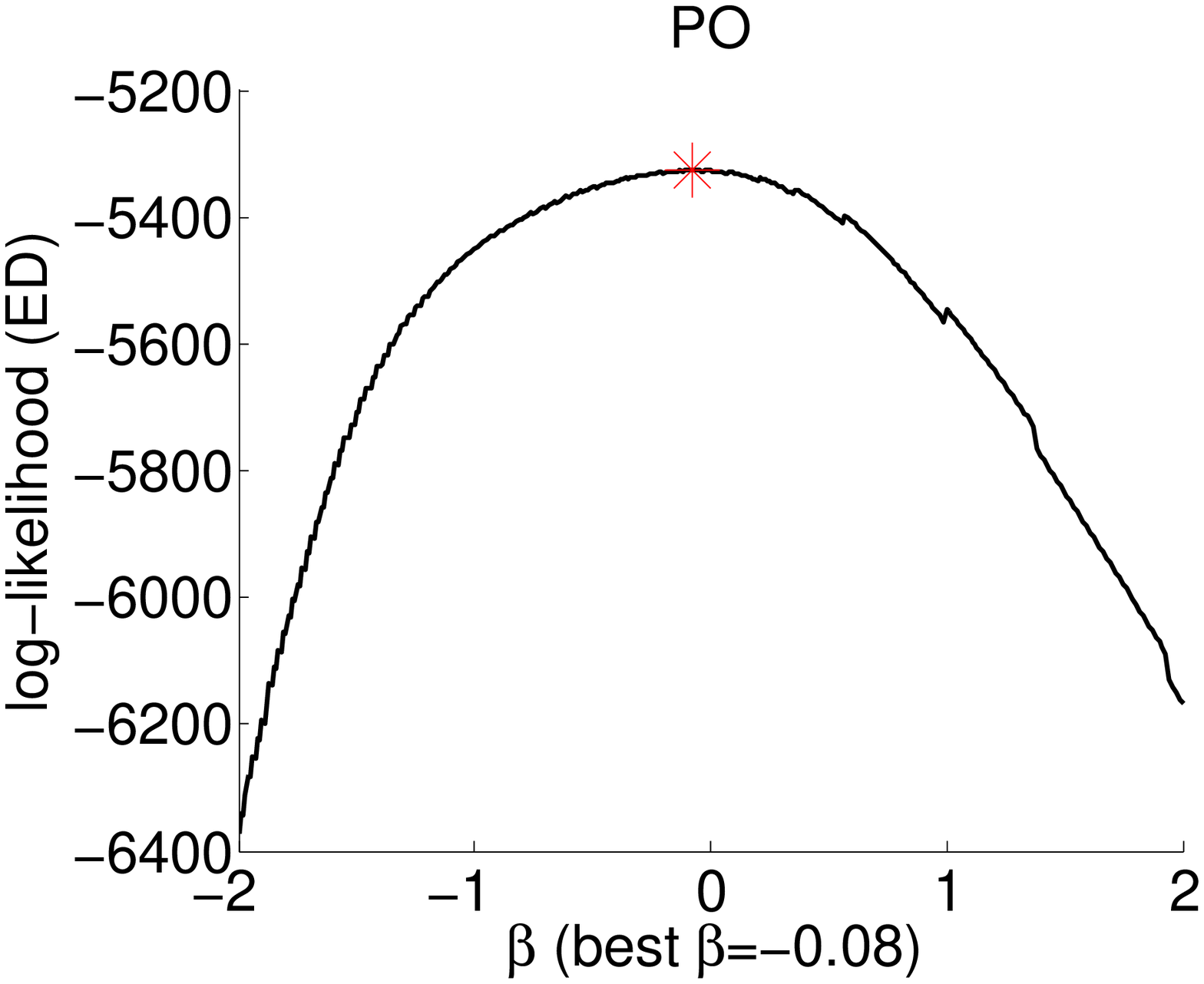,width=4cm}&
  \epsfig{figure=./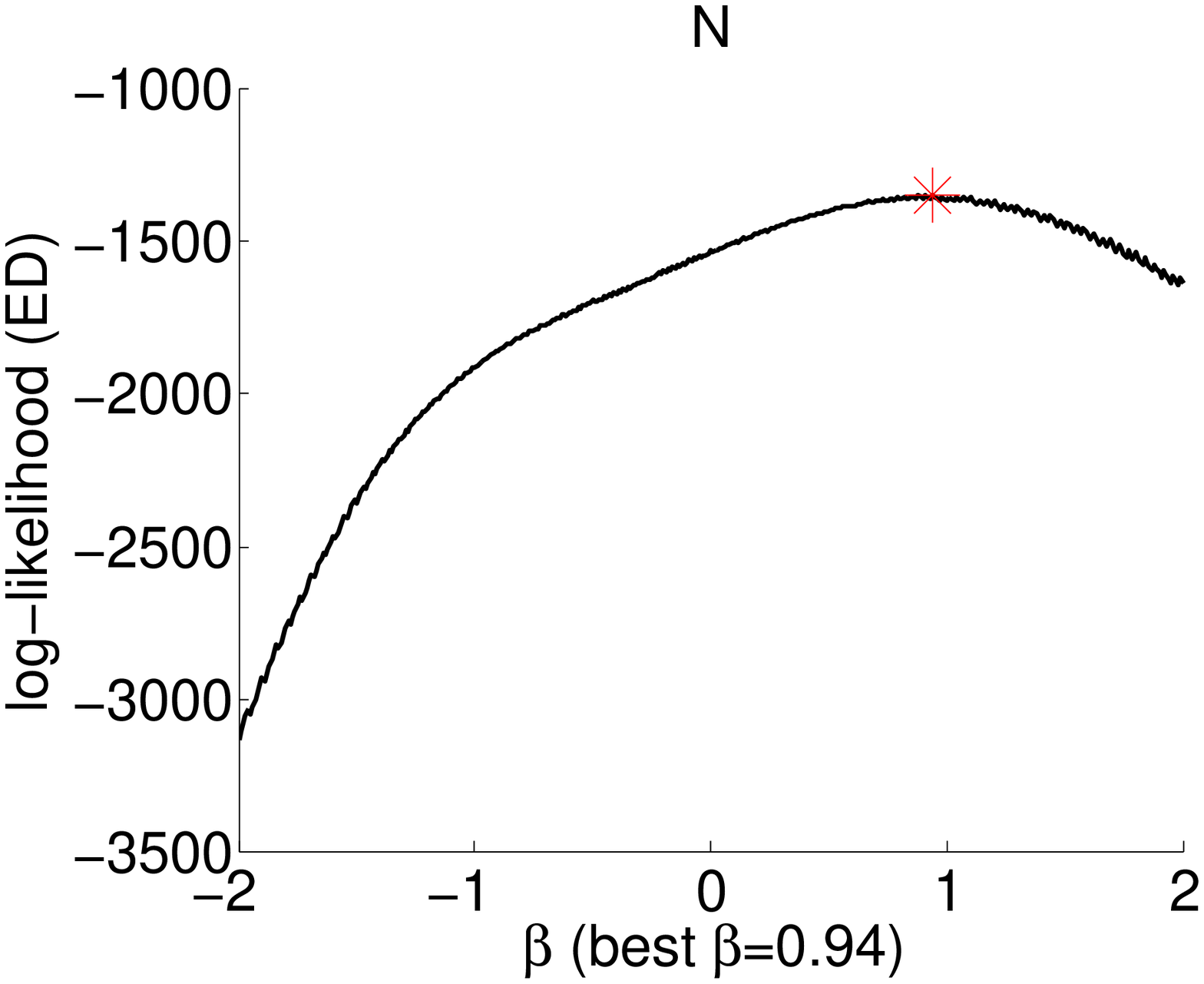,width=4cm}\\
  \epsfig{figure=./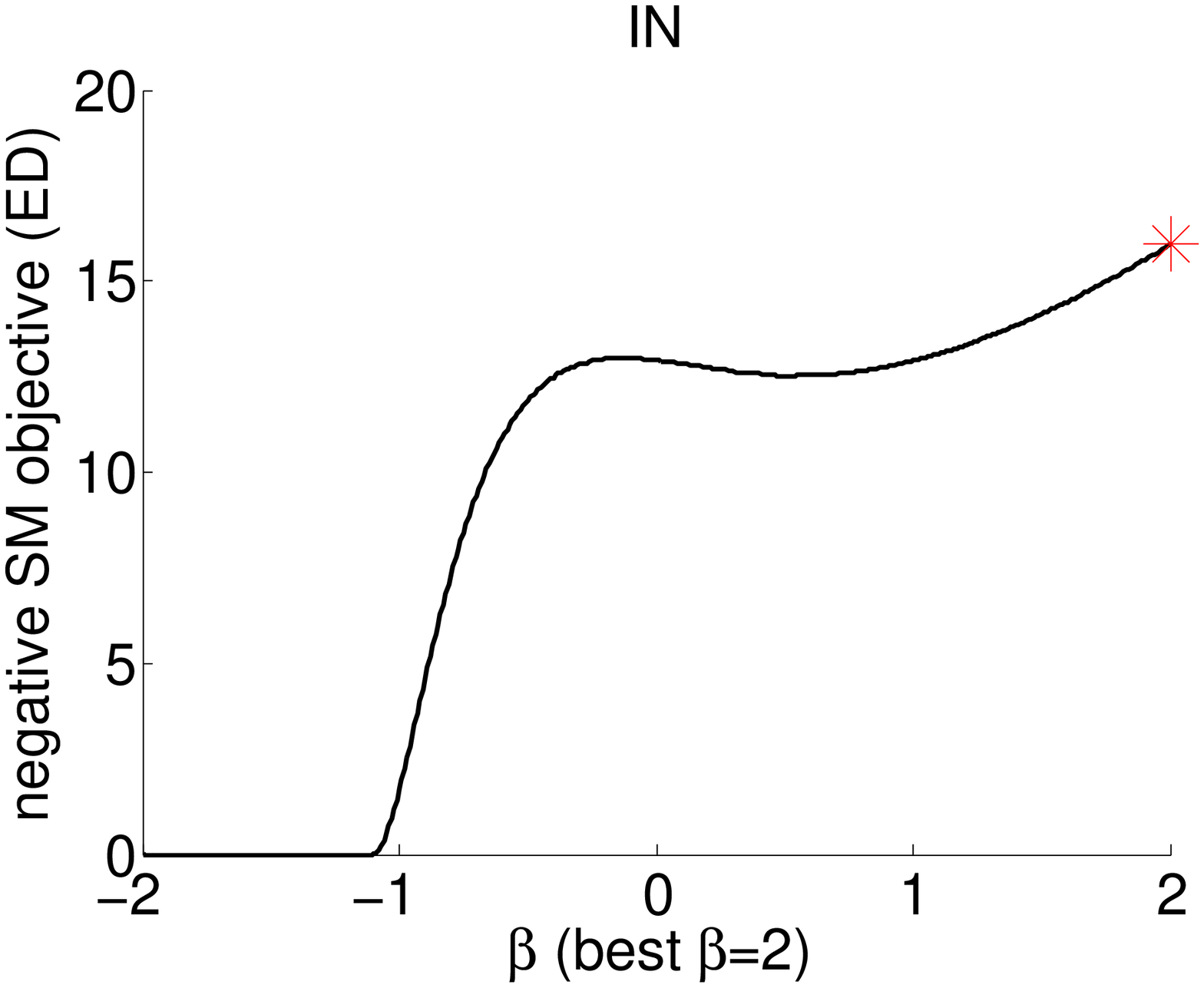,width=4cm}&
  \epsfig{figure=./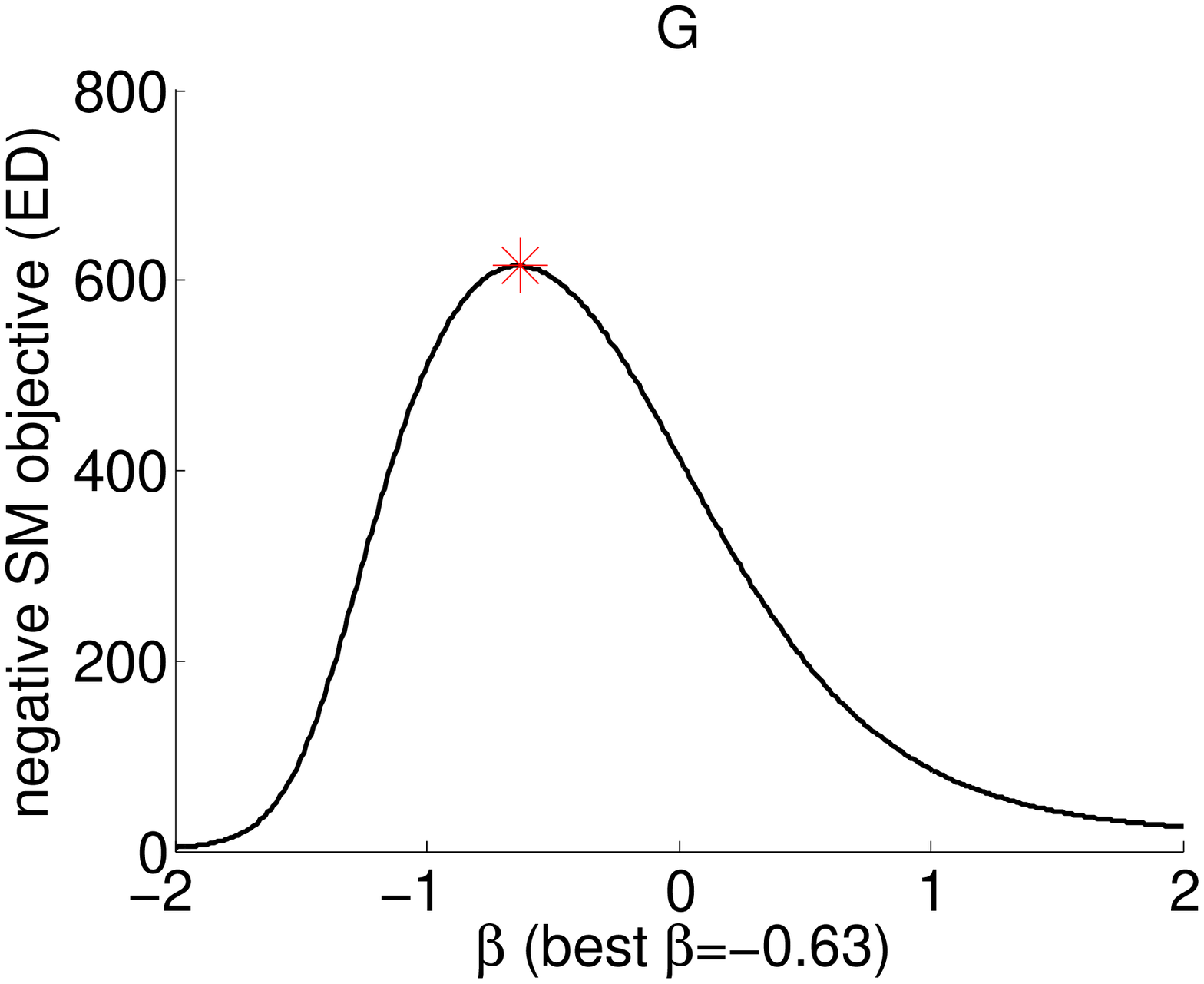,width=4cm}&
  \epsfig{figure=./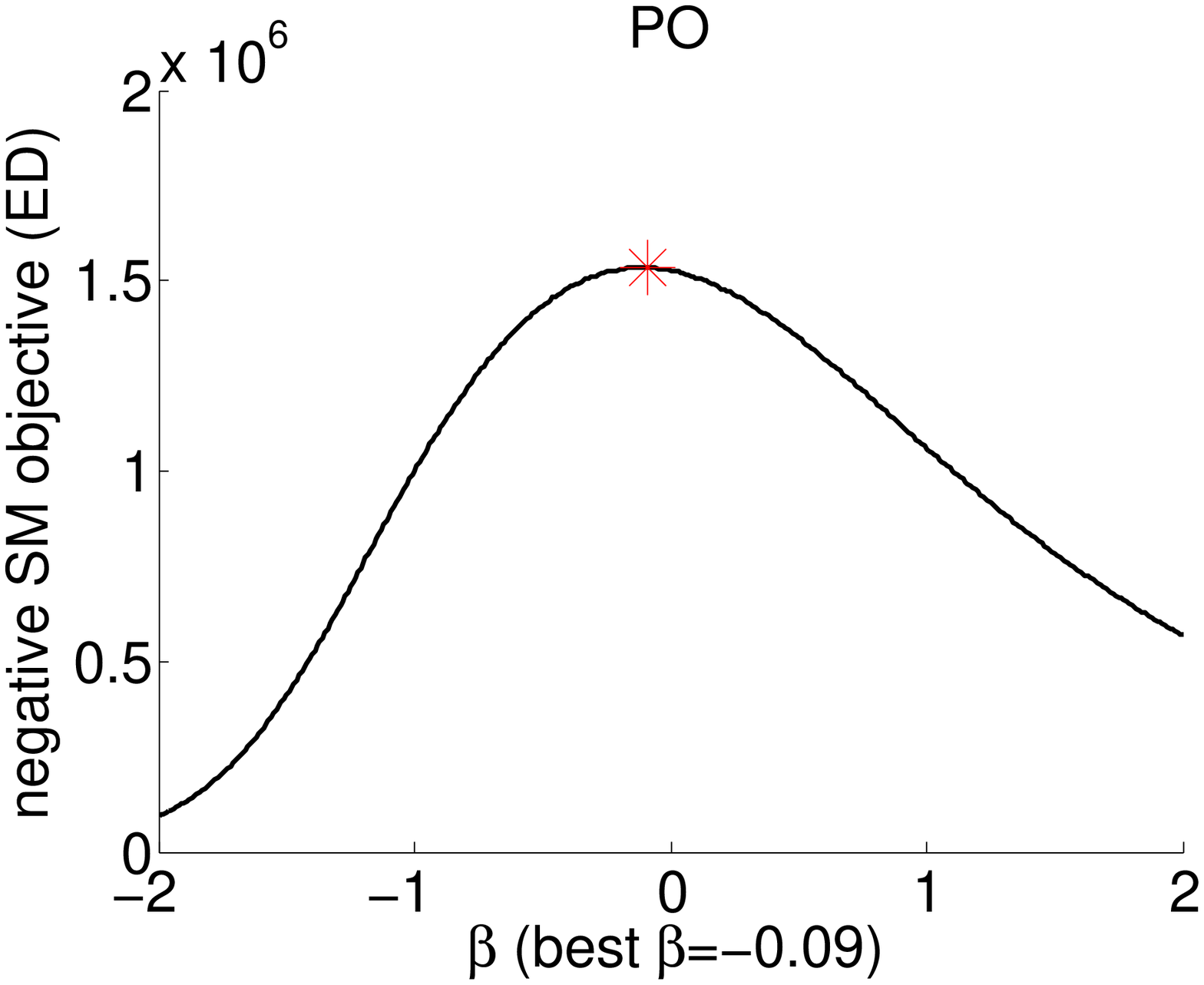,width=4cm}&
  \epsfig{figure=./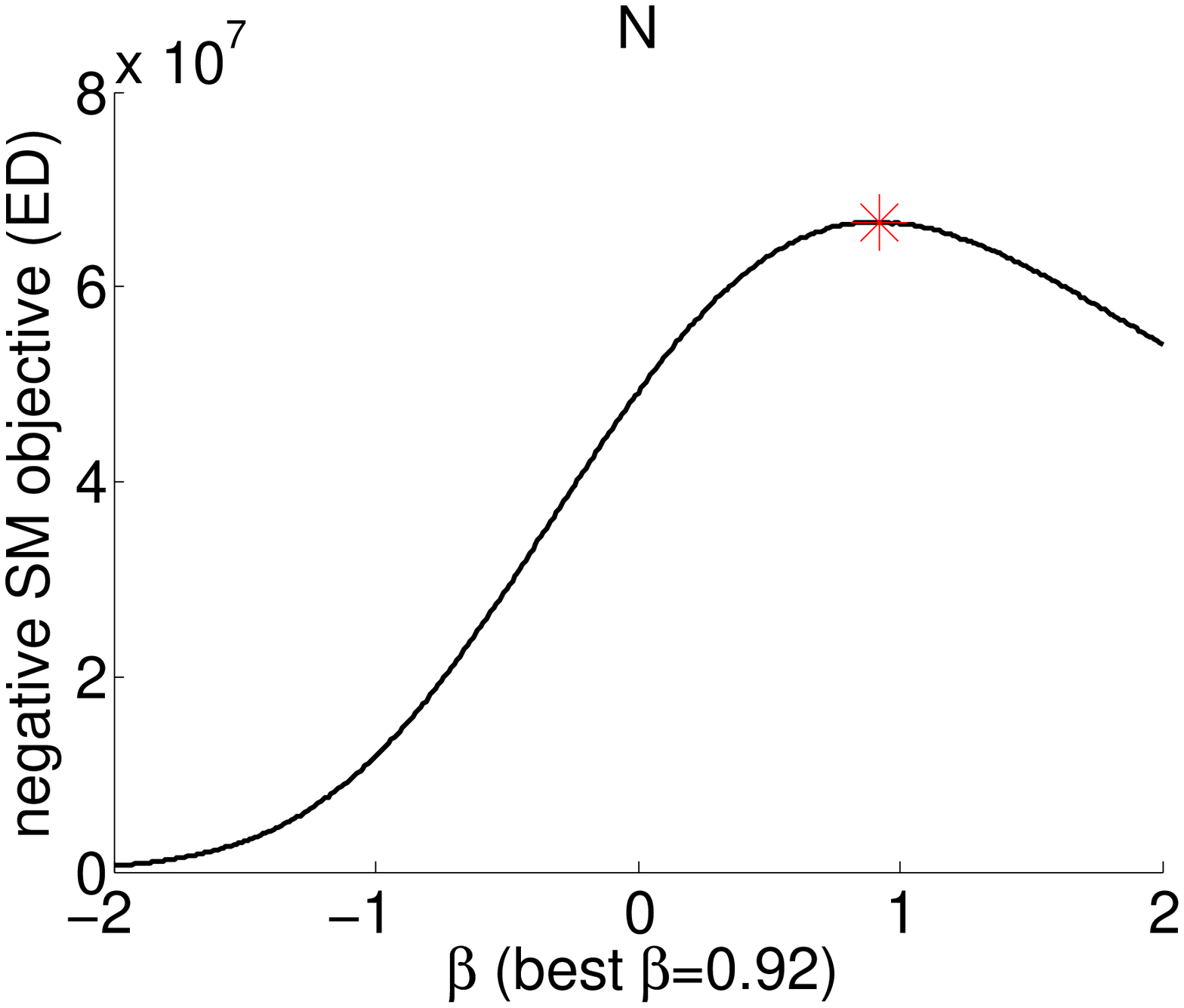,width=4cm}\\
  \epsfig{figure=./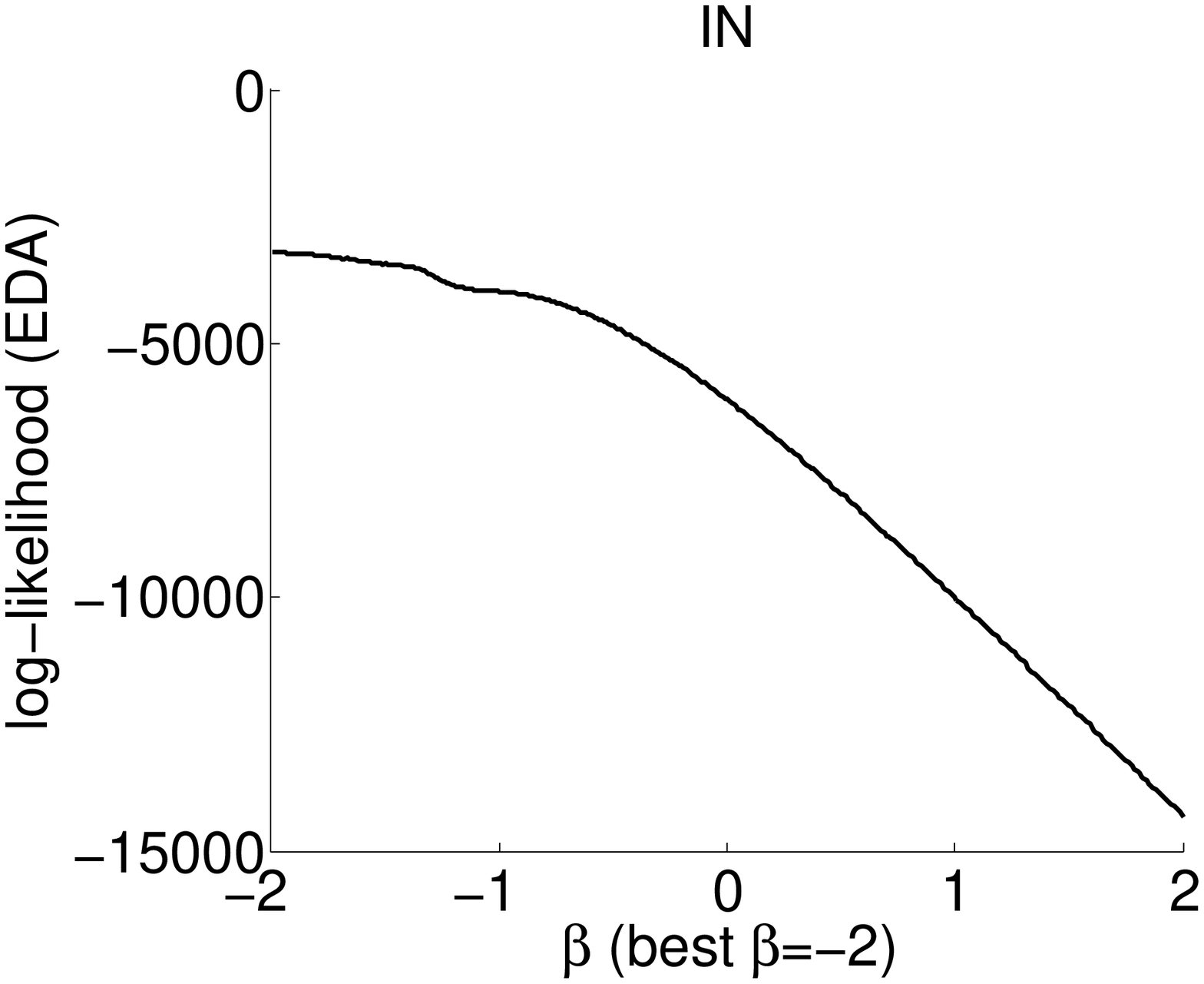,width=4cm}&
  \epsfig{figure=./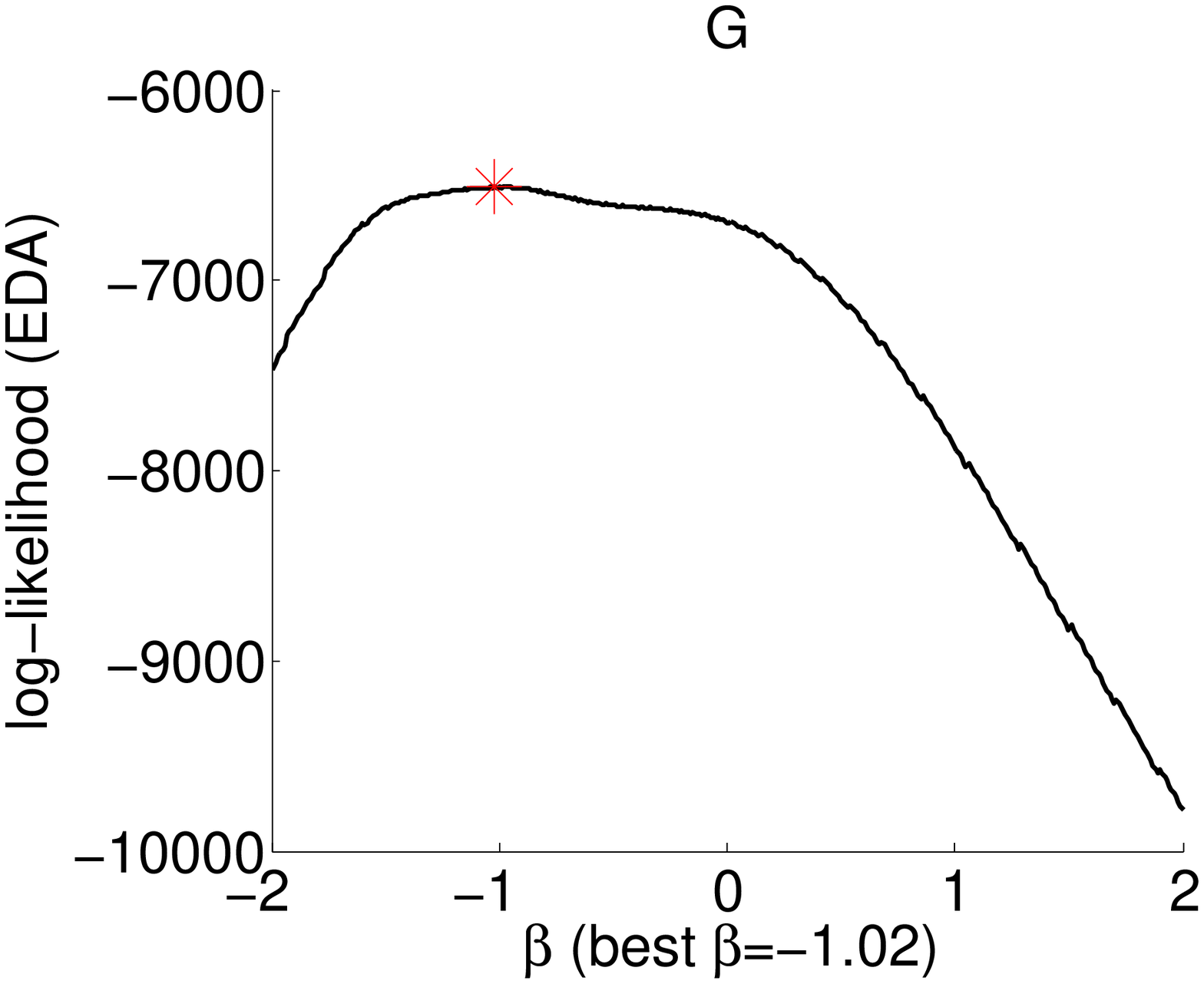,width=4cm}&
  \epsfig{figure=./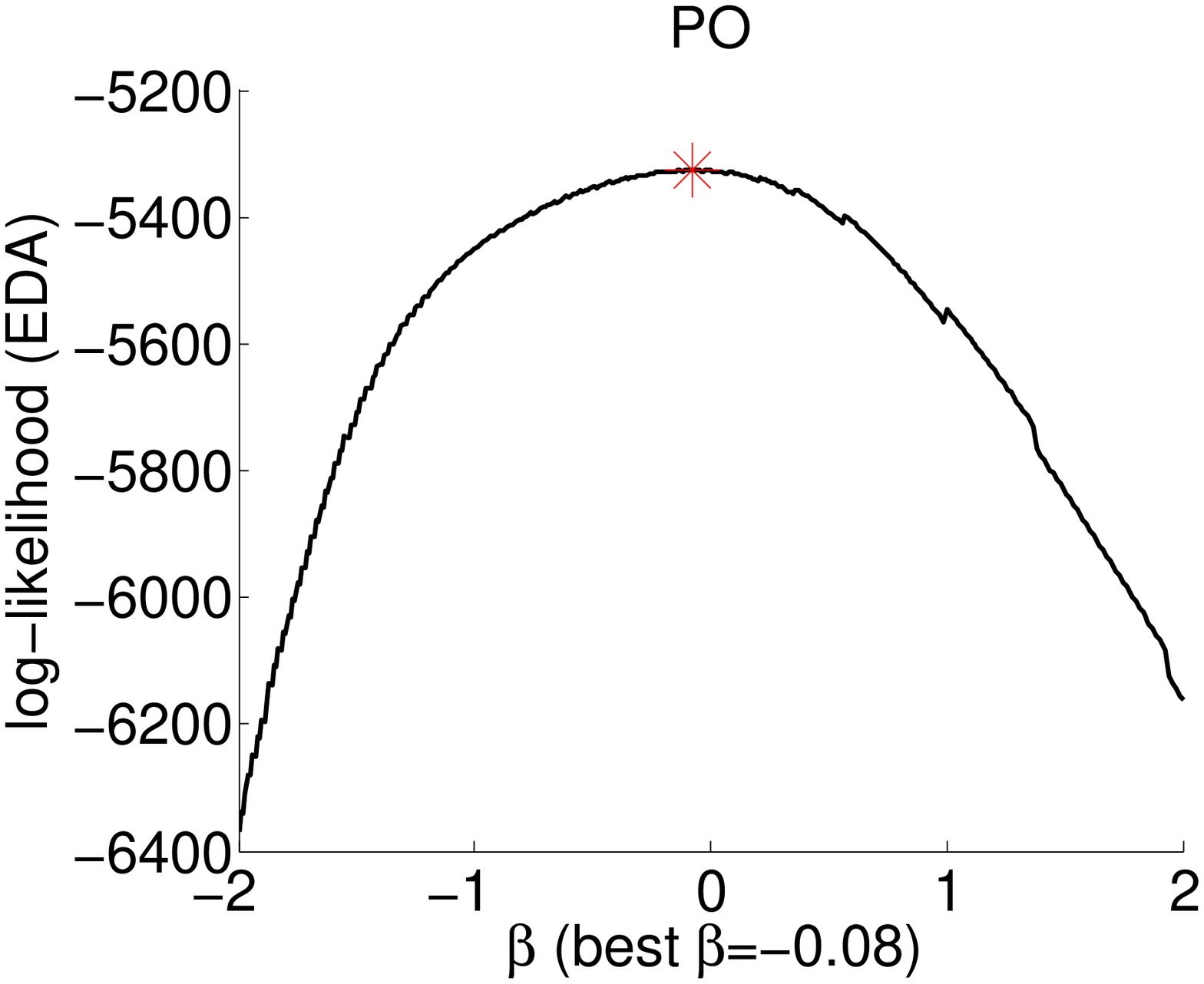,width=4cm}&
  \epsfig{figure=./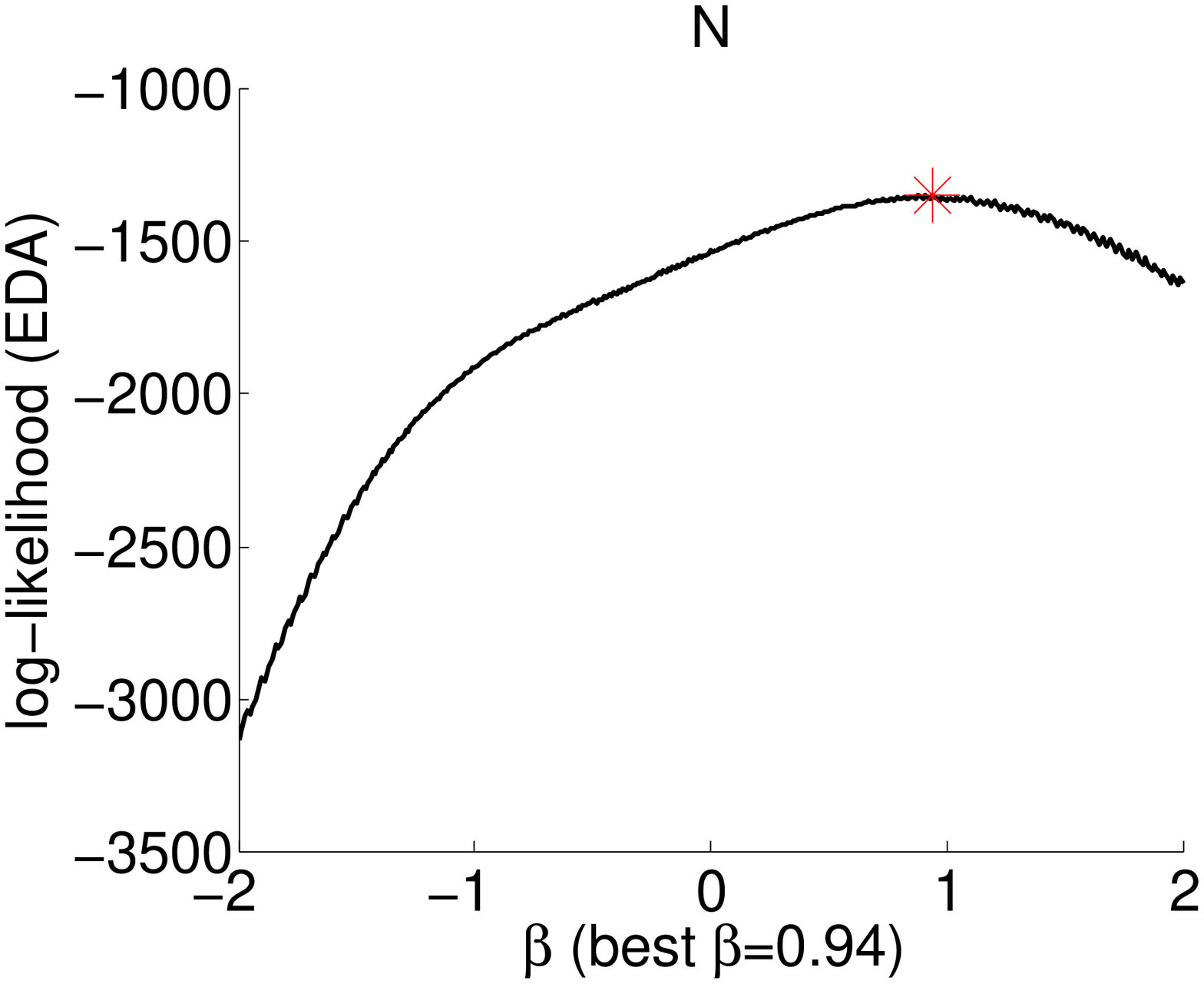,width=4cm}\\
  \epsfig{figure=./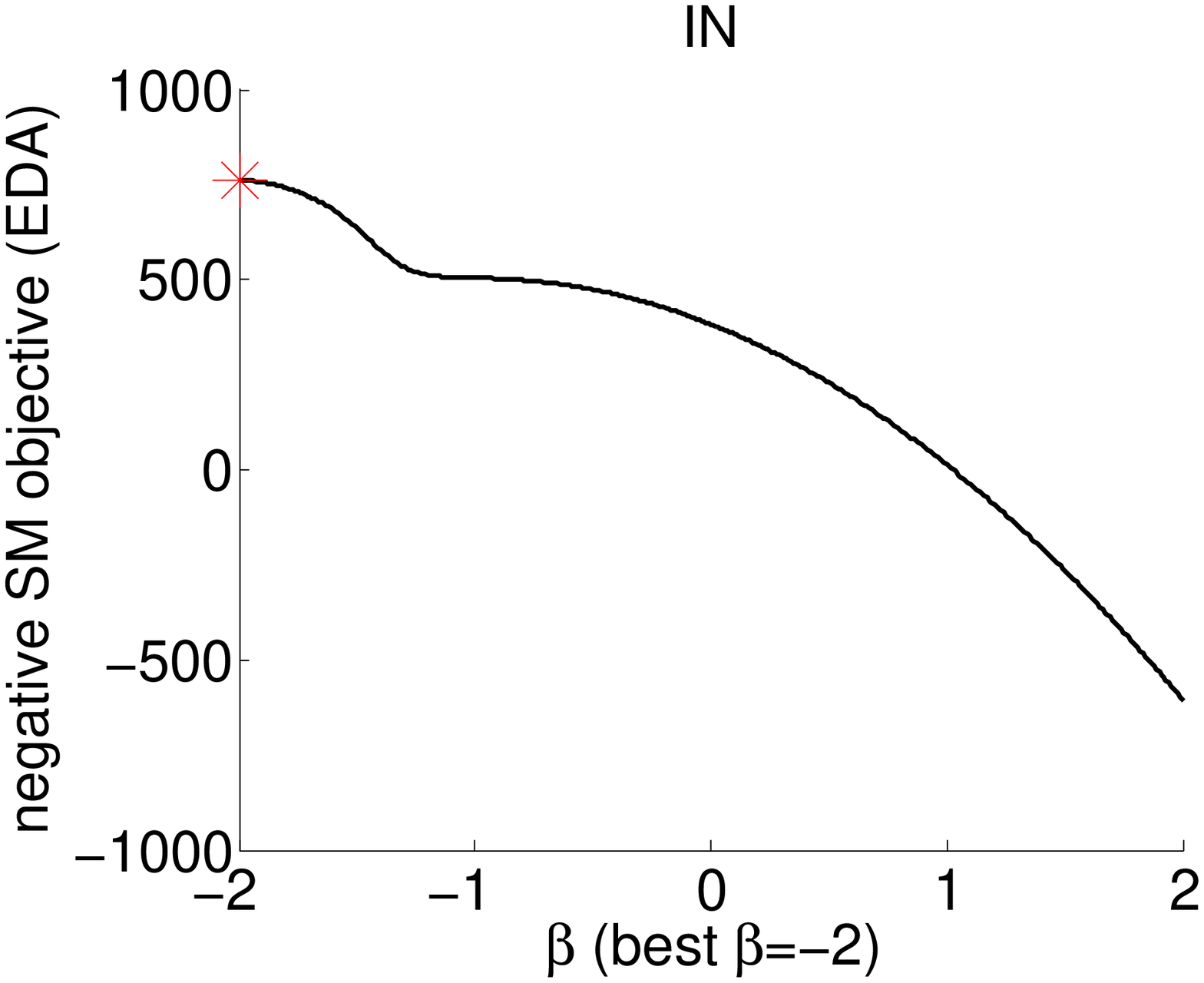,width=4cm}&
  \epsfig{figure=./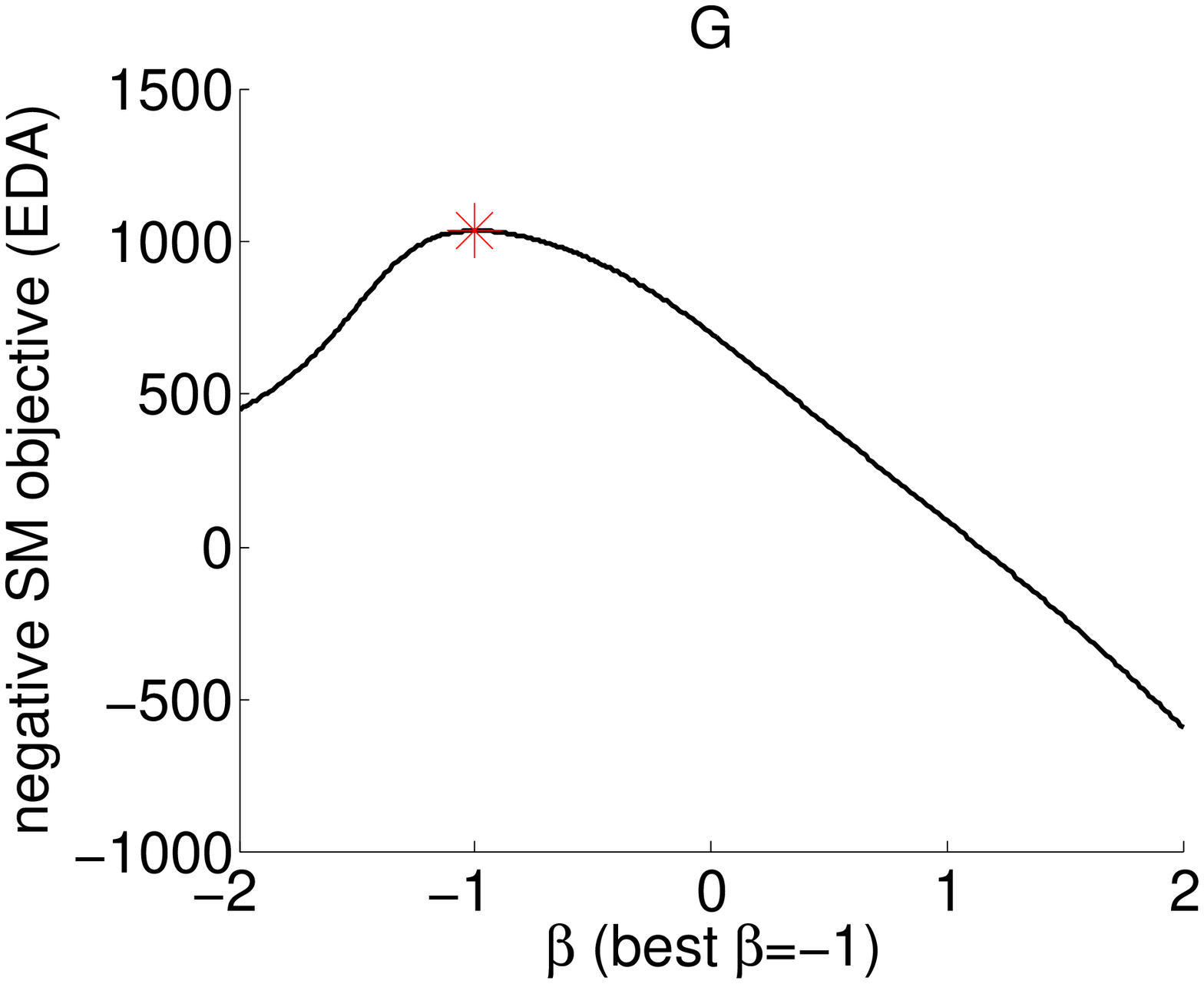,width=4cm}&
  \epsfig{figure=./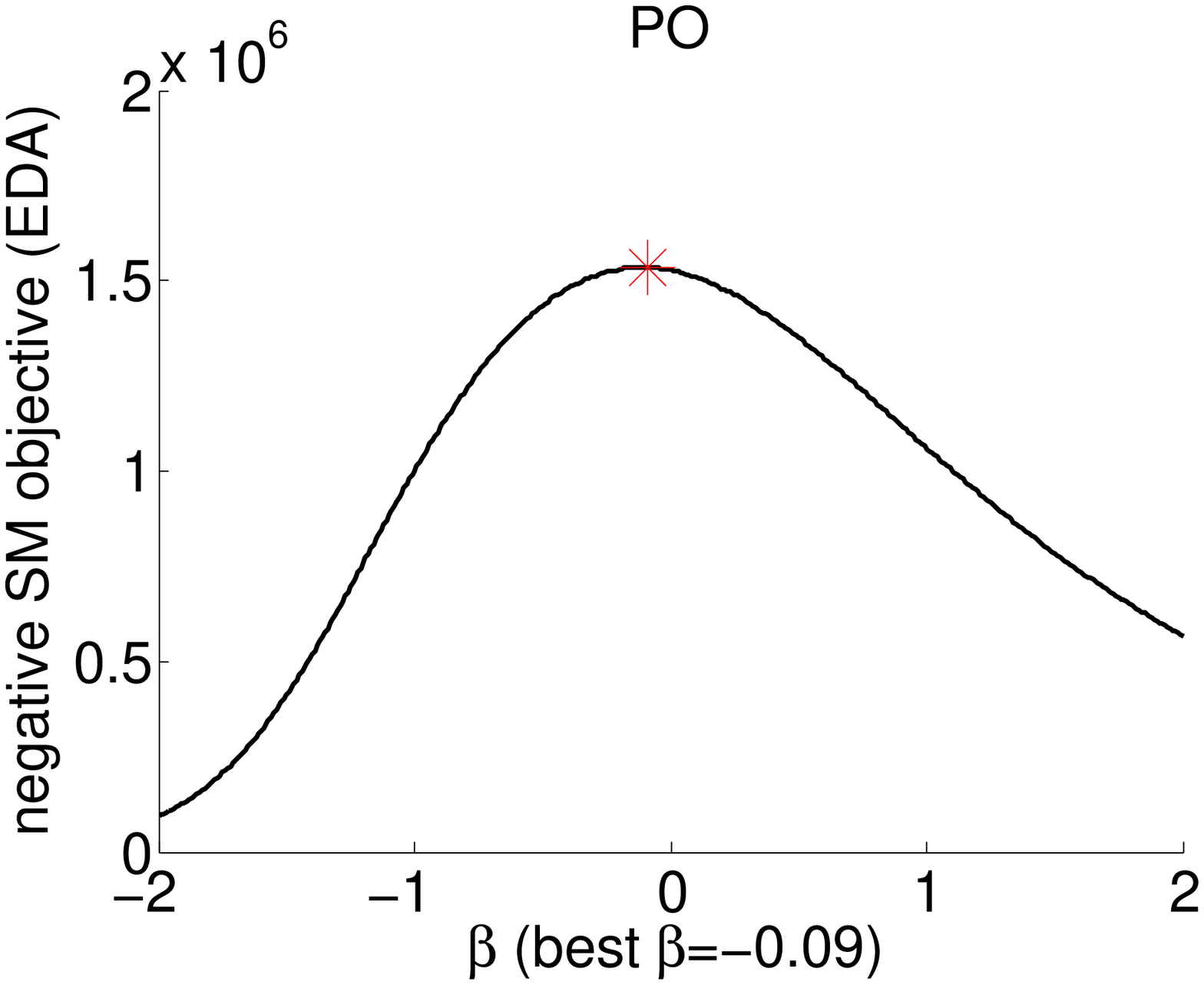,width=4cm}&
  \epsfig{figure=./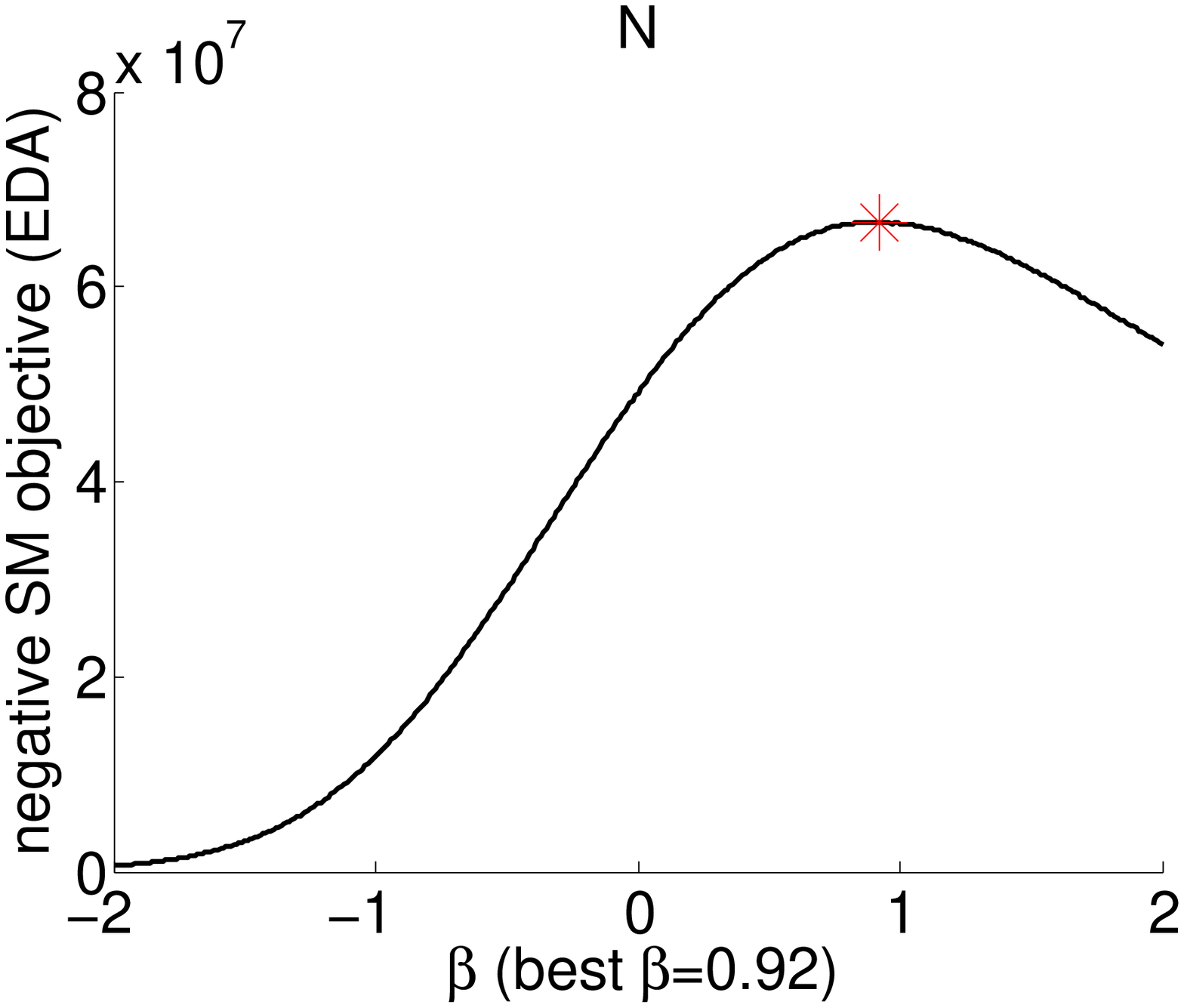,width=4cm}\\
  a) Inverse Gaussian & b) Gamma & c) Poisson & d) Gaussian\\
\end{tabular}
\end{center}
\caption{$\beta$ selection using (from top to bottom) 
Tweedie likelihood, ED likelihood, negative SM objective of ED,
EDA likelihood, and negative SM objective of EDA. Data were 
generated using Tweedie distribution with $\beta=-2,-1,0,1$ (from left to right).}
\label{fig:logL_vs_beta}
\end{figure*}

\subsection{Synthetic data}
\label{sec:expsyn}
\subsubsection{$\beta$-divergence selection}
\label{sec:expsynbeta}
We use here scalar data generated from the four special cases of Tweedie
distributions, namely, Inverse Gaussian, Gamma, Poisson, and Gaussian
distributions.  We simply fit the best Tweedie, EDA or ED density to
the data using either the
maximum likelihood method or score matching (SM). 

In Fig.~\ref{fig:logL_vs_beta} (first row), the results of the Maximum
Tweedie Likelihood (MTL) are shown. The $\beta$ value
that maximizes the likelihood in Tweedie distribution is consistent
with the true parameters, i.e., -2, -1, 0 and 1 respectively for the
above distributions. Note that Tweedie distributions are not defined
for $\beta\in(0,1)$, but $\beta$-divergence is defined in this region,
which will lead to discontinuity in the log-likelihood over $\beta$.

The second and third rows in Fig.~\ref{fig:logL_vs_beta} present
results of the exponential divergence density ED given in
Eq.~(\ref{eq:ED}). The  
log-likelihood and negative score matching objectives
\cite{lu12selecting} on the same four datasets are shown. The estimates are
consistent with the ground truth Gaussian and Poisson data. However,
for Gamma and Inverse Gaussian data, both $\beta$ estimates deviate
from the ground truth.  Thus, estimators based on ED do not give
as accurate estimates as the MTL method.
The ED distribution \cite{lu12selecting} has an advantage that it is
defined also for $\beta\in(0,1)$. In the above, we have seen that
$\beta$ selection by using ED is accurate when $\beta\rightarrow0$ or
$\beta=1$. However, as explained in Section~\ref{sec:medal}, in the
other cases ED and Tweedie distributions are not the same because the
terms containing the observed variable in these distributions are not
exactly the same as those of the Tweedie distributions.

EDA, the augmented ED density introduced in
Section~\ref{sec:selectbeta}, not only has both the advantage of
continuity but also gives very accurate estimates for $\beta<0$.  The
MEDAL log-likelihood curves over $\beta$ based on EDA are given in
Fig.~\ref{fig:logL_vs_beta} (fourth row).  In the $\beta$ selection of
Eq.~(\ref{eq:MEDAL}), the $\phi$ value that maximizes the likelihood
with $\beta$ fixed is found by a grid search.  The likelihood values
are the same as those of special Tweedie distributions and there are
no abrupt changes or discontinuities in the likelihood surface. We
also estimated $\beta$ for the EDA density using Score Matching, and
curves of the negative SM objective are presented in the bottom row of
Fig.~\ref{fig:logL_vs_beta}. They also recover the ground truth
accurately.

\subsubsection{$\alpha$-divergence selection}
\label{sec:expsynalpha}
There is only one known generative
model for which the maximum likelihood estimator corresponds to the
minimizer of the corresponding $\alpha$ divergence. It is the Poisson distribution. 
\begin{figure*}[t]
\begin{center}
\begin{tabular}{cccc}
  \epsfig{figure=./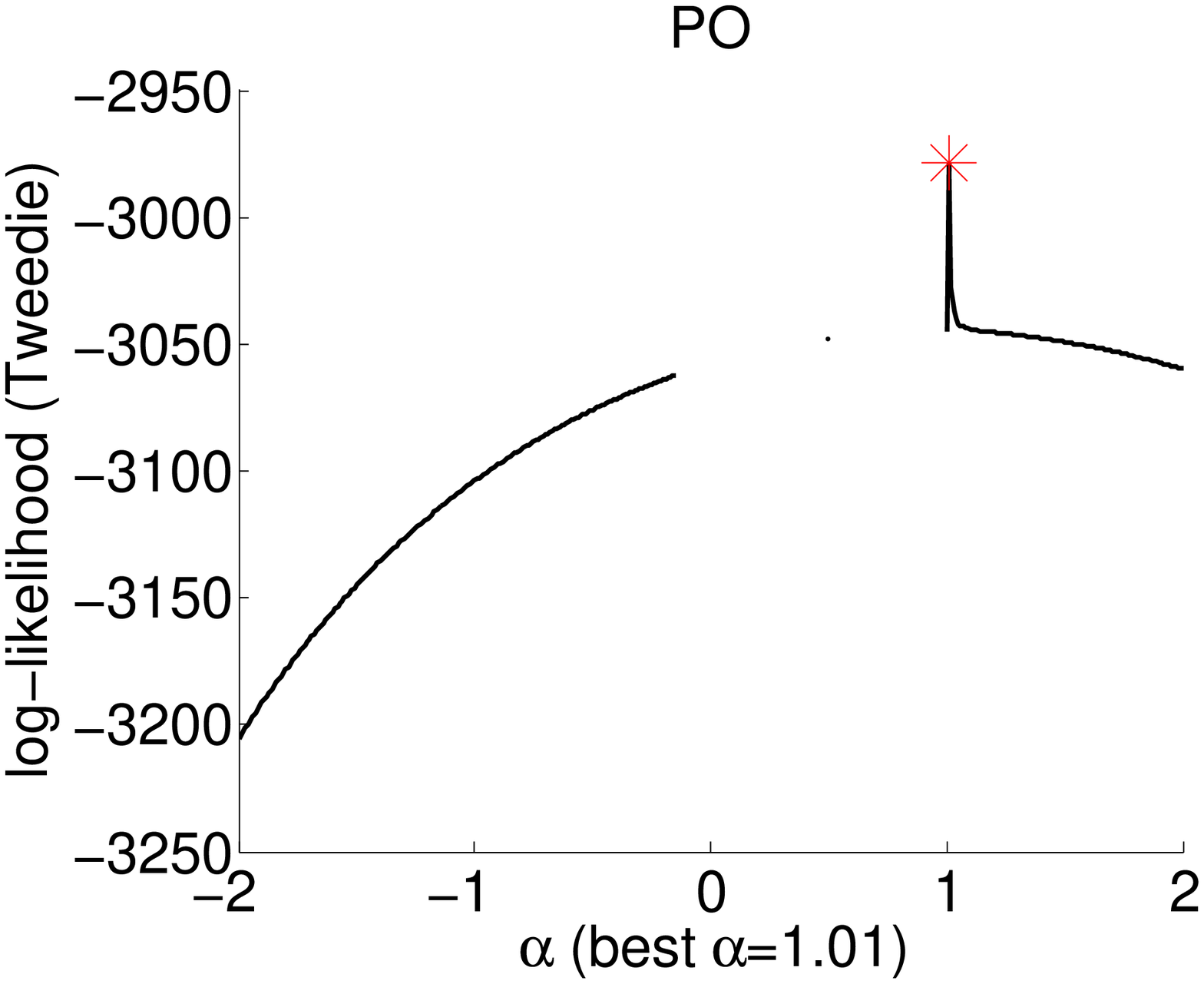,width=4cm}&
  \epsfig{figure=./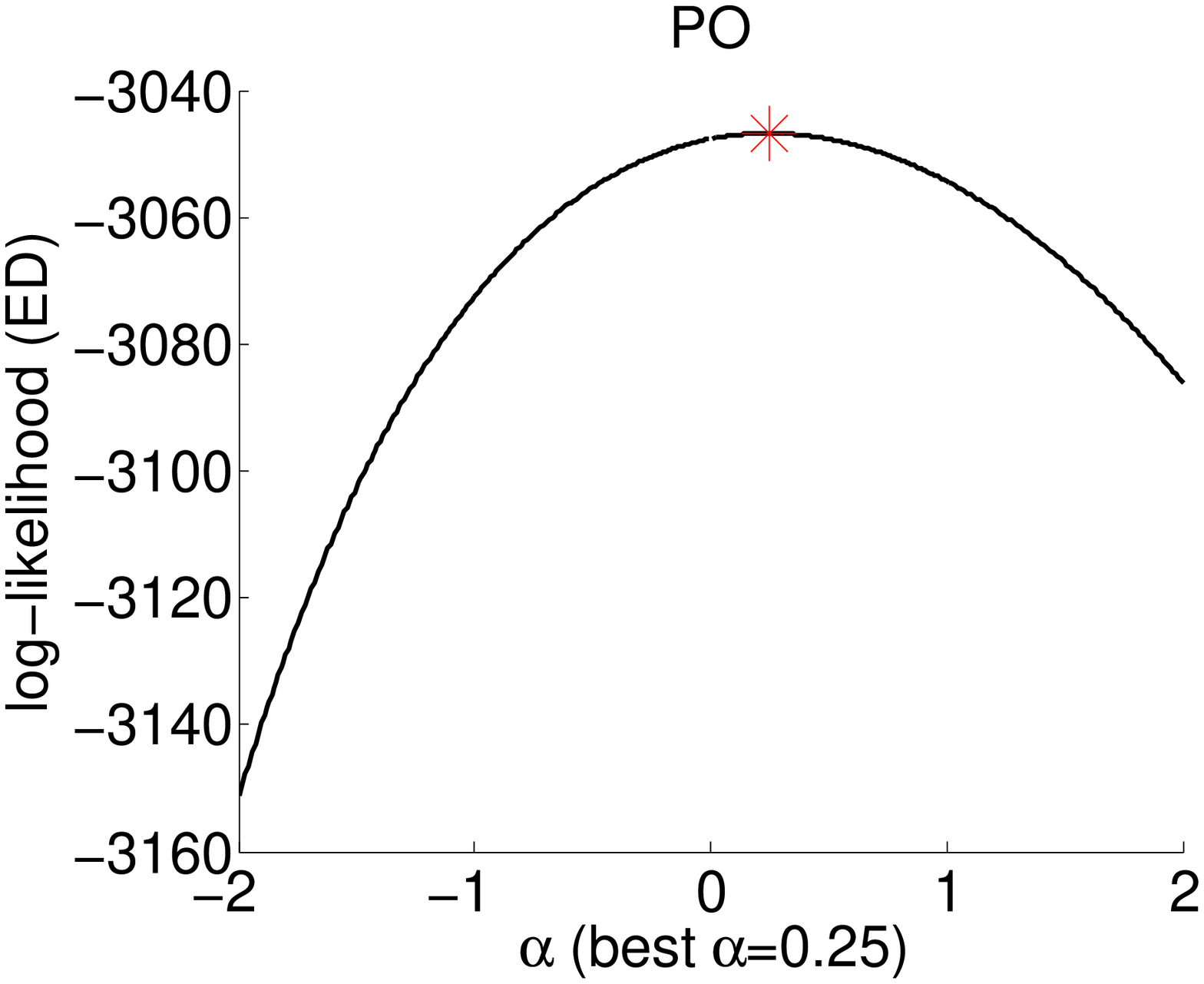,width=4cm}&
  \epsfig{figure=./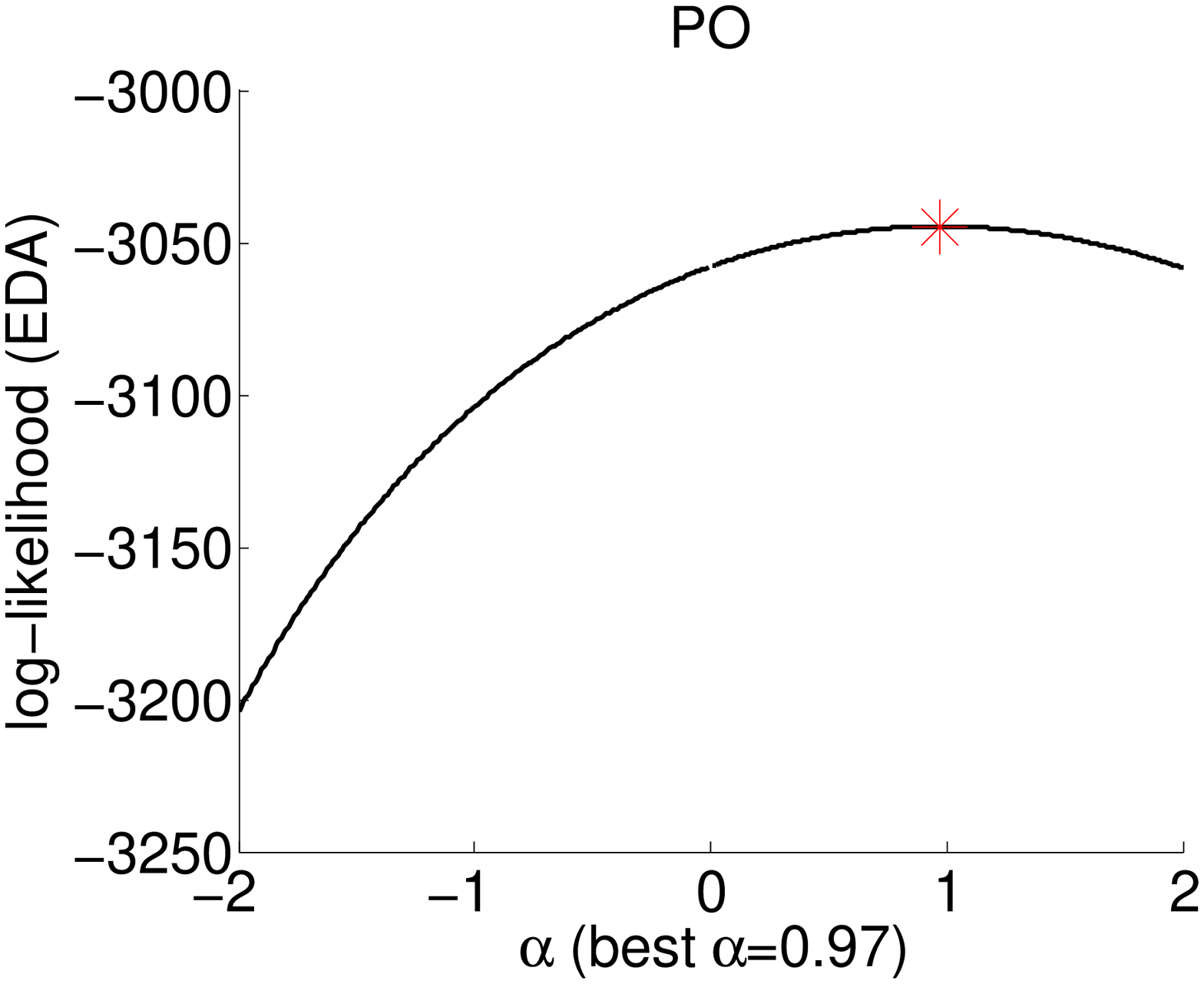,width=4cm}&
  \epsfig{figure=./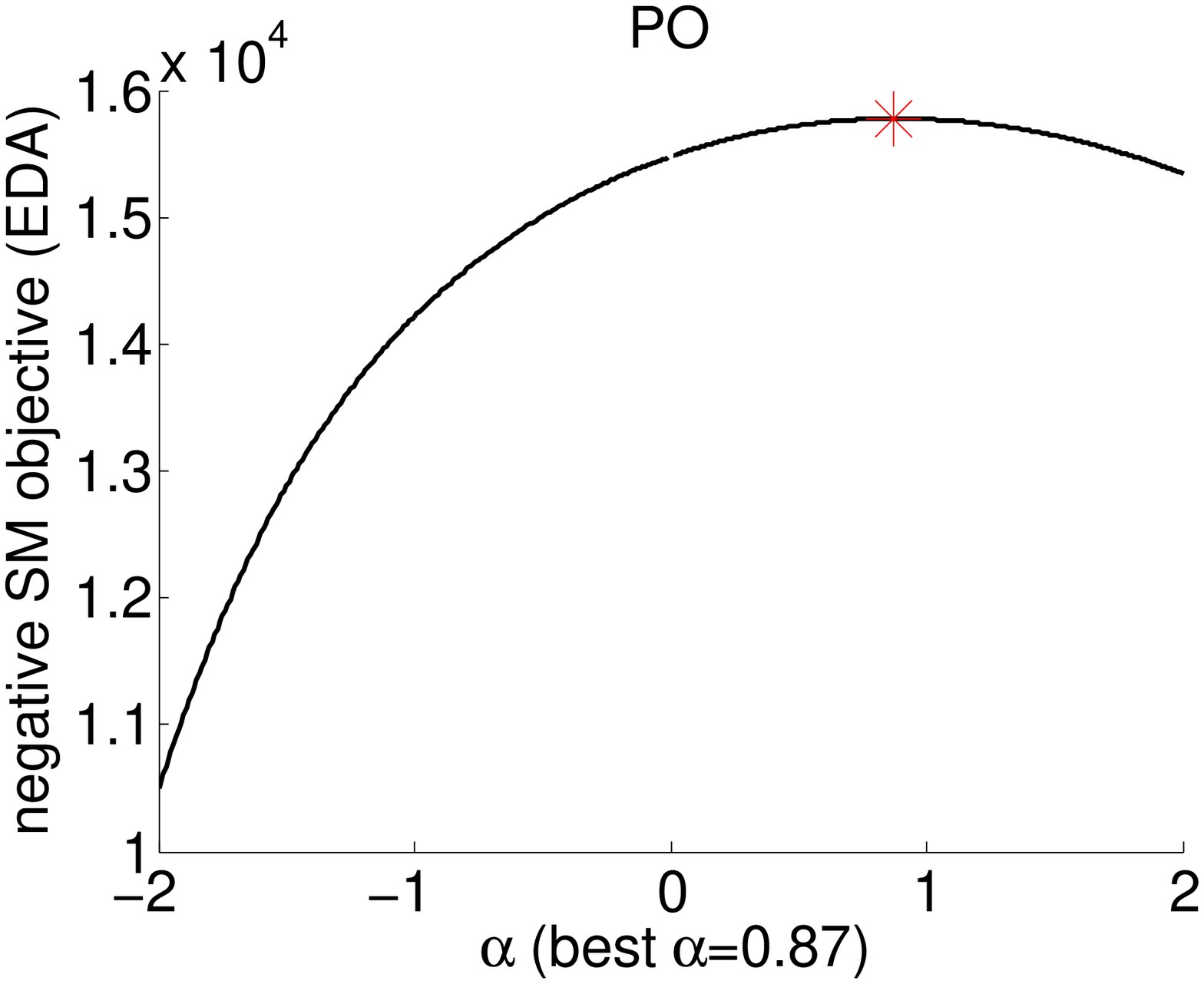,width=4cm}\\
  (a) & (b) & (c) & (d)
\end{tabular}
\end{center}
\caption{Log-likelihood of (a) Tweedie, (b) ED, and (c)
EDA distributions for $\alpha$-selection. In the Tweedie plot, blanks correspond to $\beta=1/\alpha-1$ values for which a Tweedie distribution pdf does not exist or cannot be evaluated, i.e., $\beta\in(0,1)\cup(1,\infty)$. In (d), negative SM objective function values are plotted for EDA.}
\label{fig:logL_vs_alpha}
\end{figure*}
We thus reused the Poisson-distributed  data of the previous
experiments with  the $\beta$-divergence.
In Fig.~\ref{fig:logL_vs_alpha}a, we present the log-likelihood
objective over $\alpha$ obtained with Tweedie distribution (MTL) and the
transformation from Section~\ref{sec:selectalpha}. The ground truth
$\alpha\rightarrow1$ is successfully recovered with MTL. However,
there are no likelihood estimates for $\alpha\in(0.5,1)$,
corresponding to $\beta\in(0,1)$ for which no Tweedie distributions
are defined.  Moreover, to our knowledge there are no studies
concerning the pdf's of Tweedie distributions with $\beta>1$. For that
reason, the likelihood values for $\alpha\in[0,0.5)$ are left blank in
the plot.

It can be seen from Fig.~\ref{fig:logL_vs_alpha}b and \ref{fig:logL_vs_alpha}c, that the augmentation in the MEDAL method also helps in
$\alpha$ selection. Again, both ED and EDA solve most of the
discontinuity problem except $\alpha=0$. Selection using ED fails to
find the ground truth which equals 1, which is however successfully
found by the MEDAL method. SM on EDA recovers the ground truth as well (Fig.~\ref{fig:logL_vs_alpha}d).
%%%%%%%%%%%%%%%%%%%%%%%%%%%%%%%%%%%%%%%%%%%%%%%%
\subsection{Divergence selection in NMF}
\label{sec:expnmf}
The objective in nonnegative matrix factorization (NMF) is to find a
low-rank approximation to the observed data by expressing it as a
product of two nonnegative matrices, i.e., $\matV\approx \Vh = \matW \matH$
with $\matV\in \bbR_+^{F\times N}$ , $\matW\in \bbR_+^{F\times K}$ and $\matH\in
\bbR_+^{K\times N}$. This objective is pursued through the minimization of
an information divergence between the data and the approximation, i.e.,
$D(\matV||\Vh)$. The divergence can be any appropriate one for the
data/application such as $\beta$, $\alpha$, $\gamma$, R\'{e}nyi, etc.
Here, we chose the $\beta$ and $\alpha$ divergences to illustrate the
MEDAL method for realistic data.  

The optimization of $\beta$-NMF was implemented using
the standard multiplicative update
rules~\cite{cichocki09nonnegative,fevotte11algorithms}.
Similar multiplicative update rules are also available for
$\alpha$-NMF~\cite{cichocki09nonnegative}. Alternatively, the
algorithm for $\beta$-NMF can be used for $\alpha$-divergence
minimization as well, using the transformation explained in
Section~\ref{sec:selectalpha}.

%%%%%%%%%%%%%%%%%%%%%%%%%%%%%%%%%%%%%%%%%%%%%%%%
\subsubsection{A Short Piano Excerpt}
\label{sec:expnmfpiano}
We consider the piano data used in~\cite{fevotte09nonnegative}. It is
an audio sequence recorded in real conditions, consisting of four
notes played all together in the first measure and in all possible
pairs in the subsequent measures. A power spectrogram with analysis
window of size 46~ms was computed, leading to $F=513$ frequency bins
and $N=676$ time frames. These make up the data matrix $\matV$, for
which a matrix factorization $\Vh=\matW\matH$ with low rank $K = 6$ is sought for. 

In Fig.~\ref{fig:piano_logLs}a and \ref{fig:piano_logLs}b, we show the
log-likelihood values of the MEDAL method for $\beta$ and $\alpha$,
respectively. For each parameter value $\beta$ and $\alpha$, the
multiplicative algorithm for the respective divergence is run for 100
iterations and likelihoods are evaluated with mean values calculated
from the returned matrix factorizations. For each value of $\beta$ and
$\alpha$, the highest likelihood w.r.t. $\phi$ (see
Eq.~(\ref{eq:MEDAL})) is found by a grid search.

The found maximum likelihood estimate $\beta=-1$ corresponds to Itakura-Saito divergence, which is in harmony with the empirical results presented
in~\cite{fevotte09nonnegative} and the common belief that IS
divergence is most suitable for audio spectrograms. The optimal
$\alpha$ value value was 0.5 corresponding to Hellinger distance. We
can also see that the log likelihood value associated with
$\alpha=0.5$ is still much less than the one for $\beta=-1$. SM also finds $\beta=-1$ as can be seen from Fig.~\ref{fig:piano_logLs}c.
\begin{figure}[t]
\begin{center}
\begin{tabular}{cc}
  \epsfig{figure=./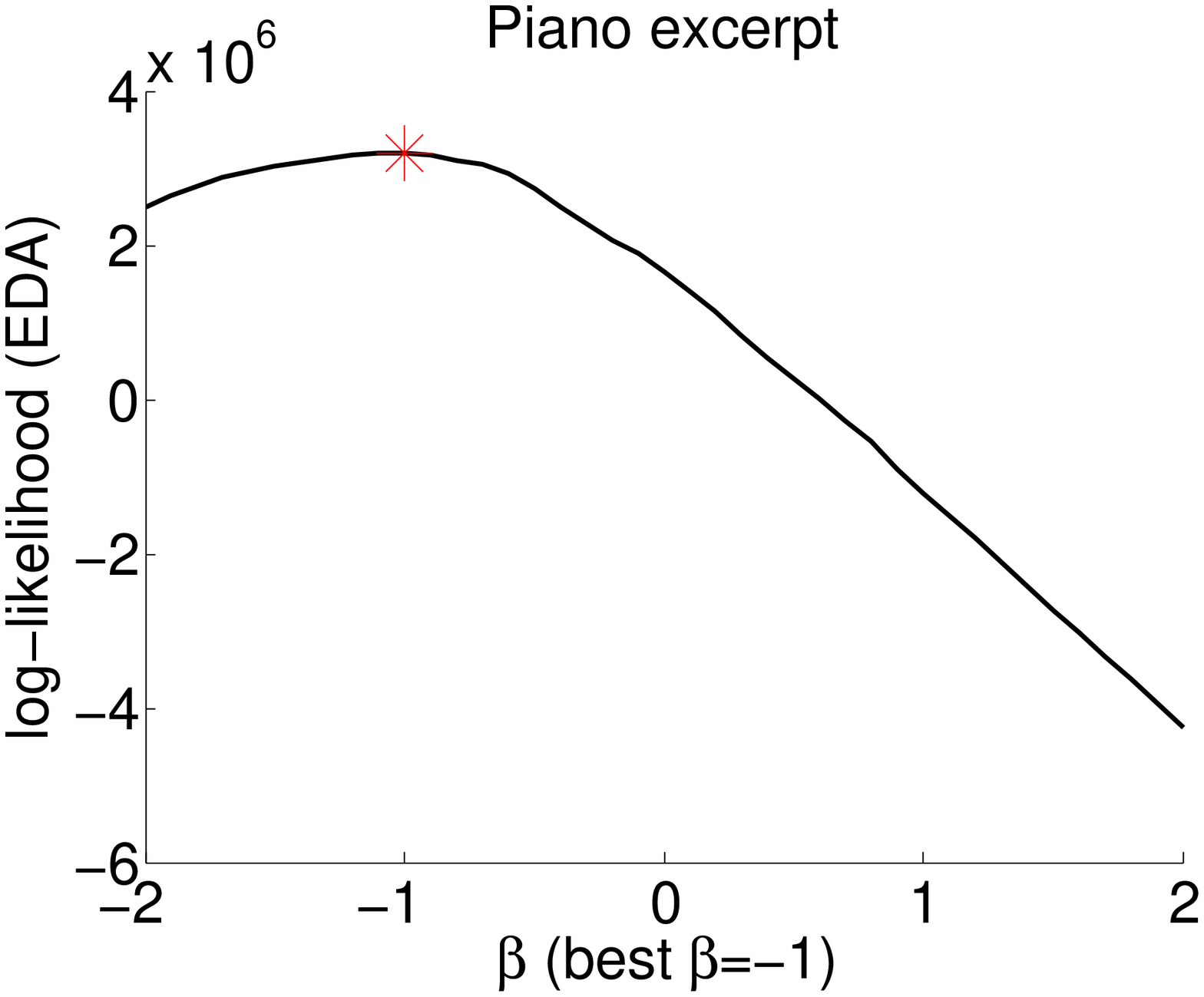,width=4cm}&
  \epsfig{figure=./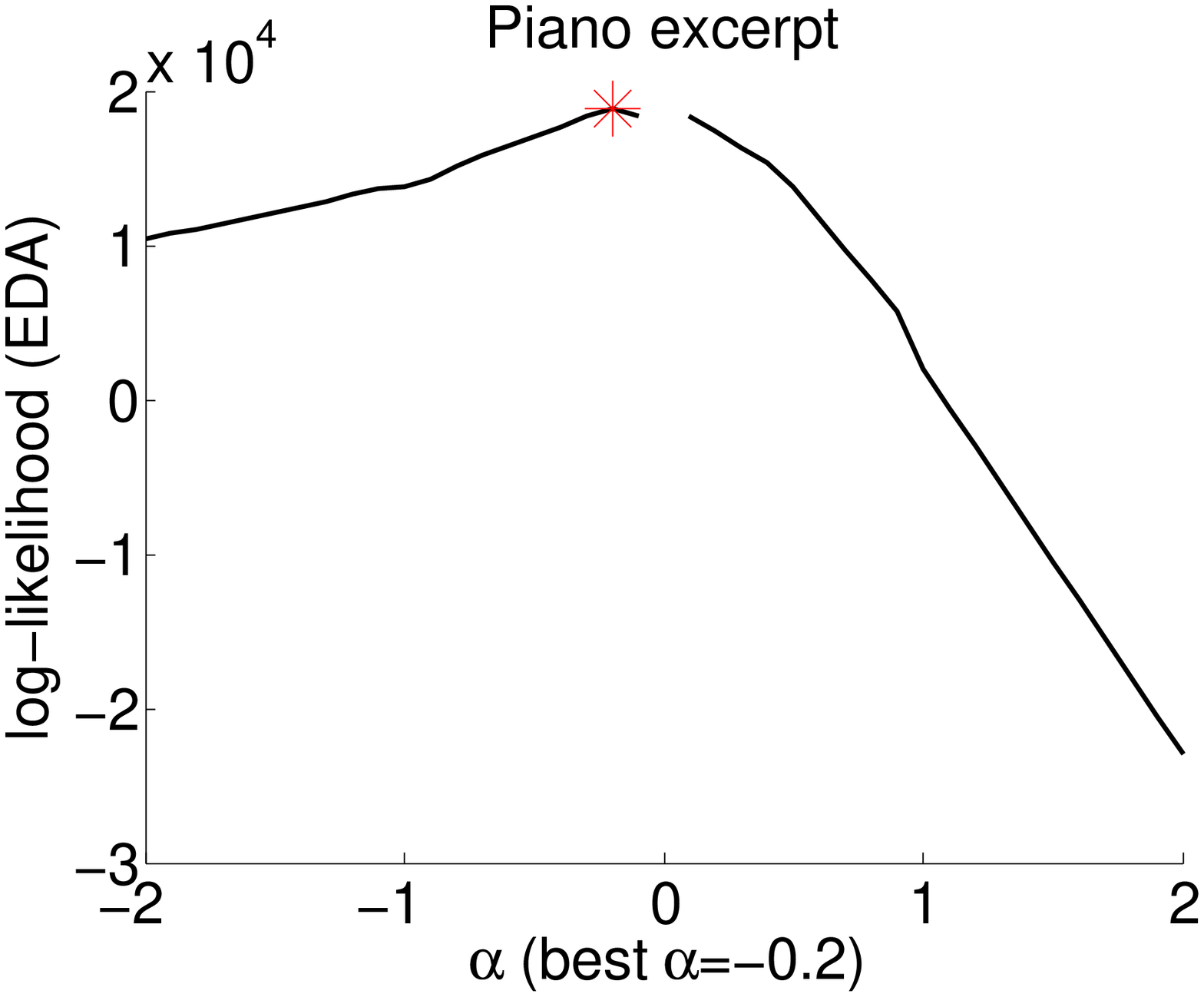,width=4cm}\\
    a) $\beta$ div. & b) $\alpha$ div.\\
  \epsfig{figure=./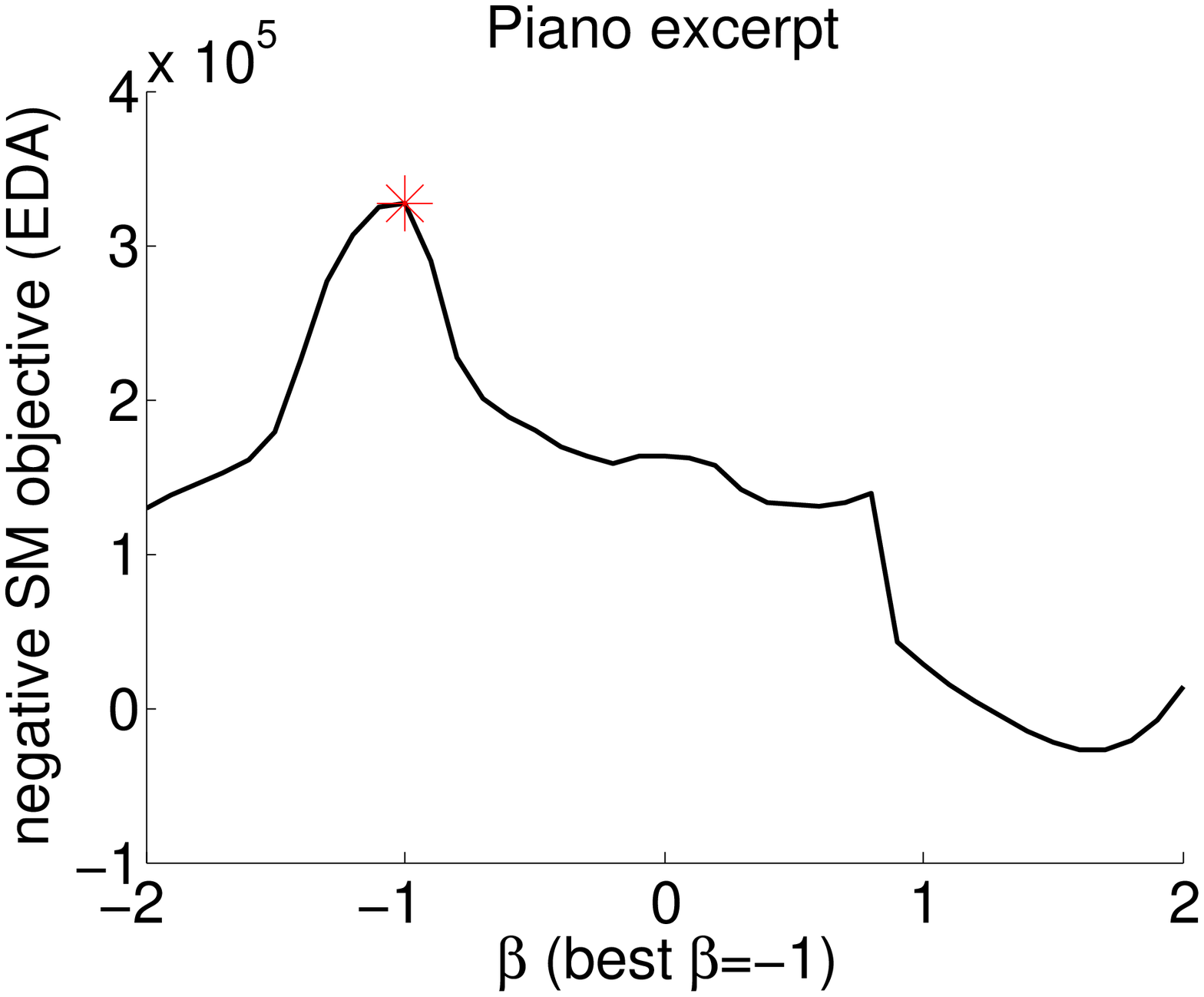,width=4cm}&\\
    c) $\beta$ div. &\\
\end{tabular}
\end{center}
\caption{(a, b) Log likelihood values for $\beta$ and $\alpha$ for the spectrogram of a short piano excerpt with $F=513$, $N=676$, $K=6$. (c) Negative SM objective for $\beta$.}
\label{fig:piano_logLs}
\end{figure}
%%%%%%%%%%%%%%%%%%%%%%%%%%%%%%%%%%%%%%%%%%%%%%%%
\subsection{Stock Prices}
Next, we repeat the same experiment on a stock price dataset which
contains Dow Jones Industrial Average. There are 30 companies included
in the data. They are major American companies from various sectors
such as services (e.g.,\@ Walmart), consumer goods (e.g., General Motors)
and healthcare (e.g.,\@ Pfizer). The data was collected from 3rd
January 2000 to 27th July 2011, in total 2543 trading dates. We
set $K=5$ in NMF and masked 50\% of the data by following \cite{tan2013pami}. The \texttt{stock}
data curves are displayed in Fig.~\ref{fig:stock} (left). 

The EDA likelihood curve with $\beta\in[-2,2]$ is shown in Figure~\ref{fig:stock} (bottom left). We can see that the best divergence selected
by MEDAL is $\beta=0.4$. The
corresponding best $\phi=0.006$. These results are in harmony with the findings of Tan and F\'evotte~\cite{tan2013pami} using the remaining 50\% of the data as validation set, where they found that $\beta \in [0, 0.5]$ (mind that our
$\beta$ values equal theirs minus one)
performs well for a large range of $\phi$'s. Differently, our method is
more advantageous because we do not need additional criteria nor data for
validations. In Figure~\ref{fig:stock} (bottom right), negative SM objective function is plotted for $\beta\in[-2,2]$. With SM, the optimal $\beta$ is found to be 1.
%The EDA likelihood curve with $\beta\in[-2,2]$ is shown in Figure
%\ref{fig:stock} (right). We can see that the best divergence selected
%by MEDAL is $\beta=1$ (i.e.\@ squared Euclidean distance). The
%corresponding best $\phi=0.457$. These results are very close to those
%found by Tan and F\'evotte~\cite{tan2013pami} using validations, where they found
%the same best divergence when $K=5$ and $\phi=0.1$ (mind that our
%$\beta$ values equal theirs minus one). Differently, our method is
%more advantageous because we do not need additional criteria for
%validations.

\begin{figure}[t]
\begin{center}
\center{
  \includegraphics[width=0.23\textwidth]{./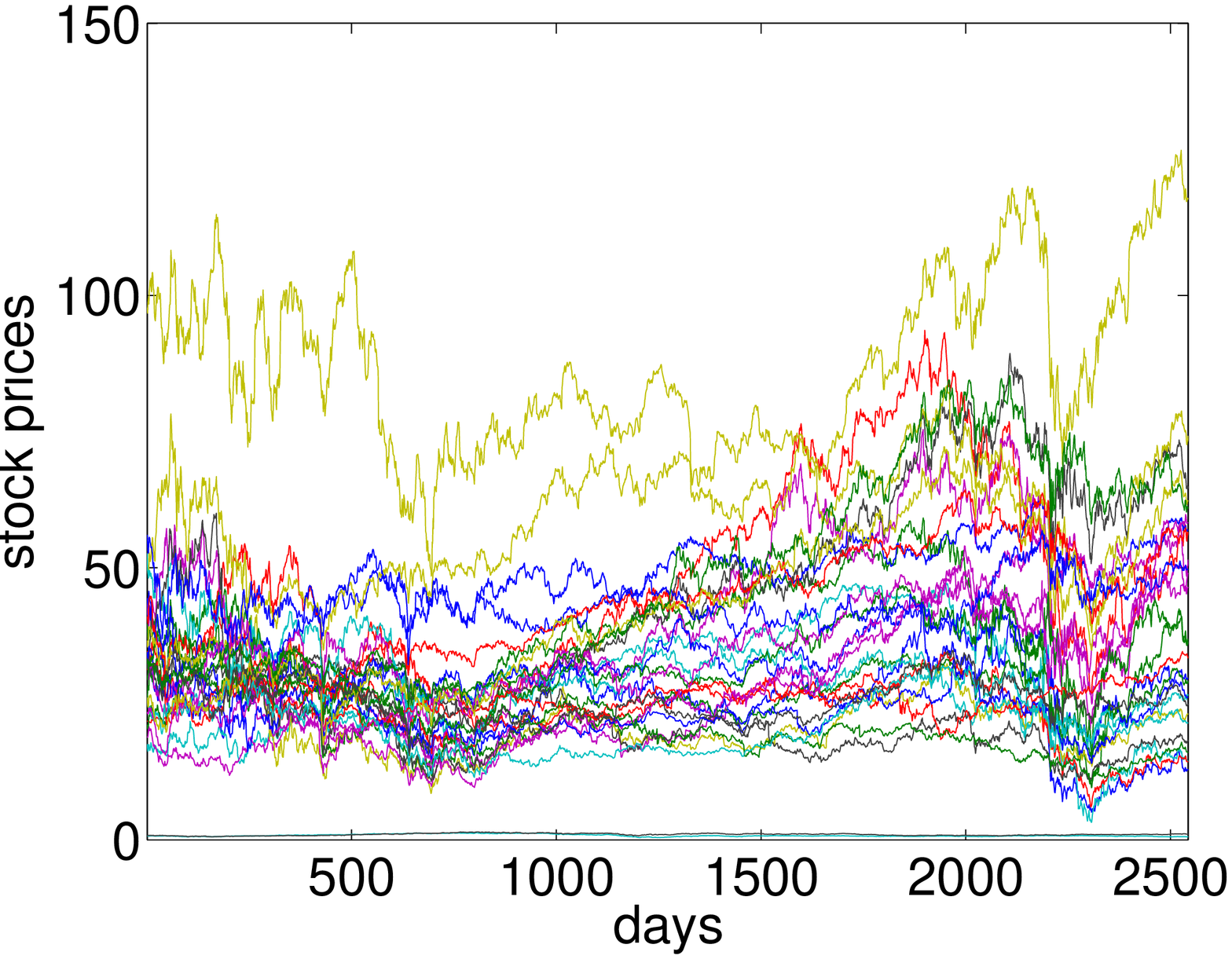}}\\
  \includegraphics[width=0.23\textwidth]{./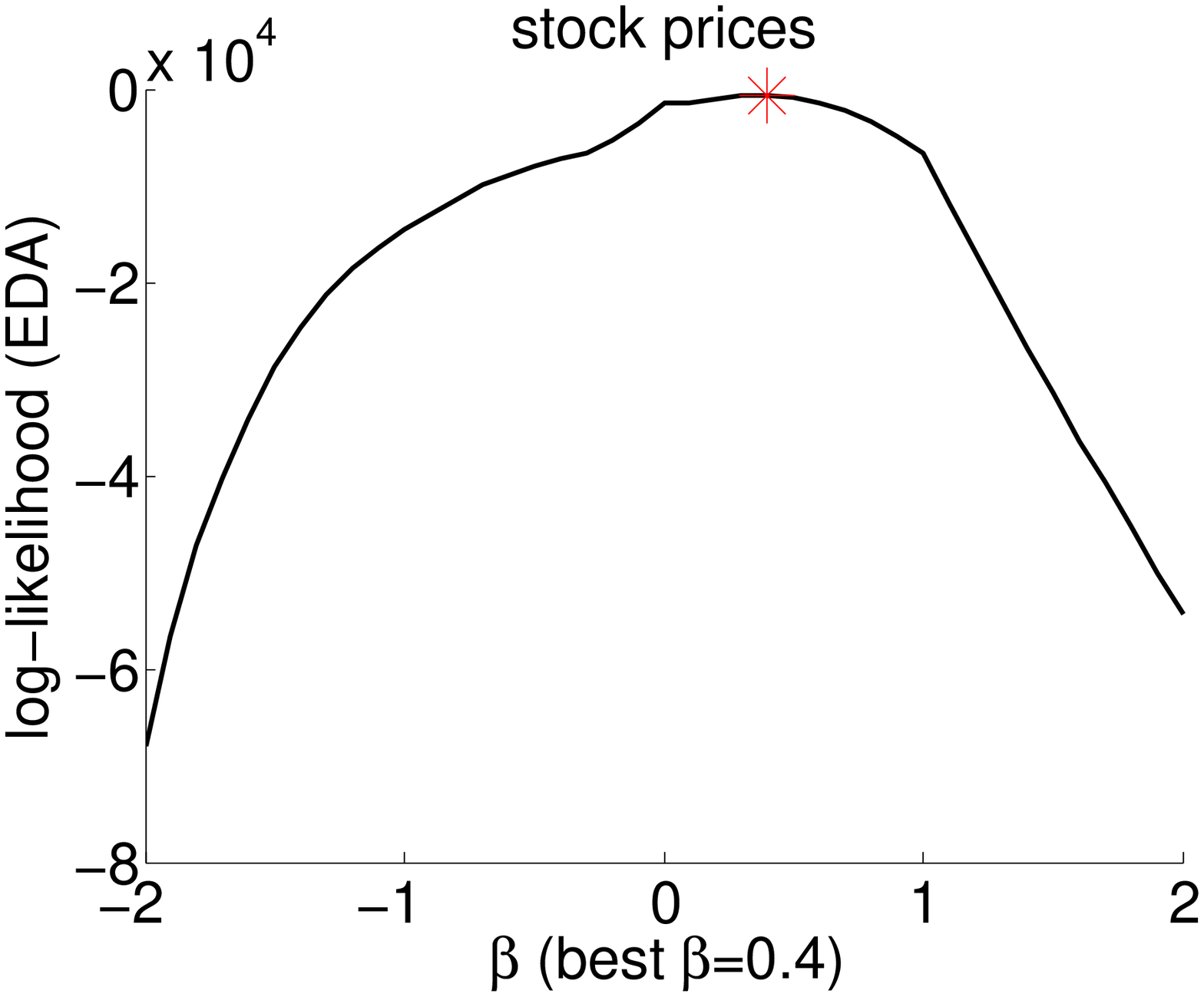}
  \includegraphics[width=0.23\textwidth]{./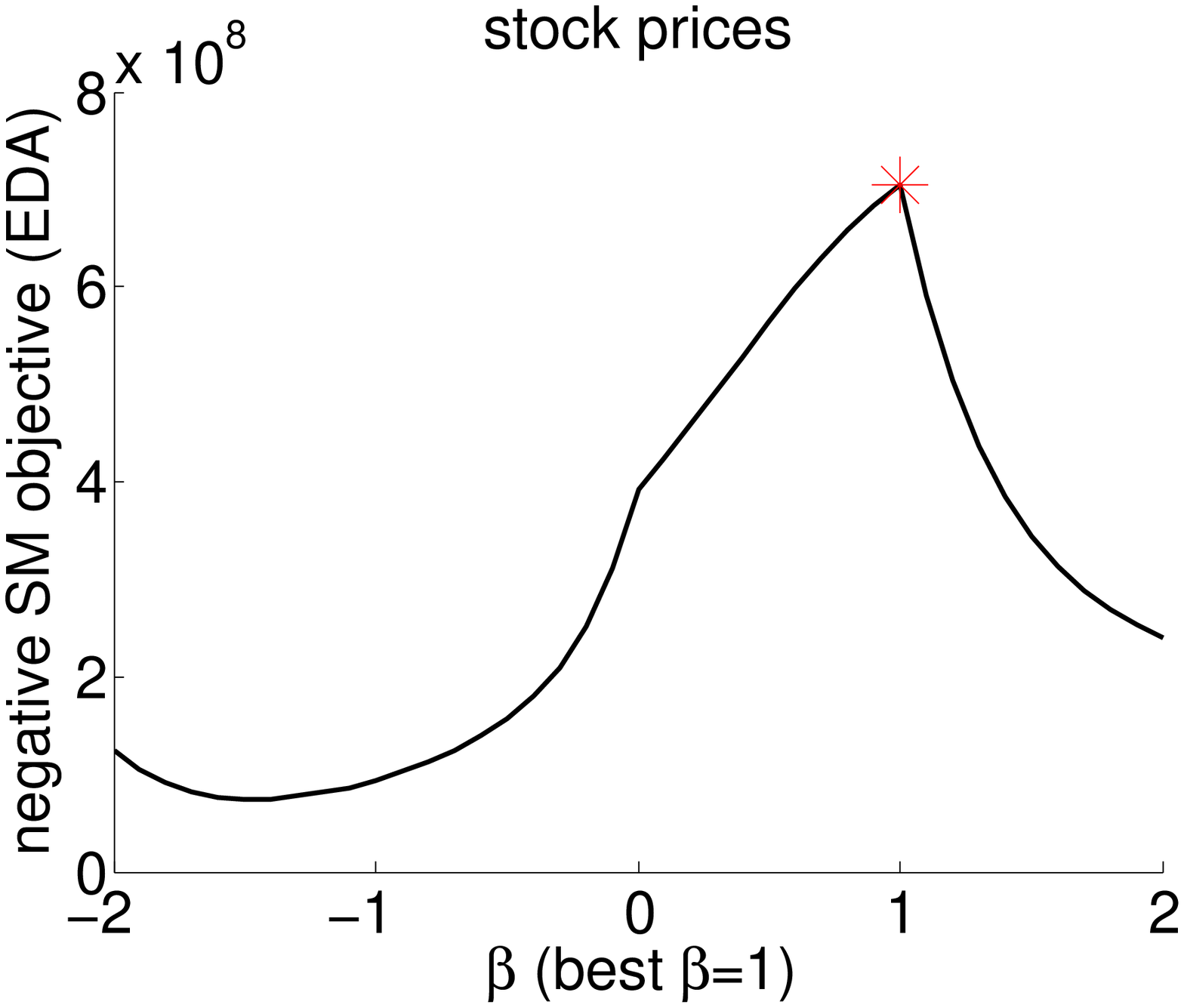}
\end{center}
\caption{Top: the \texttt{stock} data. Bottom left: the EDA log-likelihood
for $\beta\in[-2,2]$. Bottom right: negative SM objective function
for $\beta\in[-2,2]$.}
\label{fig:stock}
\end{figure}

%%%%%%%%%%%%%%%%%%%%%%%%%%%%%%%%%%%%%%%%%%%%%%%%

\subsection{Selecting $\gamma$-divergence}
\label{sec:expgamma}
In this section we demonstrate that the proposed method can be applied
to applications beyond NMF and to non-separable divergence families. To
our knowledge, no other existing methods can handle these two cases.

\subsubsection{Multinomial data}
\label{sec:expgammamn}
We first exemplify $\gamma$-divergence selection for synthetic
data drawn from a multinomial distribution. We generated a
1000-dimensional stochastic vector $\vecp$ from the uniform distribution.
Next we drew $\vecx\sim\text{Multinomial}(n,\vecp)$ with $n=10^7$. The
MEDAL method is applied to find the best $\gamma$-divergence for
the approximation of $\vecx$ by $\vecp$.

Fig.~\ref{fig:gamma} (1st row, left) shows the MEDAL log-likelihood.
The peak appears when $\gamma=0$, which indicates that the normalized
KL-divergence is the most suitable one among the $\gamma$-divergence
family. Selection using score matching of EDA gives the best $\gamma$
also close to zero (Fig.~\ref{fig:gamma} 1st row, right). The result
is expected, because the maximum likelihood estimator of $\vecp$ in
multinomial distribution is equivalent to minimizing the KL-divergence
over $\vecp$. Our finding also justifies the usage of KL-divergence in
topic models with the multinomial distribution
\cite{plsi,blei2001lda}.

\subsubsection{Projective NMF}
\label{sec:expgammapnmf}
Next we apply the MEDAL method to Projective Nonnegative Matrix
Factorization (PNMF) \cite{yuan05pnmf,TNN2010} based on
$\gamma$-divergence \cite{gammadiv,ICANN2011ROZ}. Given a nonnegative
matrix $\matV\in\bbR_+^{F\times N}$, PNMF seeks a low-rank nonnegative
matrix $\matW\in\bbR_+^{F\times K}$ ($K<F$) that minimizes
$D_\gamma\left(\matV||\widehat{\matV}\right)$, where
$\widehat{\matV}=\matW\matW^T\matV$. PNMF is able to produce a highly
orthogonal $\matW$ and thus finds its applications in part-based
feature extraction and clustering analysis, etc. Different from
conventional NMF (or linear NMF) where each factorizing matrix only
appears once in the approximation, the matrix $\matW$ occurs twice in
$\widehat{\matV}$.  Thus it is a special case of Quadratic Nonnegative
Matrix Factorization (QNMF) \cite{QNMF}.

We choose PNMF for two reasons: 1) we demonstrate the MEDAL
performance on QNMF besides the linear NMF already shown in Section
\ref{sec:expnmf}; 2) PNMF contains only one variable matrix in
learning, without the issue of how to interleave the updates of
different variable matrices.

We first tested MEDAL on a synthetic dataset. We generated a diagonal
blockwise data matrix $\matV$ of size $50\times30$, where two blocks
are of sizes $30\times20$ and $20\times10$. The block entries are
uniformly drawn from $[0,10]$. We then added uniform noise from $[0,1]$
to the all matrix entries.
For each $\gamma$, we ran the multiplicative algorithm of PNMF by Yang
and Oja \cite{TNN2010,TNN2011ROZ} to obtain $\matW$ and $\widehat{\matV}$. The MEDAL
method was then applied to select the best $\gamma$.
The resulting approximated log-likelihood for $\gamma\in[-2,2]$ is
shown in Fig.~\ref{fig:gamma} (2nd row). We can see MEDAL and score
matching of EDA give similar results, where the best $\gamma$ appear
at $-0.76$ and $-0.8$, respectively. Both resulting $W$'s give perfect
clustering accuracy of data rows.

We also tested MEDAL on the \texttt{swimmer} dataset
\cite{swimmerdata} which is popularly used in the NMF field. Some
example images from this dataset are shown in Fig.~\ref{fig:swimmer}
(left). We vectorized each image in the dataset as a column and
concatenated the columns into a $1024\times256$ data matrix $\matV$.
This matrix is then fed to PNMF and MEDAL as in the case for the
synthetic dataset. Here we empirically set the rank to $K=17$
according to Tan and F\'evotte \cite{vincent2009ardnmf} and Yang et
al.  \cite{LVA2010}.  The matrix $\matW$ was initialized by PNMF based
on Euclidean distance to avoid poor local minima. The resulting
approximated log-likelihood for $\gamma\in[-1,3]$ is shown in Figure
\ref{fig:gamma} (3rd row, left).  We can see a peak appearing around
$1.7$. Zooming in the region near the peak shows the best
$\gamma=1.69$. The score matching objective over $\gamma$ values
(Fig.~\ref{fig:gamma} 3rd row, right) shows a similar peak and the
best $\gamma$ very close to the one given by MEDAL. Both methods
result in excellent and nearly identical basis matrix ($\matW$) of the
data, where the swimmer body as well as four limbs at four angles are
clearly identified (see Fig.~\ref{fig:swimmer} bottom row).

\begin{figure}[t]
\begin{center}
\includegraphics[width=0.26\textwidth]{./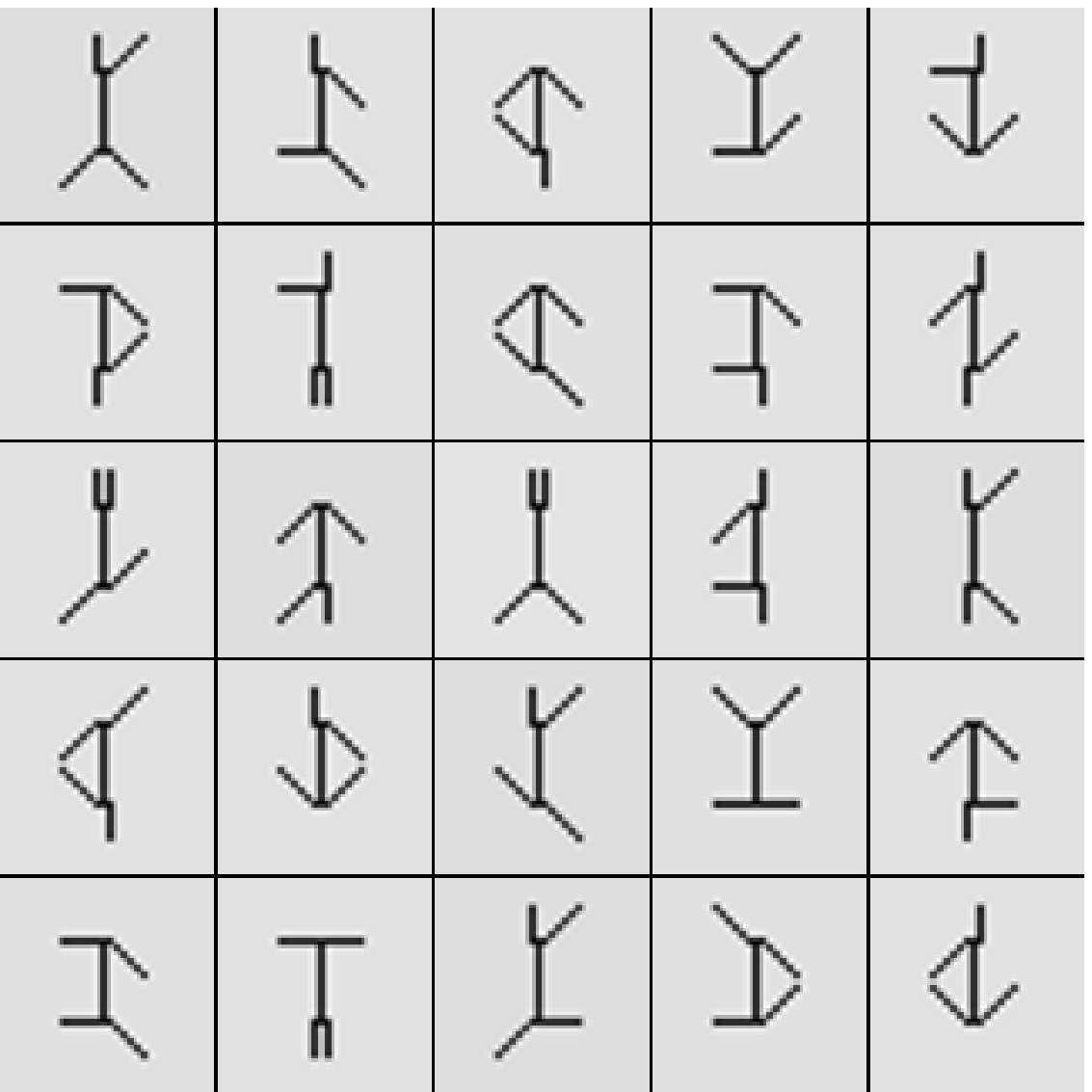}\\
\vspace{2mm}
\includegraphics[width=0.21\textwidth]{./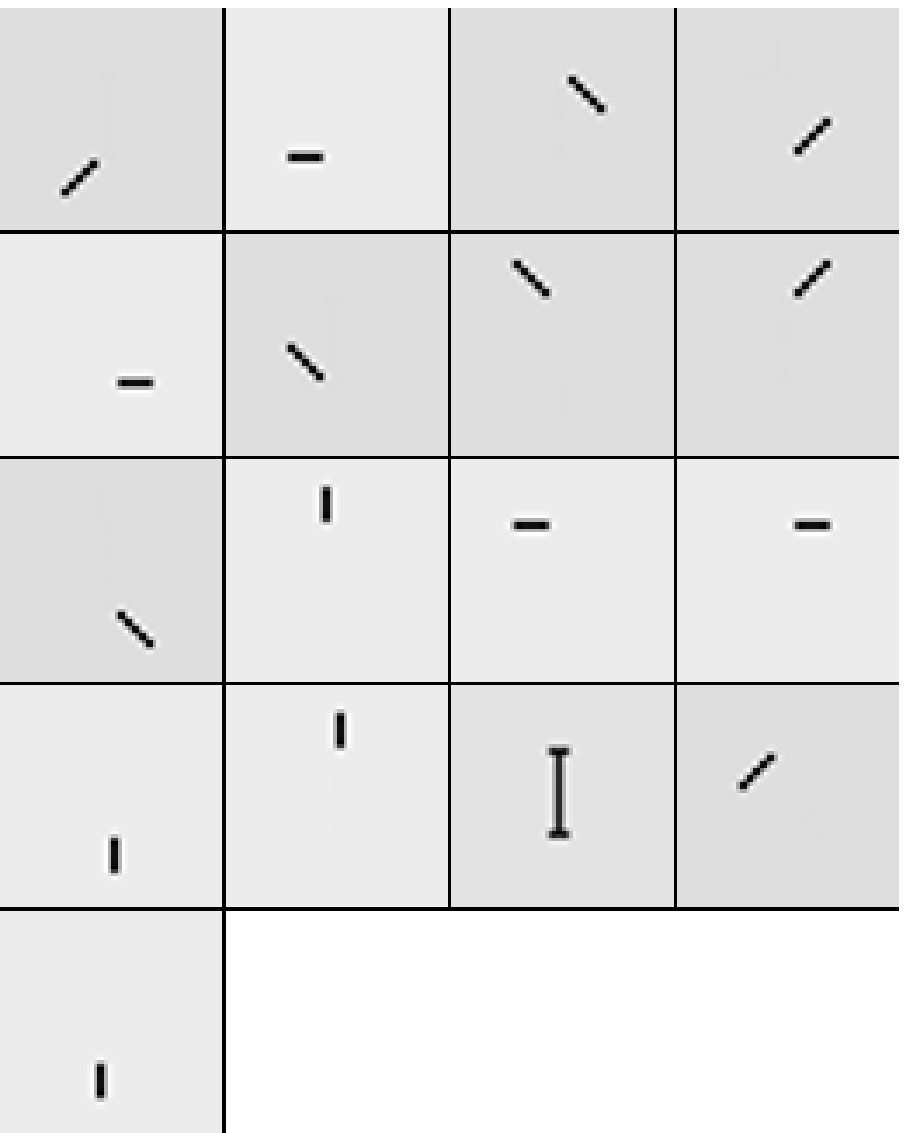}~
\includegraphics[width=0.21\textwidth]{./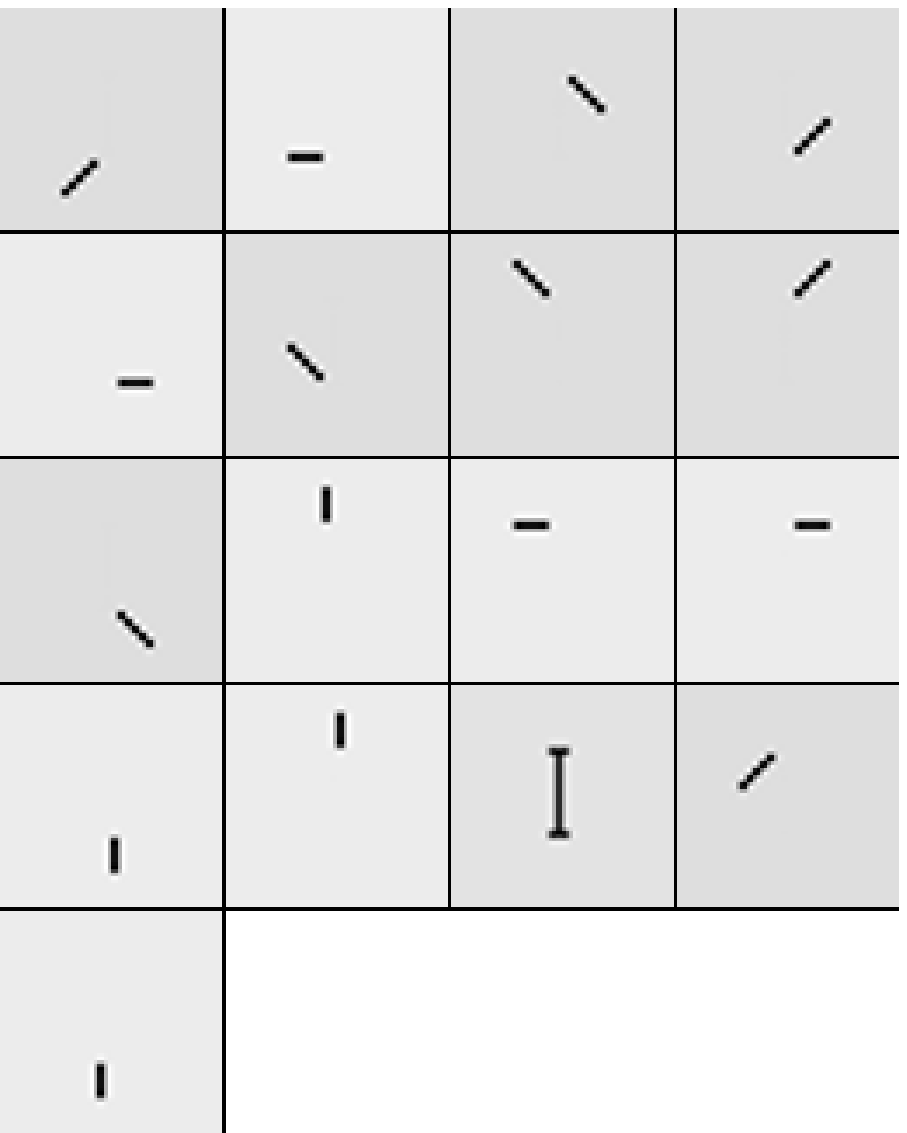}
\end{center}
\caption{Swimmer dataset: (top) example images; (bottom) the best PNMF
basis ($\matW$) selected by using (bottom left) MEDAL and (bottom
right) score matching of EDA. The visualization reshapes each column
of $\matW$ to an image and displays it by the Matlab function
\emph{imagesc}.}
\label{fig:swimmer}
\end{figure}

\subsubsection{Symmetric Stochastic Neighbor Embedding}
\label{sec:expgammassne}
Finally, we show an application beyond NMF, where MEDAL is used to
find the best $\gamma$-divergence for the visualization using
Symmetric Stochastic Neighbor Embedding (s-SNE)
\cite{hinton2002sne,maaten2008tsne}.

Suppose there are $n$ multivariate data samples
$\left\{\vecx_i\right\}_{i=1}^n$ with $\vecx_i\in\bbR^D$ and their
pairwise similarities are represented by an $n\times n$ symmetric
nonnegative matrix $\matP$ where $P_{ii}=0$ and $\sum_{ij}P_{ij}=1$.
The s-SNE visualization seeks a low-dimensional embedding
$\matY=\left[\vecy_1,\vecy_2,\dots,\vecy_n\right]^T\in\bbR^{n\times
d}$ such that pairwise similarities in the embedding approximate those
in the original space.  Generally $d=2$ or $d=3$ for easy visualization. Denote
$q_{ij}=q(\|\vecy_i-\vecy_j\|^2)$ with a certain kernel function $q$,
for example $q_{ij}=\left(1+\|\vecy_i-\vecy_j\|^2\right)^{-1}$. The
pairwise similarities in the embedding are then given by
$Q_{ij}=q_{ij}/\sum_{kl:k\neq l}q_{kl}$. The s-SNE target is that
$\matQ$ is as close to $\matP$ as possible. To measure the
dissimilarity between $\matP$ and $\matQ$, the conventional s-SNE uses
the Kullback-Leibler divergence $D_\text{KL}(\matP||\matQ)$. 
Here we generalize s-SNE to
the whole family of $\gamma$-divergences as dissimilarity measures and select the
best divergence by our MEDAL method.

We have used a real-world \texttt{dolphins} dataset\footnote{available
at \url{http://www-personal.umich.edu/~mejn/netdata/}}. It is the
adjacency matrix of the undirected social network between 62 dolphins.
We smoothed the matrix by PageRank random walk in order to find its
macro structures. The smoothed matrix was then fed to s-SNE based on
$\gamma$-divergence, with $\gamma\in[-2,2]$. The EDA log-likelihood is
shown in Fig.~\ref{fig:gamma} (4th row, left). By the MEDAL
principle the best divergence is $\gamma=-0.6$ for s-SNE and the
\texttt{dolphins} dataset. Score matching of EDA also indicates the
best $\gamma$ is smaller than 0. The resulting visualizations created
by s-SNE with the respective best $gamma$-divergence are shown in
Fig.~\ref{fig:dolphins}, where the node layouts by both methods are
very similar. In both visualizations we can clearly see two
dolphin communities.

\begin{figure}[t]
\begin{center}
\begin{tabular}{cc}
\includegraphics[width=0.23\textwidth]{./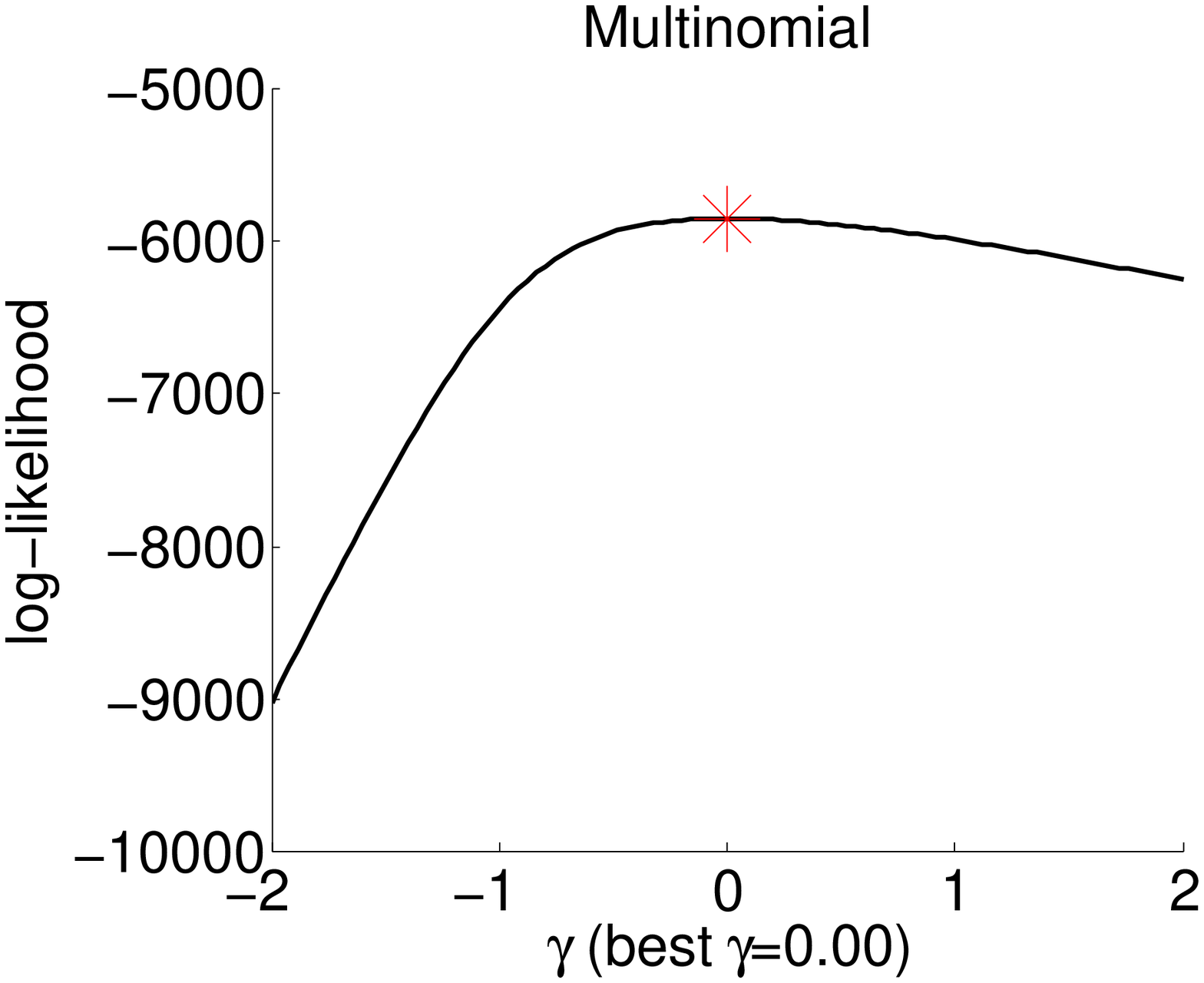}&
\includegraphics[width=0.23\textwidth]{./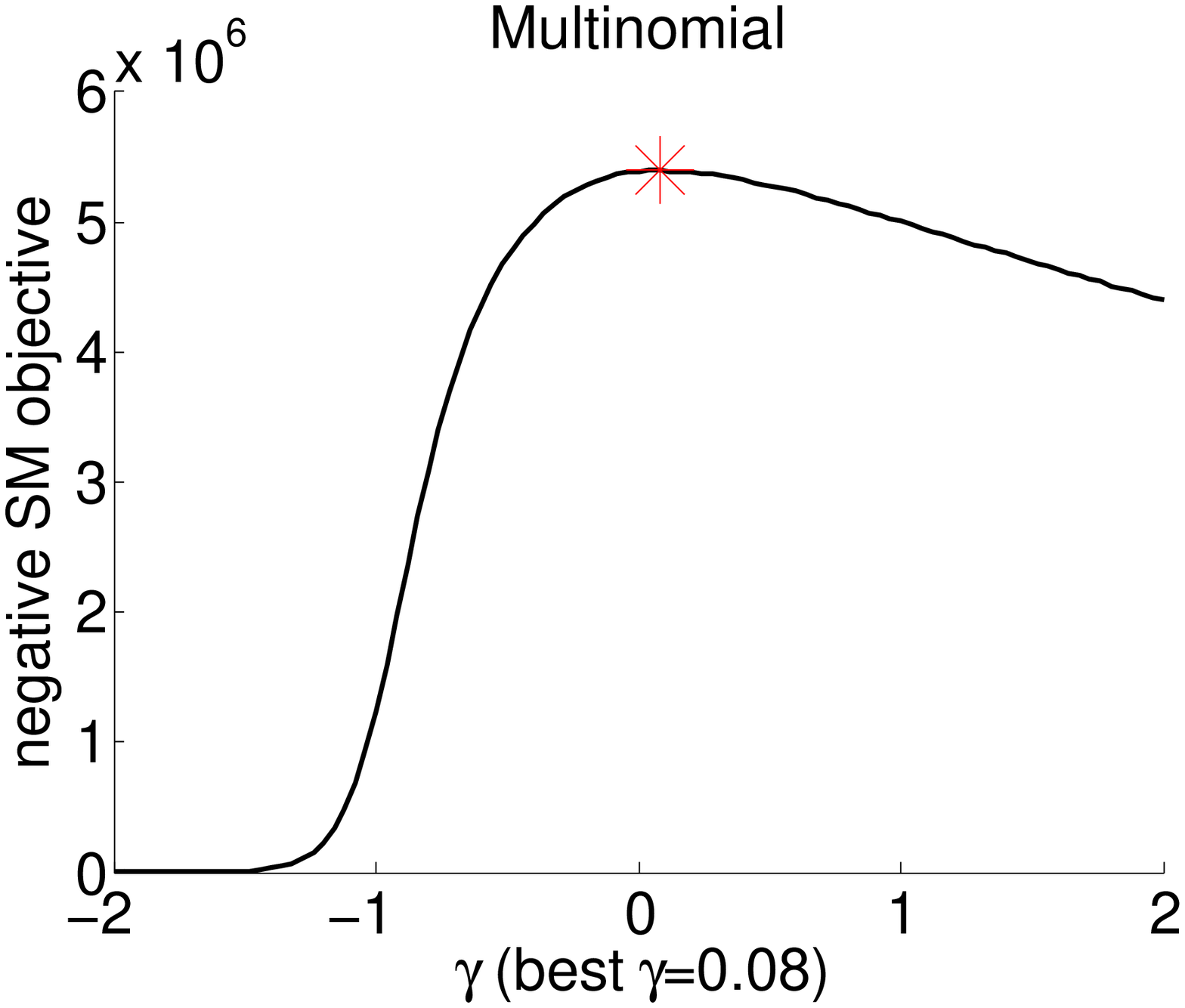}\\
\includegraphics[width=0.23\textwidth]{./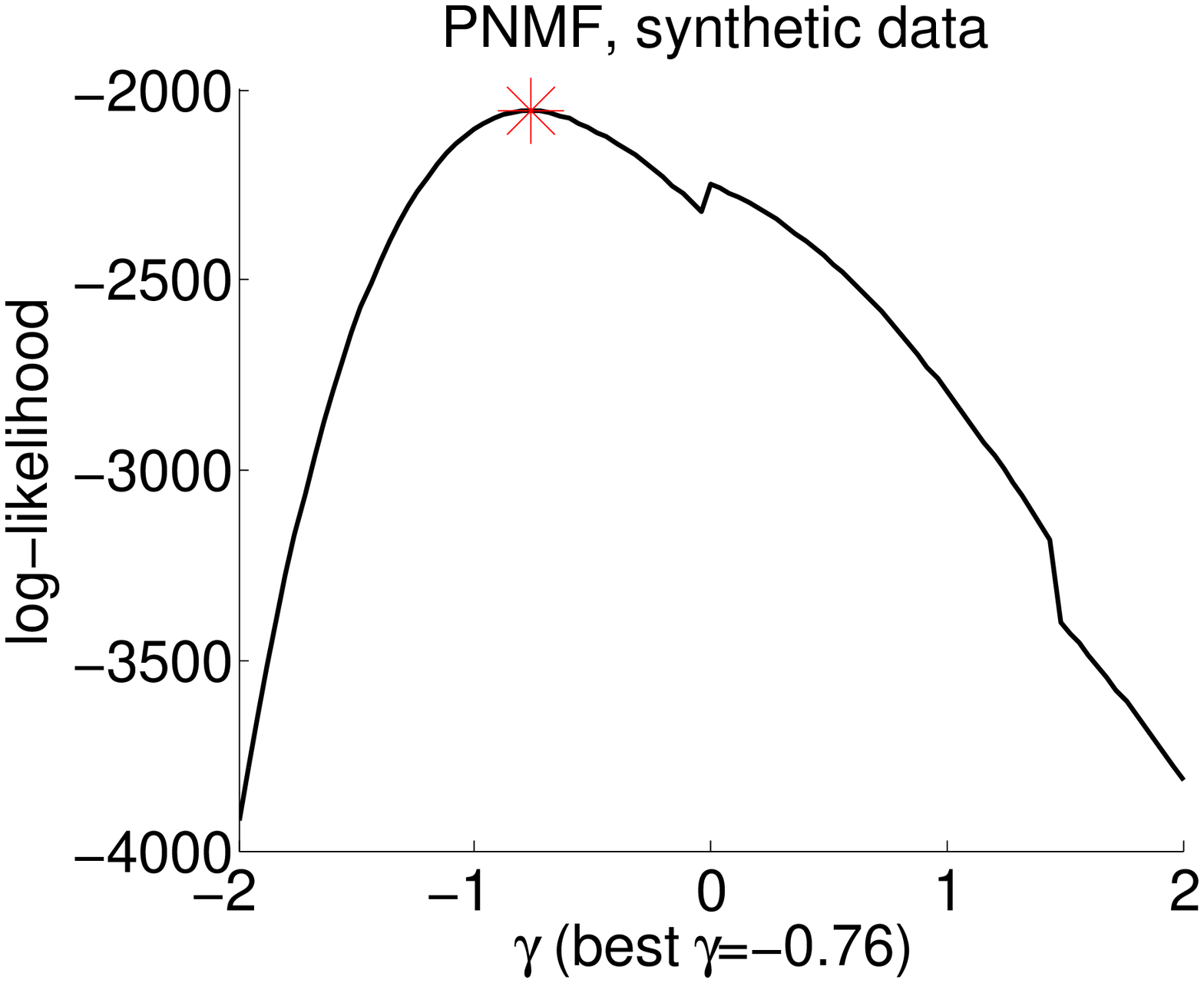}&
\includegraphics[width=0.23\textwidth]{./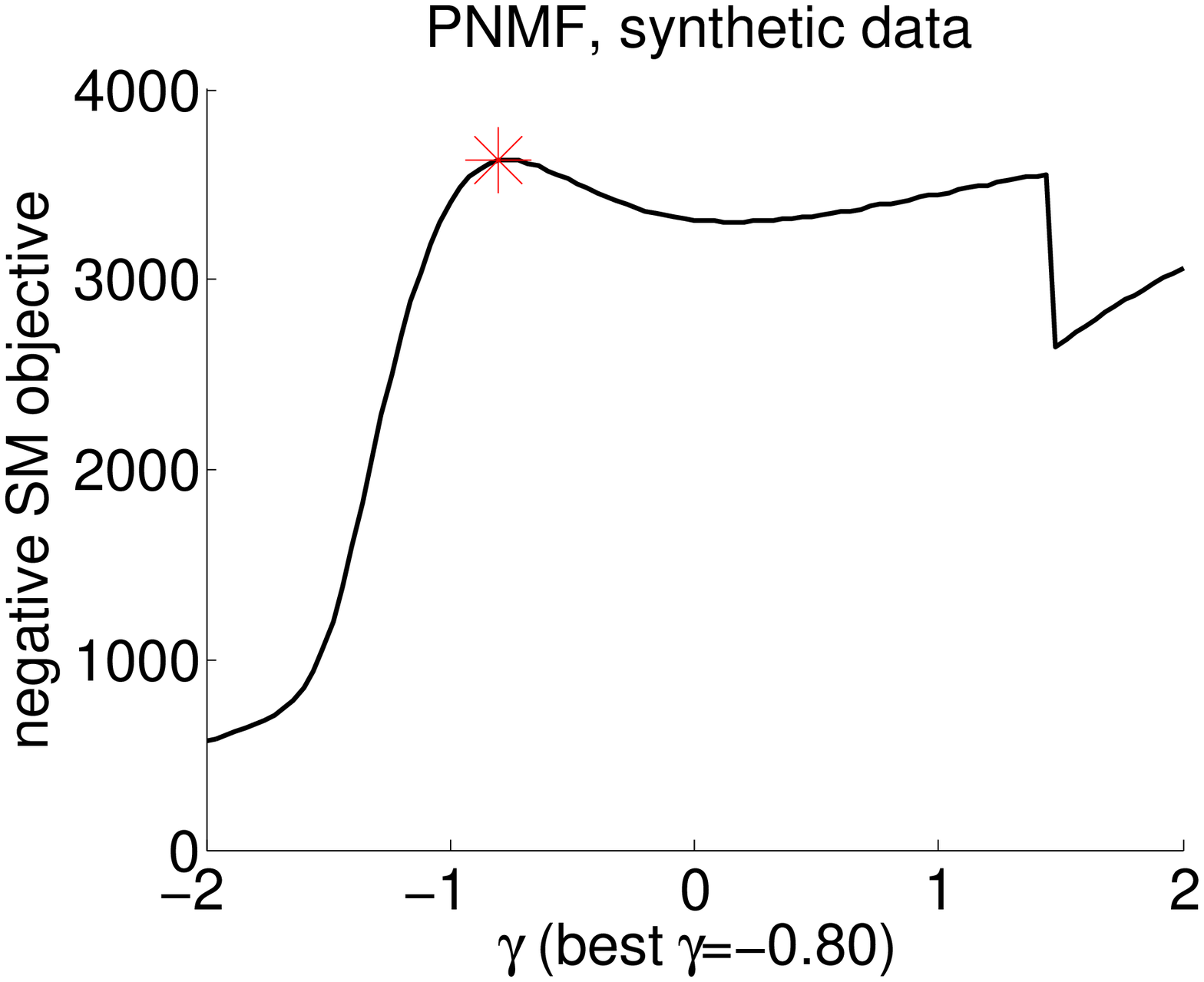}\\
\includegraphics[width=0.23\textwidth]{./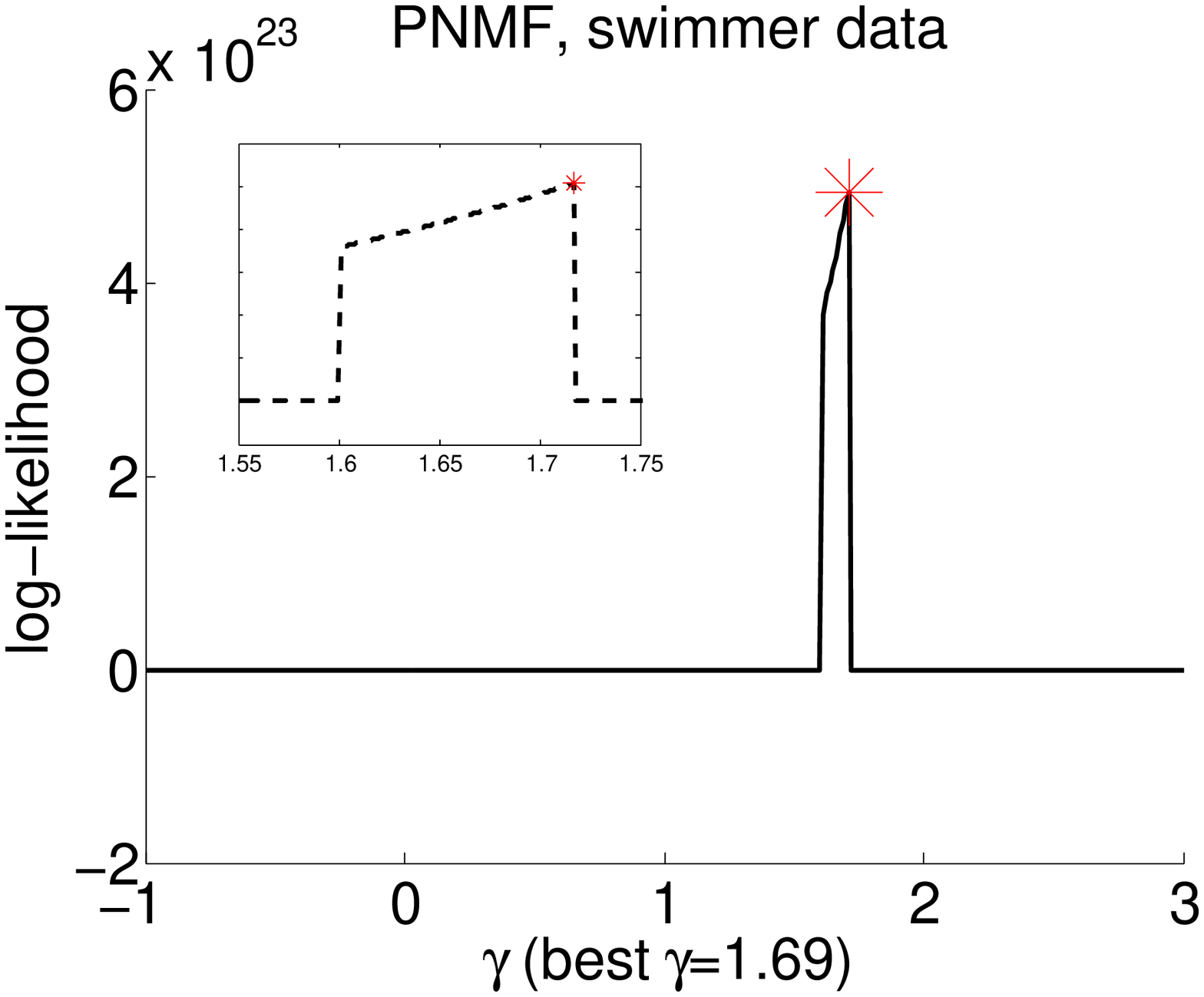}&
\includegraphics[width=0.23\textwidth]{./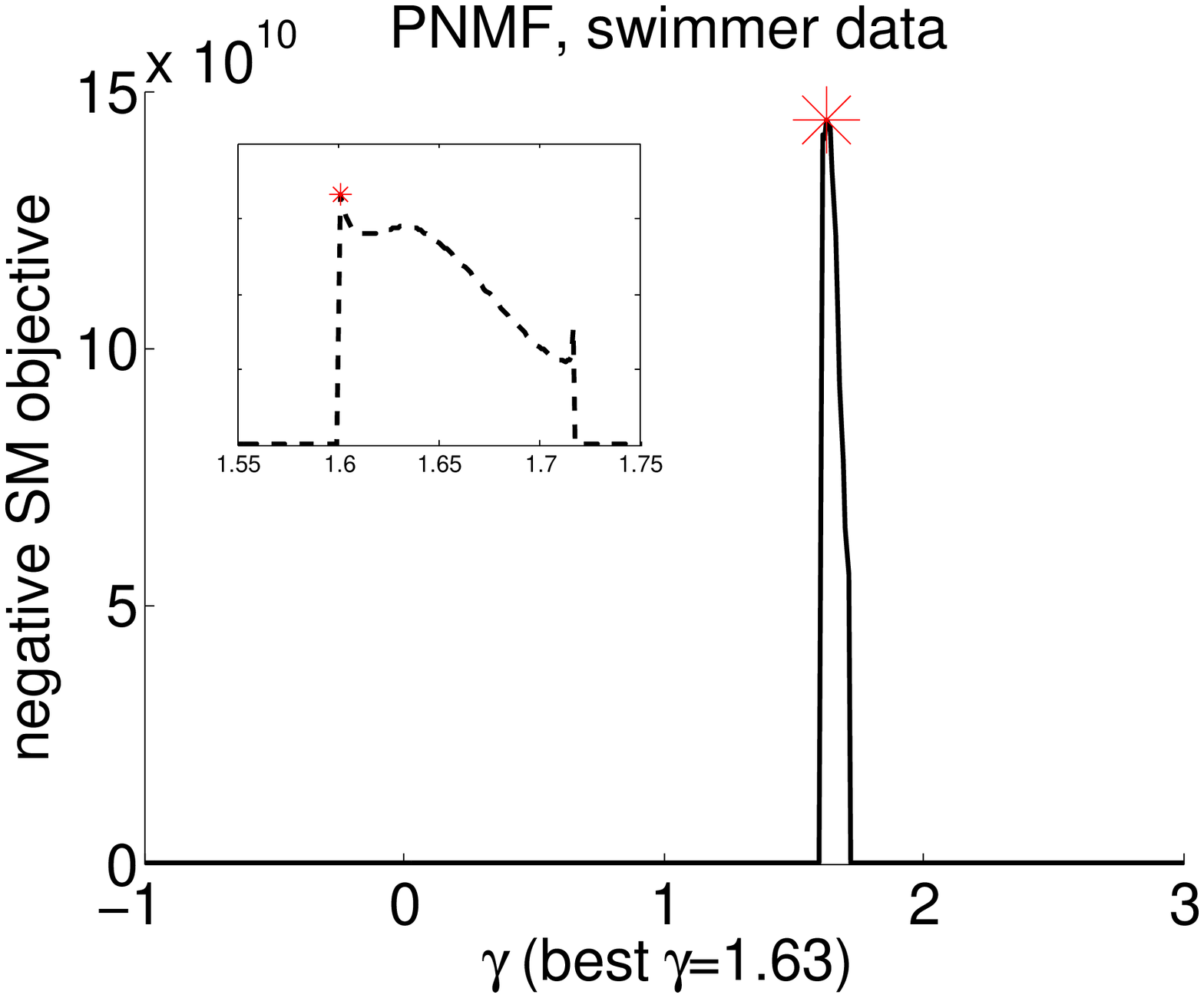}\\
\includegraphics[width=0.23\textwidth]{./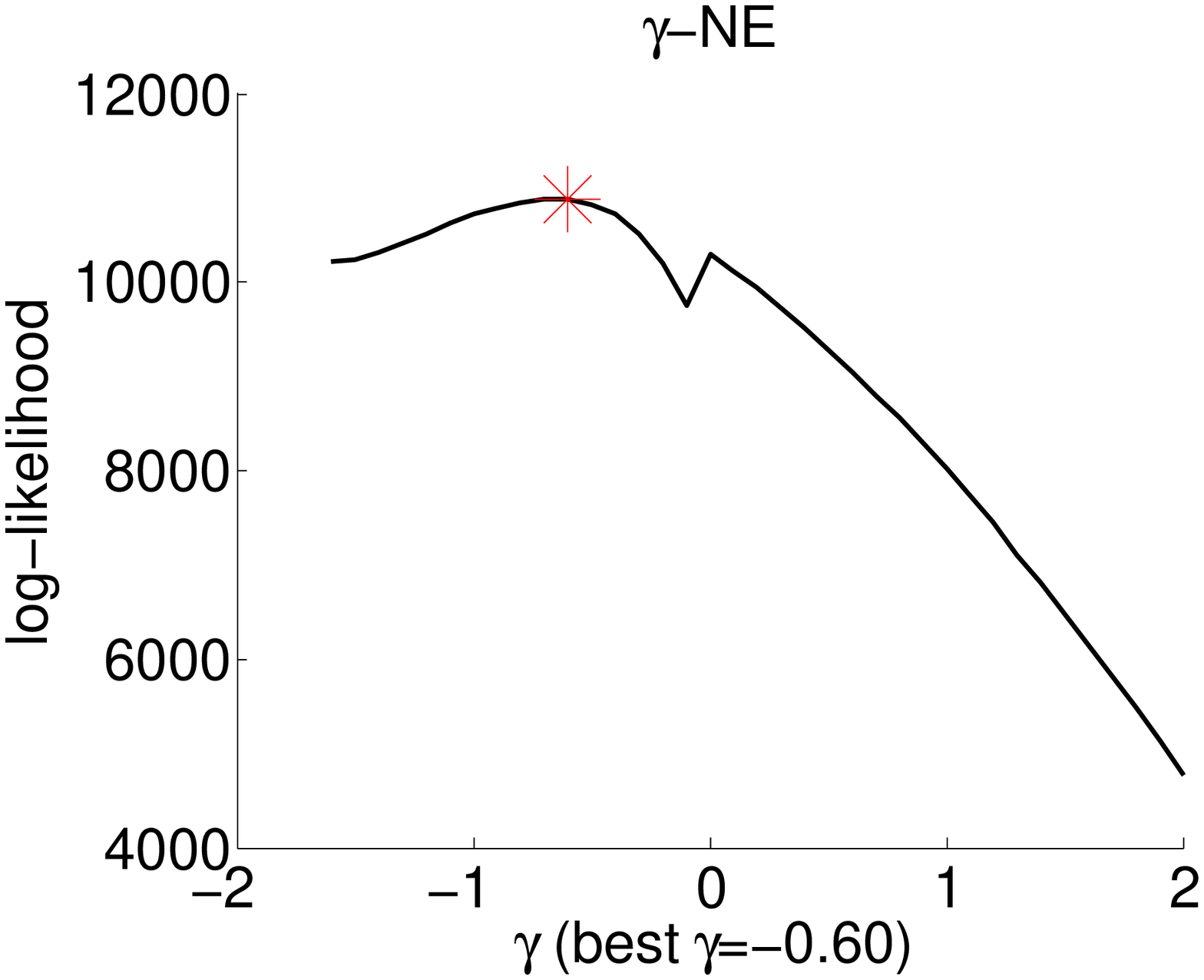}&
\includegraphics[width=0.23\textwidth]{./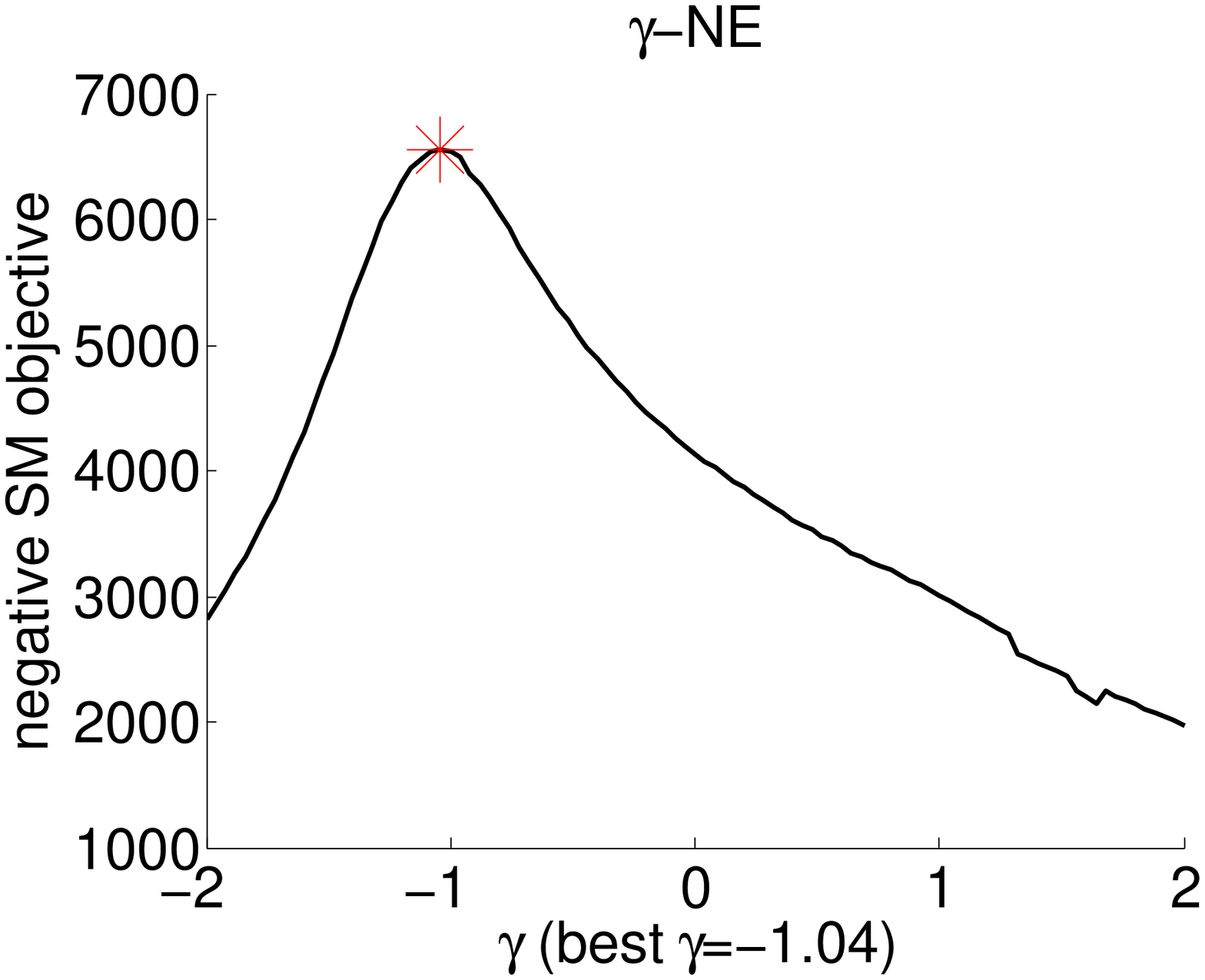}
\end{tabular}
\end{center}
\caption{Selecting the best $\gamma$-divergence: (1st row) for
multinomial data, (2nd row) in PNMF for synthetic data, (3rd row) in
PNMF for the \texttt{swimmer} dataset, and (4th row) in s-SNE for the
\texttt{dolphins} dataset; (left column) using MEDAL and (right
column) using score matching of EDA. The red star highlights the peak
and the small subfigures in each plot shows the zoom-in around the
peak. The sub-figures in the 3rd row zoom in the area near the peaks.}
\label{fig:gamma}
\end{figure}

\begin{figure}[t]
\begin{center}
\includegraphics[width=0.4\textwidth]{./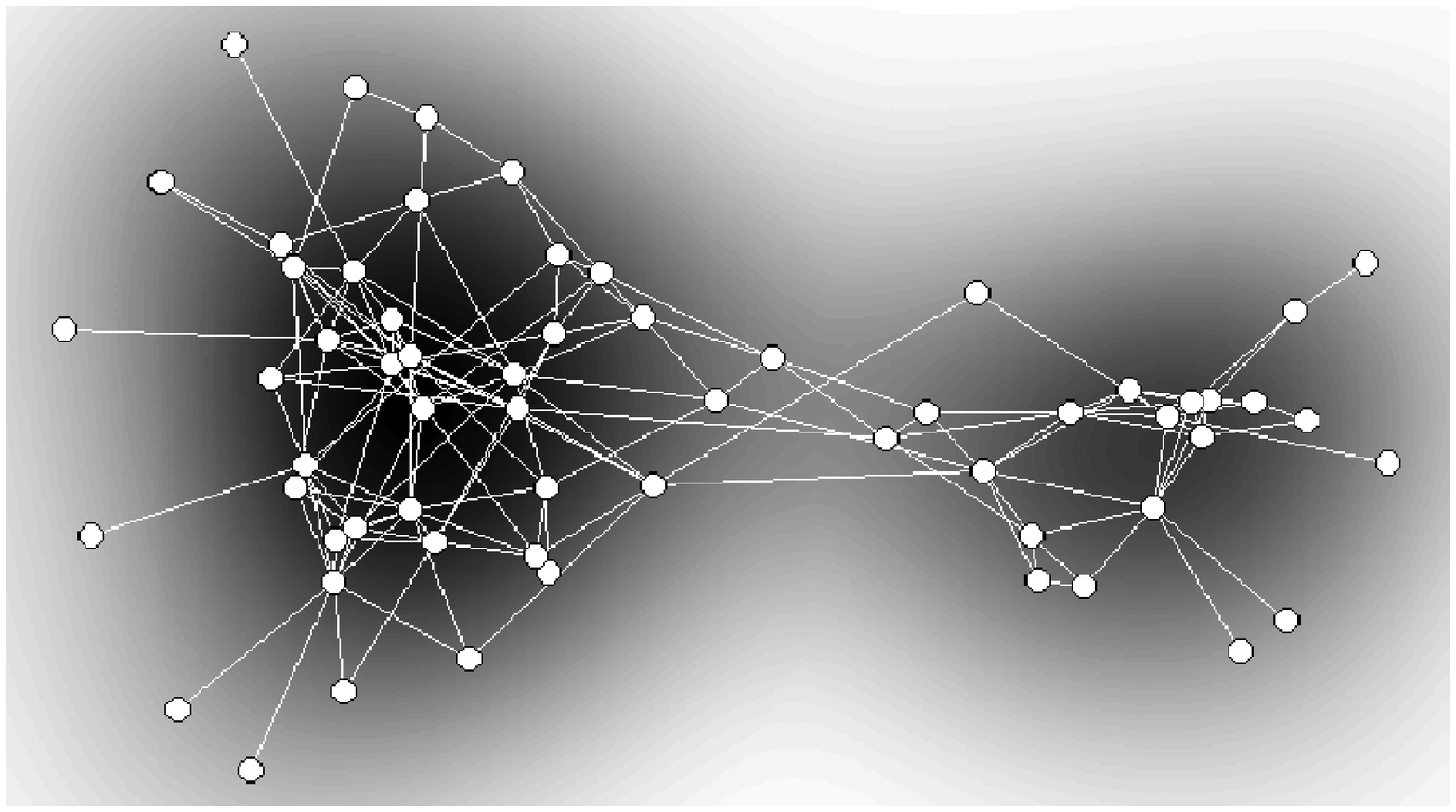}\\
\vspace{2mm}
\includegraphics[width=0.4\textwidth]{./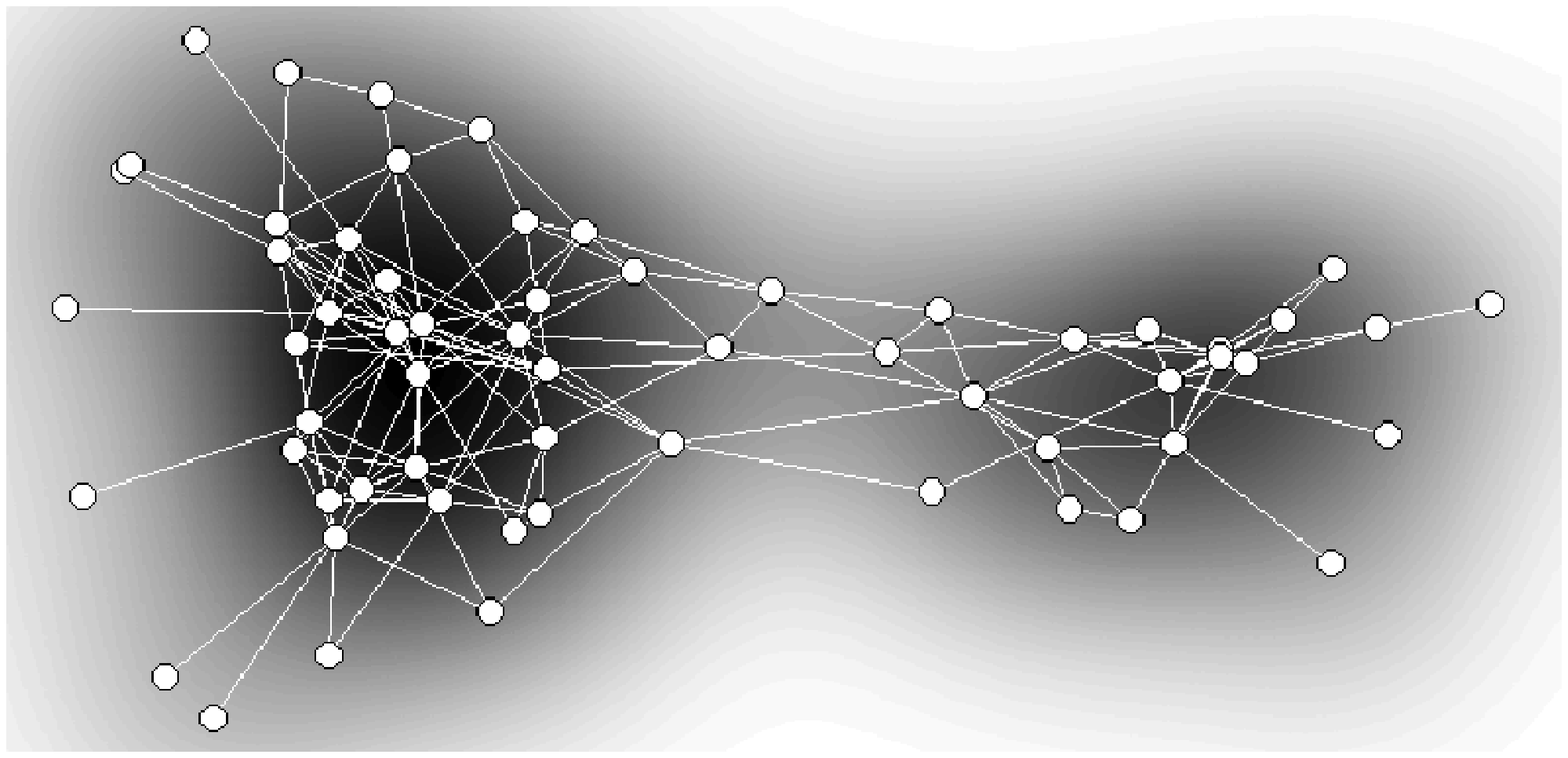}
\end{center}
\caption{Visualization of the \texttt{dolphins} social network with
the best $\gamma$ using (top) MEDAL and (bottom) score matching of EDA.
Dolphins and their social connections are shown by circles and lines,
respectively. The background illustrates the node density by the
Parzen method \cite{parzen}.}
\label{fig:dolphins}
\end{figure}

%%%%%%%%%%%%%%%%%%%%%%%%%%%%%%%%%%%%%%%%%%%%%%%%
\section{Conclusions}
\label{sec:conclusions}
We have presented a new method called MEDAL to automatically select the best
information divergence in a parametric family. Our selection method is
built upon a statistical learning approach, where the divergence is
learned as the result of standard density parameter estimation. Maximizing
the likelihood of the Tweedie distribution is a straightforward way for
selecting $\beta$-divergence, which however has some shortcomings.
We have proposed a novel distribution, the Exponential Divergence with
Augmentation (EDA),  which overcomes these shortcomings and
thus can give a more robust selection for the parameter over a wider range. The new method
has been extended to $\alpha$-divergence selection by a nonlinear
transformation. Furthermore, we have provided new results that connect
the $\gamma$- and $\beta$-divergences, which enable us to extend the selection
method to non-separable cases. The extension also holds for R\'enyi
divergence with similar relationship to $\alpha$-divergence. As a
result, our method can be applied to most commonly used information
divergences in learning.

We have performed extensive experiments to show the accuracy and
applicability of the new method. Comparison on synthetic data has
illustrated that our method is superior to Maximum Tweedie Likelihood,
i.e., it finds the ground truth as accurately as MTL, while being 
defined on all values of $\beta$ and being less prone to numerical
problems (no abrupt changes in the likelihood). 
We also showed that a previous estimation approach by Score Matching on Exponential Divergence distribution (ED, i.e., EDA before augmentation) is not accurate, especially for $\beta<0$. In the application to NMF,
we have provided experimental results on various kinds of data including audio and stock prices. In the non-separable cases, we have
demonstrated selecting $\gamma$-divergence for synthetic data,
Projective NMF, and visualization by s-SNE. In those cases where the
correct parameter value is known in advance for the synthetic data, or there is a wide
consensus in the application community on the correct parameter value for
real-world data, the MEDAL method gives expected results.  
These results show that the presented method has not only broad applications
but also accurate selection performance. In the case of new kinds of
data, for which the appropriate information divergence is not known,
the MEDAL method provides a disciplined and rigorous way to compute
the optimal parameter values. 

In this paper we have focused on information divergence for vectorial
data. There exist other divergences for higher-order tensors, for
example, LogDet divergence and von Newmann divergence (see e.g.\@
\cite{kulis2009lowrank}) that are defined over eigenvalues of
matrices.  Selection among these divergences remains an open problem.

Here we mainly consider a positive data matrix and selecting the divergence
parameter in $(\infty,+\infty)$.
Tweedie distribution has no support for zero entries when $\beta<0$
and thus gives zero likelihood of the whole matrix/tensor by
independence.
In future work, extension of EDA to accommodate nonnegative data
matrices could be developed for $\beta\geq0$.

MEDAL is a two-phase method: the $\beta$ selection is based on the
optimization result of $\vecmu$. Ideally, both variables should be
selected by optimizing the same objective. For maximum log-likelihood
estimator, this requires that the negative log-likelihood equals the
$\beta$-divergence, which is however infeasible for all $\beta$ due to
intractability of integrals. Non-ML estimators could be used to attack
this open problem.

The EDA distribution family includes the exact Gaussian, Gamma, and
Inverse Gaussian distributions, and approximated Poisson distribution.
In the approximation we used the first-order Stirling expansion. One
could apply higher-order expansions to improve the approximation
accuracy. This could be implemented by further augmentation with
higher-order terms around $\beta\rightarrow0$.

%%%%%%%%%%%%%%%%%%%%%%%%%%%%%%%%%%%%%%%%%%%%%%%%
\section{Acknowledgment}
\label{sec:acknowledgement}
This work was financially supported by the Academy of Finland (Finnish
Center of Excellence in Computational Inference Research COIN, grant
no 251170; Zhirong Yang additionally by decision number 140398).

%%%%%%%%%%%%%%%%%%%%%%%%%%%%%%%%%%%%%%%%%%%%%%%%
\appendices
\section{Infinite series expansion in Tweedie distribution}
\label{sec:tweedieexpansion}
In the series expansion, an EDM random variable is represented as a
sum of $G$ independent Gamma random variables $x=\sum_g^G y_g$, where
$G$ is Poisson distributed with parameter $\lambda=
\frac{\mu^{2-p}}{\phi (2-p)}$; and the shape and scale parameters of
the Gamma distribution are $-a$ and $b$, with $a=\frac{2-p}{1-p}$
and $b=\phi (p-1)\mu^{p-1}$.

The pdf of the Tweedie distribution is obtained analytically at $x=0$
as $e^{-\frac{\mu^{2-p}}{\phi(2-p)}}$.
For $x>0$ the function $f(x,\phi,p)=\frac{1}{x}\sum_{j=1}^\infty
W_j(x,\phi,p)$, where for $1<p<2$
\begin{align}
W_j = \frac{x^{-ja} (p-1)^{j a}} {\phi^{j(1-a)}(2-p)^j
j!  \Gamma(-j a)}
\end{align}
and for $p>2$
\begin{align}
W_j=\frac{1}{\pi}\frac{\Gamma(1+j
a) \phi^{j(a-1)} (p-1)^{j a}} {\Gamma(1+j) (p-1)^j x^{j a}} (-1)^j
\sin(-\pi j a).
\end{align}

This infinite summation needs approximation in practice. Dunn and
Smyth~\cite{dunn05series} described an approach to select a subset of
these infinite terms to accurately approximate $f(x,\phi,p)$. In their approach, Stirling's approximation of the
Gamma functions are used to find the index $j$ which gives the highest
value of the function.  Then, in order to find the most significant
region, the indices are progressed in both directions until negligible
terms are reached.

\section{Gauss-Laguerre quadratures}
\label{sec:gausslaguerre}
This method (e.g.\@ \cite{gausslaguerre}) can evaluate definite integrals of the form
\begin{align}
\int_0^\infty
e^{-z}f(z)dz \approx \sum_i^n f(z_i) w_i,
\end{align}
where $z_i$ is the $i$th root of the $n$-th order Laguerre polynomial
$L_n(z)$, and the weights are given by
\begin{align}
w_i =
\frac{z_i}{(n+1)^2L_n^2(z_i)}.
\end{align}
The recursive definition of $L_n(z)$ is given by
\begin{align}
L_{n+1}(z) = \frac{1}{n+1}\left[(2n+1-z)L_n(z)-nL_{n-1}(z)\right],
\end{align}
with $L_0(z)=1$ and $L_1(z)=1-z$.
In our experiments, we used the Matlab implementation by
Winckel\footnote{available at \url{http://www.mathworks.se/matlabcentral/fileexchange/}} with $n=5000$.

\section{Proofs of Theorems \ref{theo:connectgamma2beta} and \ref{theo:connectrenyi2alpha}}
\label{sec:connectionproof}
\begin{lemma}
\label{lem:log}
$\arg\min_z af(z)=\arg\min_z a\ln f(z)$ for $a\in\bbR$ and $f(z)>0$.
\end{lemma}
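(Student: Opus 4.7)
The plan is to reduce everything to the single fact that the natural logarithm is strictly increasing on $(0,\infty)$, combined with a sign analysis on $a$. Since $f(z)>0$ throughout the feasible set, $\ln f(z)$ is well-defined everywhere, and for any two feasible points $z_1,z_2$ one has $f(z_1)\le f(z_2)$ if and only if $\ln f(z_1)\le \ln f(z_2)$. This order-preserving property is essentially the whole content of the argument; the rest is bookkeeping.

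First I would dispose of the degenerate case $a=0$: both objectives are identically zero, so both argmin sets equal the whole feasible domain and the claim holds tautologically.

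For $a>0$, multiplication by a positive constant preserves order, so $\arg\min_z af(z)=\arg\min_z f(z)$. Monotonicity of $\ln$ on $(0,\infty)$ then gives $\arg\min_z f(z)=\arg\min_z \ln f(z)$, and multiplying again by $a>0$ does not change the minimizer, yielding $\arg\min_z a\ln f(z)$. Chaining these equalities closes the case.

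For $a<0$, multiplication by a negative constant reverses order, so $\arg\min_z af(z)=\arg\max_z f(z)$. Applying monotonicity of $\ln$ to the maximization gives $\arg\max_z f(z)=\arg\max_z \ln f(z)$, and multiplying by the negative $a$ flips the sense back, yielding $\arg\min_z a\ln f(z)$. There is no genuine obstacle here; the only point worth flagging is that the hypothesis $f(z)>0$ is used in two places (to ensure $\ln f$ is defined, and to apply monotonicity pointwise), and that strictness of monotonicity is needed so that equality of objective values at two points is equivalent before and after applying $\ln$, which is why the argmin sets coincide as sets and not merely up to an optimizer.
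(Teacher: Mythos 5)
Your proof is correct and follows the same route as the paper, which simply invokes the monotonicity of $\ln$; your case split on the sign of $a$ (including the degenerate $a=0$ case) is just a careful spelling-out of that one-line argument.
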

The proof of the lemma is simply by the monotonicity of $\ln$.

Next we prove Theorem \ref{theo:connectgamma2beta}. For
$\beta\in\bbR\backslash\{-1,0\}$, zeroing
$\fracpartial{D_\beta(\vecx||c\vecmu)}{c}$ gives
\begin{align}
c^*=\frac{\sum_ix_i\mu_i^\beta}{\sum_i\mu_i^{1+\beta}}.
\end{align}
Putting it back to $\min_{\vecmu} \min_c D_\beta(\vecx||c\vecmu)$, we
obtain:
\begin{align*}
&\min_{\vecmu} \min_c D_\beta(\vecx||c\vecmu)\\
=&\min_{\vecmu}\frac{1}{\beta(1+\beta)}
\left[\sum_i x_i^{1+\beta}
+\beta\sum_i\left(\frac{\sum_jx_j\mu_j^\beta}{\sum_j\mu_j^{1+\beta}}\mu_i\right)^{1+\beta}\right.\\
&\left.-(1+\beta)\sum_ix_i\left(\frac{\sum_jx_j\mu_j^\beta}{\sum_j\mu_j^{1+\beta}}\mu_i\right)^\beta\right]\\
=&\min_{\vecmu}\frac{1}{\beta(1+\beta)}
\left[\sum_i x_i^{1+\beta}
-\frac{\left(\sum_ix_i\mu_i^\beta\right)^{1+\beta}}{\left(\sum_j\mu_j^{1+\beta}\right)^\beta}
\right]
\end{align*}

Dropping the constant, and by Lemma \ref{lem:log}, the above is
equivalent to minimizing
\begin{align*}
\frac{1}{\beta(1+\beta)}\left[
\beta\ln\left(\sum_j\mu_j^{1+\beta}\right)
-(1+\beta)\ln\left(\sum_ix_i\mu_i^\beta\right) \right]
\end{align*}
Adding a constant $\frac{1}{\beta(1+\beta)}\ln\left(\sum_i
x_i^{1+\beta}\right)$, the objective becomes minimizing
$\gamma$-divergence (replacing $\beta$ with $\gamma$; see Eq.\@
(\ref{eq:divgamma})).

We can apply the similar technique to prove Theorem
\ref{theo:connectrenyi2alpha}. For $\alpha\in\bbR\backslash\{0,1\}$,
zeroing $\fracpartial{D_\alpha(\vecx||c\vecmu)}{c}$ gives
\begin{align}
c^*=\left(\frac{\sum_ix_i^\alpha
\mu_i^{1-\alpha}}{\sum_i\mu_i}\right)^{1/\alpha}
\end{align}
Putting it back, we obtain
\begin{align}
&D_\alpha(\vecx||c^*\vecmu)\\
=&\frac{1}{\alpha(1-\alpha)}\sum_i\left\{\alpha x_i
+(1-\alpha)\left(\frac{\sum_jx_j^\alpha
\mu_j^{1-\alpha}}{\sum_j\mu_j}\right)^{1/\alpha}\mu_i\right.\\
&\left.-x_i^\alpha \left[\left(\frac{\sum_jx_j^\alpha \mu_j^{1-\alpha}}{\sum_j\mu_j}\right)^{1/\alpha}\mu_i\right]^{1-\alpha}\right\}\\
=&\frac{1}{\alpha-1}\left[\sum_ix_i^\alpha
\left(\frac{\mu_i}{\sum_j\mu_j}\right)^{1-\alpha}\right]^{1/\alpha}+\frac{\sum_ix_i}{1-\alpha}.
\end{align}
Dropping the constant $\frac{\sum_ix_i}{1-\alpha}$, and by Lemma
\ref{lem:log}, minimizing the above is equivalent to minimization of
(for $\alpha>0$)
\begin{align}
\frac{1}{\alpha-1}\ln\left[\sum_ix_i^\alpha \left(\frac{\mu_i}{\sum_j\mu_j}\right)^{1-\alpha}\right]
\end{align}
Adding a constant $\frac{\alpha}{1-\alpha}\ln\sum_ix_i$ to the above, the
objective becomes minimizing R\'enyi-divergence (replacing $\alpha$
with $\rho$; see Eq.\@ (\ref{eq:divrenyi})).

The proofs for the special cases are similar, where the main steps are given below
\begin{itemize}
\item $\beta=\gamma\rightarrow0$ (or $\alpha=\rho\rightarrow1$): zeroing
$\fracpartial{D_{\beta\rightarrow0}(\vecx||c\vecmu)}{c}$ gives $c^*
= \frac{\sum_ix_i}{\sum_i\mu_i}$. Putting it back, we obtain
$D_{\beta\rightarrow0}(\vecx||c^*\vecmu)
=\left(\sum_ix_i\right)D_{\gamma\rightarrow0}(\vecx||\vecmu)$.

\item $\beta=\gamma\rightarrow-1$: zeroing
$\fracpartial{D_{\beta\rightarrow-1}(\vecx||c\vecmu)}{c}$ gives $c^* =
\frac{1}{M}\sum_i\frac{x_i}{\mu_i}$, where $M$ is the length of
$\vecx$. Putting it back, we obtain
$D_{\beta\rightarrow-1}(\vecx||c^*\vecmu)=MD_{\gamma\rightarrow-1}(\vecx||\vecmu)$.

\item $\alpha=\rho\rightarrow0$: zeroing
$\fracpartial{D_{\alpha\rightarrow0}(\vecx||c\vecmu)}{c}$ gives
\begin{align*}
c^*=\exp\left(-\frac{\sum_i\mu_i\ln\frac{\mu_i}{x_i}}{\sum_i\mu_i}\right).
\end{align*}
Putting it back, we obtain
\begin{align*}
D_{\alpha\rightarrow0}(\vecx||c^*\vecmu)=
-\exp\left(-\sum_i\mut_i\ln\frac{\mut_i}{x_i}\right) +\sum_ix_i,
\end{align*}
where $\mut_i=\mu_i /\sum_j\mu_j$. Dropping the constant $\sum_ix_i$,
minimizing $D_{\alpha\rightarrow0}(\vecx||c^*\vecmu)$ is equivalent to
minimization of $\sum_i\mut_i\ln\frac{\mut_i}{x_i}$.  Adding the
constant $\ln\sum_jx_j$ to the latter, the objective becomes identical
to $D_{\rho\rightarrow0}(\vecx||\vecmu)$, i.e.\@
$D_\text{KL}(\vecmu||\vecx)$.

\end{itemize}
%%%%%%%%%%%%%%%%%%%%%%%%%%%%%%%%%%%%%%%%%%%%%%%%
\bibliographystyle{IEEEtran}
\bibliography{learning_divergence}

% Generated by IEEEtran.bst, version: 1.13 (2008/09/30)
\begin{thebibliography}{10}
\providecommand{\url}[1]{#1}
\csname url@samestyle\endcsname
\providecommand{\newblock}{\relax}
\providecommand{\bibinfo}[2]{#2}
\providecommand{\BIBentrySTDinterwordspacing}{\spaceskip=0pt\relax}
\providecommand{\BIBentryALTinterwordstretchfactor}{4}
\providecommand{\BIBentryALTinterwordspacing}{\spaceskip=\fontdimen2\font plus
\BIBentryALTinterwordstretchfactor\fontdimen3\font minus
  \fontdimen4\font\relax}
\providecommand{\BIBforeignlanguage}[2]{{%
\expandafter\ifx\csname l@#1\endcsname\relax
\typeout{** WARNING: IEEEtran.bst: No hyphenation pattern has been}%
\typeout{** loaded for the language `#1'. Using the pattern for}%
\typeout{** the default language instead.}%
\else
\language=\csname l@#1\endcsname
\fi
#2}}
\providecommand{\BIBdecl}{\relax}
\BIBdecl

\bibitem{kompass2006divergence}
R.~Kompass, ``A generalized divergence measure for nonnegative matrix
  factorization,'' \emph{Neural Computation}, vol.~19, no.~3, pp. 780--791,
  2006.

\bibitem{dhillo2006nips}
I.~S. Dhillon and S.~Sra, ``Generalized nonnegative matrix approximations with
  {B}regman divergences,'' in \emph{Advances in Neural Information Processing
  Systems}, vol.~18, 2006, pp. 283--290.

\bibitem{cichocki2008alphanmf}
A.~Cichocki, H.~Lee, Y.-D. Kim, and S.~Choi, ``Non-negative matrix
  factorization with $\alpha$-divergence,'' \emph{Pattern Recognition Letters},
  vol.~29, pp. 1433--1440, 2008.

\bibitem{TNN2011}
Z.~Yang and E.~Oja, ``Unified development of multiplicative algorithms for
  linear and quadratic nonnegative matrix factorization,'' \emph{IEEE
  Transactions on Neural Networks}, vol.~22, no.~12, pp. 1878--1891, 2011.

\bibitem{hinton2002sne}
G.~Hinton and S.~Roweis, ``Stochastic neighbor embedding,'' in \emph{Advances
  in Neural Information Processing Systems}, 2002, pp. 833--840.

\bibitem{maaten2008tsne}
L.~van~der Maaten and G.~Hinton, ``Visualizing data using {t-SNE},''
  \emph{Journal of Machine Learning Research}, vol.~9, pp. 2579--2605, 2008.

\bibitem{blei2001lda}
D.~Blei, A.~Y. Ng, and M.~I. Jordan, ``Latent dirichlet allocation,''
  \emph{Journal of Machine Learning Research}, vol.~3, pp. 993--1022, 2001.

\bibitem{sato2012rethinking}
I.~Sato and H.~Nakagawa, ``Rethinking collapsed variational bayes inference for
  lda,'' in \emph{International Conference on Machine Learning (ICML)}, 2012.

\bibitem{minka2005divergence}
T.~Minka, ``Divergence measures and message passing,'' Microsoft Research,
  Tech. Rep., 2005.

\bibitem{alphadiv}
H.~Chernoff, ``A measure of asymptotic efficiency for tests of a hypothesis
  based on a sum of observations,'' \emph{The Annals of Mathematical
  Statistics}, vol.~23, pp. 493--507, 1952.

\bibitem{amari1985diff}
S.~Amari, \emph{Differential-Geometrical Methods in Statistics}.\hskip 1em plus
  0.5em minus 0.4em\relax Springer Verlag, 1985.

\bibitem{betadiv}
A.~Basu, I.~R. Harris, N.~Hjort, and M.~Jones, ``Robust and efficient
  estimation by minimising a density power divergence,'' \emph{Biometrika},
  vol.~85, pp. 549--559, 1998.

\bibitem{gammadiv}
H.~Fujisawa and S.~Eguchi, ``Robust paramater estimation with a small bias
  against heavy contamination,'' \emph{Journal of Multivariate Analysis},
  vol.~99, pp. 2053--2081, 2008.

\bibitem{cichocki2010abgdiv}
A.~Cichocki and S.-i. Amari, ``Families of alpha- beta- and gamma- divergences:
  Flexible and robust measures of similarities,'' \emph{Entropy}, vol.~12,
  no.~6, pp. 1532--1568, 2010.

\bibitem{cichocki2011abnmf}
A.~Cichocki, S.~Cruces, and S.-I. Amari, ``Generalized alpha-beta divergences
  and their application to robust nonnegative matrix factorization,''
  \emph{Entropy}, vol.~13, pp. 134--170, 2011.

\bibitem{csiszarfdiv}
I.~Csisz\'ar, ``Eine informationstheoretische ungleichung und ihre anwendung
  auf den beweis der ergodizitat von markoffschen ketten,'' \emph{Publications
  of the Mathematical Institute of Hungarian Academy of Sciences Series A},
  vol.~8, pp. 85--108, 1963.

\bibitem{morimotofdiv}
T.~Morimoto, ``Markov processes and the h-theorem,'' \emph{Journal of the
  Physical Society of Japan}, vol.~18, no.~3, pp. 328--331, 1963.

\bibitem{bregmandiv}
L.~M. Bregman, ``The relaxation method of finding the common points of convex
  sets and its application to the solution of problems in convex programming,''
  \emph{USSR Computational Mathematics and Mathematical Physics}, vol.~7,
  no.~3, pp. 200--217, 1967.

\bibitem{ICANN2011ROZ}
Z.~Yang, H.~Zhang, Z.~Yuan, and E.~Oja, ``Kullback-leibler divergence for
  nonnegative for nonnegative matrix factorization,'' in \emph{Proceedings of
  21st International Conference on Artificial Neural Networks}, 2011, pp.
  14--17.

\bibitem{ICANN2009ROZ}
Z.~Yang and E.~Oja, ``Projective nonnegative matrix factorization with
  $\alpha$-divergence,'' in \emph{Proceedings of 19th International Conference
  on Artificial Neural Networks}, 2009, pp. 20--29.

\bibitem{fevotte09nonnegative}
C.~F\'evotte, N.~Bertin, and J.-L. Durrieu, ``Nonnegative matrix factorization
  with the {I}takura-{S}aito divergence. {W}ith application to music
  analysis,'' \emph{Neural Computation}, vol.~21, no.~3, pp. 793--830, 2009.

\bibitem{jorgensen87exponential}
B.~J{\o}rgensen, ``Exponential dispersion models,'' \emph{Journal of the Royal
  Statistical Society. Series B (Methodological)}, vol.~49, no.~2, pp.
  127--162, 1987.

\bibitem{cichocki09nonnegative}
A.~Cichocki, R.~Zdunek, A.~H. Phan, and S.~Amari, \emph{Nonnegative Matrix and
  Tensor Factorization}.\hskip 1em plus 0.5em minus 0.4em\relax John Wiley and
  Sons, 2009.

\bibitem{yilmaz12alpha_beta}
Y.~K. Yilmaz and A.~T. Cemgil, ``Alpha/beta divergences and {T}weedie models,''
  \emph{CoRR}, vol. abs/1209.4280, 2012.

\bibitem{hyvaerinen05estimation}
A.~Hyv{\"{a}}rinen, ``Estimation of non-normalized statistical models using
  score matching,'' \emph{Journal of Machine Learning Research}, vol.~6, pp.
  695--709, 2005.

\bibitem{lee99learning}
D.~D. Lee and H.~S. Seung, ``Learning the parts of objects by non-negative
  matrix factorization,'' \emph{Nature}, vol. 401, pp. 788--791, 1999.

\bibitem{yuan05pnmf}
Z.~Yuan and E.~Oja, ``Projective nonnegative matrix factorization for image
  compression and feature extraction,'' in \emph{Proceedings of 14th
  Scandinavian Conference on Image Analysis}, 2005, pp. 333--342.

\bibitem{TNN2010}
Z.~Yang and E.~Oja, ``Linear and nonlinear projective nonnegative matrix
  factorization,'' \emph{IEEE Transactions on Neural Networks}, vol.~21, no.~5,
  pp. 734--749, 2010.

\bibitem{lu12selecting}
Z.~Lu, Z.~Yang, and E.~Oja, ``Selecting $\beta$-divergence for nonnegative
  matrix factorization by score matching,'' in \emph{Proceedings of the 22nd
  International Conference on Artificial Neural Networks (ICANN 2012)}, 2012,
  pp. 419--426.

\bibitem{eguchi01robustifing}
S.~Eguchi and Y.~Kano, ``Robustifing maximum likelihood estimation,'' Institute
  of Statistical Mathematics, Tokyo, Tech. Rep., 2001.

\bibitem{minami02robust}
M.~Minami and S.~Eguchi, ``Robust blind source separation by beta divergence,''
  \emph{Neural Computation}, vol.~14, pp. 1859--1886, 2002.

\bibitem{renyidivergence}
A.~R\'enyi, ``On measures of information and entropy,'' in \emph{Procedings of
  4th Berkeley Symposium on Mathematics, Statistics and Probability}, 1960, pp.
  547--561.

\bibitem{mollah2007beta}
M.~Mollah, S.~Eguchi, and M.~Minami, ``Robust prewhitening for ica by
  minimizing beta-divergence and its application to fastica,'' \emph{Neural
  Processing Letters}, vol.~25, pp. 91--110, 2007.

\bibitem{choi2010alphaintegration}
H.~Choi, S.~Choi, A.~Katake, and Y.~Choe, ``Learning alpha-integration with
  partially-labeled data,'' in \emph{Proc. of the IEEE International Conference
  on Acoustics, Speech, and Signal Processing}, 2010, pp. 14--19.

\bibitem{dunn05series}
P.~K. Dunn and G.~K. Smyth, ``Series evaluation of {T}weedie exponential
  dispersion model densities,'' \emph{Statistics and Computing}, vol.~15,
  no.~4, pp. 267--280, 2005.

\bibitem{dunn01tweedie}
------, ``Tweedie family densities: methods of evaluation,'' in
  \emph{Proceedings of the 16th International Workshop on Statistical
  Modelling}, 2001.

\bibitem{hyvaerinen07extensions}
A.~Hyv\"{a}rinen, ``Some extensions of score matching,'' \emph{Comput. Stat.
  Data Anal.}, vol.~51, no.~5, pp. 2499--2512, 2007.

\bibitem{fevotte11algorithms}
C.~F\'{e}votte and J.~Idier, ``Algorithms for nonnegative matrix factorization
  with the beta-divergence,'' \emph{Neural Computation}, vol.~23, no.~9, 2011.

\bibitem{tan2013pami}
C.~F. V.~Tan, ``Automatic relevance determination in nonnegative matrix
  factorization with the $\beta$-divergence,'' \emph{IEEE Transactions on
  Pattern Analysis and Machine Intelligence}, 2013, accepted, to appear.

\bibitem{plsi}
T.~Hofmann, ``Probabilistic latent semantic indexing,'' in \emph{International
  Conference on Research and Development in Information Retrieval (SIGIR)},
  1999, pp. 50--57.

\bibitem{QNMF}
Z.~Yang and E.~Oja, ``Quadratic nonnegative matrix factorization,''
  \emph{Pattern Recognition}, vol.~45, no.~4, pp. 1500--1510, 2012.

\bibitem{TNN2011ROZ}
------, ``Unified development of multiplicative algorithms for linear and
  quadratic nonnegative matrix factorization,'' \emph{IEEE Transactions on
  Neural Networks}, vol.~22, no.~12, pp. 1878--1891, 2011.

\bibitem{swimmerdata}
D.~Donoho and V.~Stodden, ``When does non-negative matrix factorization give a
  correct decomposition into parts?'' in \emph{Advances in Neural Information
  Processing Systems 16}, 2003, pp. 1141--1148.

\bibitem{vincent2009ardnmf}
V.~Y.~F. Tan and C.~F\'{e}votte, ``Automatic relevance determination in
  nonnegative matrix factorization,'' in \emph{Proceedings of 2009 Workshop on
  Signal Processing with Adaptive Sparse Structured Representations
  (SPARS'09)}, 2009.

\bibitem{LVA2010}
Z.~Yang, Z.~Zhu, and E.~Oja, ``Automatic rank determination in projective
  nonnegative matrix factorization,'' in \emph{Proceedings of the 9th
  International Conference on Latent Variable Analysis and Signal Separation
  (LVA2010)}, 2010, pp. 514--521.

\bibitem{parzen}
E.~Parzen, ``{On Estimation of a Probability Density Function and Mode},''
  \emph{The Annals of Mathematical Statistics}, vol.~33, no.~3, pp. 1065--1076,
  1962.

\bibitem{kulis2009lowrank}
B.~Kulis, M.~A. Sustik, and I.~S. Dhillon, ``Low-rank kernel learning with
  bregman matrix divergences,'' \emph{Journal of Machine Learning Research},
  vol.~10, pp. 341--376, 2009.

\bibitem{gausslaguerre}
M.~Abramowitz and I.~A. Stegun, Eds., \emph{Handbook of Mathematical Functions
  with Formulas, Graphs, and Mathematical Tables}, 9th~ed.\hskip 1em plus 0.5em
  minus 0.4em\relax New York: Dover, 1972, page 890.

\end{thebibliography}
%%%%%%%%%%%%%%%%%%%%%%%%%%%%%%%%%%%%%%%%%%%%%%%%
\end{document}